\documentclass{article}


\PassOptionsToPackage{numbers}{natbib}
\usepackage[final]{neurips_2021}




\usepackage[utf8]{inputenc} 
\usepackage[T1]{fontenc}    
\usepackage{hyperref}       
\usepackage{url}            
\usepackage{booktabs}       
\usepackage{amsfonts}       
\usepackage{nicefrac}       
\usepackage{microtype}      
\usepackage{xcolor}         

\usepackage{color}
\usepackage{amsmath}
\usepackage{amsthm}
\usepackage{amsfonts}
\usepackage{amssymb}
\usepackage{algorithm}
\usepackage{algorithmic}
\usepackage{graphicx}
\usepackage{multirow}
\usepackage{subcaption}

\newcommand{\exprv}{E}
\newcommand{\expreal}{e}
\newcommand{\gumbrv}{G}
\newcommand{\gumbreal}{g}
\newcommand{\structrv}{X}
\newcommand{\structreal}{x}
\newcommand{\catrv}{X}
\newcommand{\catreal}{x}
\newcommand{\tracerv}{T}
\newcommand{\tracereal}{t}
\newcommand{\thetadim}{d}
\newcommand{\gumbparam}{\theta}
\newcommand{\expparam}{\lambda}

\newcommand{\logprob}[1]{\log \mathbb{P}{(#1)}}
\newcommand{\logprobnew}[2]{\log \mathbb{P}_{#1}{(#2)}}
\newcommand{\prob}[1]{\mathbb{P}{(#1)}}
\newcommand{\probnew}[2]{\mathbb{P}_{#1}{(#2)}}
\newcommand{\loss}[1]{\mathcal{L}(#1)}
\newcommand{\temp}{\tau}

\newcommand{\argtopk}{\arg \operatorname{top} k}

\newcommand{\var}[2]{\operatorname{var}_{#1}{[#2]}}

\newtheorem{lemma}{Lemma}
\newtheorem{theorem}{Theorem}
\newtheorem{corollary}{Corollary}

\floatname{algorithm}{Algorithm}

\title{Leveraging Recursive Gumbel-Max Trick for Approximate Inference in Combinatorial Spaces}
%

\author{%
  Kirill Struminsky\thanks{Equal contribution} \\
  HSE University\\
  Moscow, Russia\\
  \texttt{k.struminsky@gmail.com}
  \And
  Artyom Gadetsky\footnotemark[1]\ \ \thanks{Corresponding author} \\
  HSE University\\
  Moscow, Russia \\
  \texttt{artygadetsky@yandex.ru}
  \And
  Denis Rakitin\footnotemark[1] \\
  HSE University, Skoltech\thanks{Skolkovo Institute of Science and Technology} \\
  Moscow, Russia \\
  \texttt{rakitindenis32@gmail.com}
  \And
  Danil Karpushkin \\
  AIRI,\thanks{Airtificial Intelligence Research Institute}\; Sber AI Lab, MIPT\thanks{Moscow Institute of Physics and Technology}\\
  Moscow, Russia\\
  \texttt{kardanil@mail.ru}
  \And
  Dmitry Vetrov \\
  HSE University, AIRI\footnotemark[4]\\
  Moscow, Russia \\
  \texttt{vetrovd@yandex.ru}
}

\begin{document}

\maketitle

\begin{abstract}
Structured latent variables allow incorporating meaningful prior knowledge into deep learning models.
However, learning with such variables remains challenging because of their discrete nature.
Nowadays, the standard learning approach is to define a latent variable as a perturbed algorithm output and to use a differentiable surrogate for training.
In general, the surrogate puts additional constraints on the model and inevitably leads to biased gradients.
To alleviate these shortcomings, we extend the Gumbel-Max trick to define distributions over structured domains.
We avoid the differentiable surrogates by leveraging the score function estimators for optimization.
In particular, we highlight a family of recursive algorithms with a common feature we call stochastic invariant.
The feature allows us to construct reliable gradient estimates and control variates without additional constraints on the model.
In our experiments, we consider various structured latent variable models and achieve results competitive with relaxation-based counterparts.
\end{abstract}

\section{Introduction}
To this day, the majority of deep learning architectures consists of differentiable computation blocks and relies on gradient estimates for learning. At the same time, architectures with discrete intermediate components are a good fit for incorporating inductive biases \cite{cordonnier2021differentiable, xu2015show} or dynamic control flow \cite{le2020revisiting, graves2016adaptive}. One of the approaches to train such architectures is to replace the discrete component with a stochastic latent variable and optimize the expected objective.

In practice, the expectation has high computational cost, thus one typically resorts to stochastic estimates for the expectation and its gradient. Particularly, the two prevalent approaches to estimate the gradient of the objective are the score function estimator \cite{williams1992simple} and the reparameterization trick \cite{kingma2013auto, rezende2014stochastic} for relaxed discrete variables \cite{maddison2016concrete, jang2016categorical}. The former puts mild assumptions on the distribution and the objective, requiring the gradient of log-probability with respect to the distribution parameters to be differentiable, and provides unbiased estimates for the objective gradient. However, the naive estimate suffers from high variance and is less intuitive in implementation. In comparison, the reparameterized gradient estimates seamlessly integrate within the backpropagation algorithm and exhibit low variance out of the box. At the same time, the relaxation requires an architecture to be defined on the extended domain of the relaxed variable and introduces bias to the gradient estimate.

In the recent years, the attention of the community shifted towards models with structured latent variables. Informally, a structured variable models a distribution over structured objects such as graphs \cite{corro2019differentiable, paulus2020gradient}, sequences \cite{fu2020latent} or matchings \cite{mena2018learning}. Such latent variable may alter the computation graph or represent a generative process of data. Often, a structured variable is represented as a sequence of categorical random variables with a joint distribution incorporating the structure constraints (e.g., the fixed number of edges in an adjacency matrix of a tree). Recent works on structured latent variables address model training largely through the reparameterization trick using relaxed variables. In fact, the Gumbel-Softmax trick naturally translates to structured variables when $\operatorname{arg}\max$ operator is applied over a structured domain rather than component-wise \cite{paulus2020gradient}. In contrast, score function estimators are now less common in structured domain, with a few exceptions such as \cite{yogatama2016learning, havrylov2019cooperative}. The primary difficulty is the sample score function: neither Gibbs distributions, nor distribution defined through a generative process have a general shortcut to compute it.

In our work, we develop a framework to define structured variables along with a low-variance score function estimator. Our goal is to allow training models that do not admit relaxed variables and to improve optimization by alleviating the bias of the relaxed estimators. To achieve the goal we define the structured variable as an output of an algorithm with a perturbed input. Then, we outline a family of algorithms with a common property we call \emph{stochastic invariant}. The property was inspired by the observation in \cite[Appendix, Sec. B]{paulus2020gradient}, where the authors showed that the Kruskal's algorithm \cite{kruskal1956shortest} and the CLE algorithm \cite{edmonds1967optimum} are recursively applying the Gumbel-Max trick. We construct new algorithms with the same property and show how to use the property to learn structured latent variables. In the experimental section, we report performance on par with relaxation-based methods and apply the framework in a setup that does not allow relaxations.


\section{The Recursive Gumbel-Max Trick in Algorithms With Stochastic Invariants}
\label{sec:main}

The section below shows how to define a distribution over structured domain. Conceptually, we define a structured random variable as an output of an algorithm with a random input (e.g., to generate a random tree we return the minimum spanning tree of a graph with random weights). A common solution to incorporate such variable in a latent variable model is to replace the original algorithm with a differentiable approximation to allow gradient-based learning\cite{corro2019differentiable,mena2018learning}. Such solution bypasses the difficutly of computing the sample probability. In contrast, we outline a family of algorithms for which we can get the probability of each intermediate computation step. To get the probabilities we restrict our attention to algorithms with specific recursive structure and random inputs with exponential distribution. In the next section, we leverage the probabilities for approximate inference in latent variable models without the differentiable approximations of the algorithm.

\subsection{The Gumbel-Max Trick in \texorpdfstring{$\argtopk$}{Lg}}

We illustrate our framework with a recursive algorithm generating a subset of a fixed size. The lemma below is a well-known result used to generate categorical random variables using a sequence of exponential random variables.
\begin{lemma}
\label{lemma:exp-min}
\textit{(the Exponential-Min trick)} If $\exprv_i \sim \operatorname{Exp}{(\expparam_i)}, i \in \{1, \dots, \thetadim\}$ are independent, then for $\catrv := \operatorname{argmin}\limits_{i} \exprv_i$
\begin{enumerate}
\item the outcome probability is $\probnew{\catrv}{\catrv = \catreal; \expparam} \propto \expparam_\catreal$;
\item random variables $\exprv_i' := \exprv_i - \exprv_\catrv, i \in \{1, \dots, \thetadim\}$ are mutually independent given $\catrv$ with
$\exprv_i' \mid \catrv \sim \operatorname{Exp}(\expparam_i)$ when $i \neq \catrv$ and $\exprv_i' = 0$ otherwise.\footnote{As a convention, we assume that $0 \overset{d}{=} \operatorname{Exp}{(\infty)}$.}
\end{enumerate}
\end{lemma}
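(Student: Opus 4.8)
The plan is to prove both claims directly from the density of the exponential distribution, working with the joint density of $(\exprv_1, \dots, \exprv_d)$ and changing variables to $(\catrv, \exprv_1', \dots, \exprv_d')$. Since the $\exprv_i$ are independent with $\exprv_i \sim \operatorname{Exp}(\expparam_i)$, their joint density is $p(\expreal_1, \dots, \expreal_d) = \prod_i \expparam_i e^{-\expparam_i \expreal_i}$ on the positive orthant. I will condition on the event $\{\catrv = \catreal\}$, i.e. $\{\expreal_\catreal < \expreal_i \text{ for all } i \neq \catreal\}$ (ties have probability zero), and on this event set $m = \expreal_\catreal$ and $\expreal_i' = \expreal_i - m \geq 0$ for $i \neq \catreal$.

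First I would compute the outcome probability. Integrating the joint density over the region where $\expreal_\catreal$ is the minimum, I can first integrate out $m = \expreal_\catreal$ from $0$ to $\min_{i\neq\catreal}\expreal_i$ — or, more cleanly, substitute $\expreal_i = m + \expreal_i'$ and integrate. The exponent becomes $-\sum_i \expparam_i \expreal_i = -(\sum_i \expparam_i) m - \sum_{i \neq \catreal} \expparam_i \expreal_i'$, and the Jacobian of the map $(m, (\expreal_i')_{i\neq\catreal}) \mapsto (\expreal_i)_i$ is $1$. Integrating $m$ over $[0,\infty)$ gives a factor $1/\sum_i \expparam_i$, and integrating each $\expreal_i'$ over $[0,\infty)$ gives $1/\expparam_i$, so $\probnew{\catrv}{\catrv = \catreal} = \big(\prod_i \expparam_i\big) \cdot \frac{1}{\sum_i \expparam_i} \cdot \prod_{i\neq\catreal}\frac{1}{\expparam_i} = \frac{\expparam_\catreal}{\sum_i \expparam_i}$, which is claim 1.

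For claim 2, I would read off the conditional density of $(\exprv_i')_{i \neq \catreal}$ given $\catrv = \catreal$ from the same computation: before integrating out the $\expreal_i'$, the integrand (after the $m$-integration) is proportional to $\prod_{i \neq \catreal} \expparam_i e^{-\expparam_i \expreal_i'}$ on the positive orthant, which is exactly the product of $\operatorname{Exp}(\expparam_i)$ densities. Dividing by the normalizing constant $\probnew{\catrv}{\catrv=\catreal}$ confirms that, conditionally on $\catrv = \catreal$, the variables $\{\exprv_i' : i \neq \catreal\}$ are mutually independent with $\exprv_i' \mid \catrv \sim \operatorname{Exp}(\expparam_i)$, while $\exprv_\catrv' = 0$ deterministically (consistent with the $\operatorname{Exp}(\infty)$ convention). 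The key technical point — and the only real subtlety — is justifying the change of variables and the factorization carefully: the map is a bijection on each slice $\{\catrv = \catreal\}$ with unit Jacobian, and because the exponent splits additively across $m$ and the $\expreal_i'$, Fubini lets the integral factor so that the $m$-integration is the memorylessness that decouples the minimum from the residuals. An alternative, essentially equivalent route is to invoke the memorylessness of the exponential distribution directly: $\min_i \exprv_i \sim \operatorname{Exp}(\sum_i \expparam_i)$ is independent of the index of the minimizer, and given the minimizer the excesses $\exprv_i - \min_j \exprv_j$ are independent $\operatorname{Exp}(\expparam_i)$ for $i \neq \catreal$; I would likely present the density computation as the primary argument since it makes both claims fall out simultaneously.
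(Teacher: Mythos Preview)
Your proposal is correct and follows essentially the same approach as the paper: both start from the joint density $\prod_j \expparam_j e^{-\expparam_j \expreal_j}$ restricted to the event $\{\catrv=\catreal\}$, apply the substitution $\expreal_j' = \expreal_j - \expreal_\catreal$ with unit Jacobian, and read off the claimed factorization from the resulting product form. The only cosmetic difference is that the paper keeps the minimum $\exprv_\catrv$ as a separate factor (showing it is $\operatorname{Exp}(\sum_j \expparam_j)$ and independent of $\catrv$ and the residuals), whereas you integrate it out to obtain $\probnew{\catrv}{\catrv=\catreal}$ directly.
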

The lemma is equivalent to \emph{The Gumbel-Max trick}, defined for the variables $\gumbrv_i := - \log \exprv_i, i \in \{1,\dots,\thetadim\}$ and the maximum position $\operatorname{argmax}\limits_{i} \gumbrv_i$. In the above case, a variable $\gumbrv_i$ has a Gumbel distribution with the location parameter $\gumbparam_i = \log \expparam_i$, hence the name. Though the Gumbel-Max trick formulation is more common in the literature, we formulate the framework in terms of the exponential distribution and the equivalent Exponential-Min trick. Although the two statements are equivalent and we use their names interchangeably, some of the examples have a natural formulation in terms of the exponential distribution.

Importantly, the second claim in Lemma~\ref{lemma:exp-min} allows applying the Exponential-Min trick succesively. We illustrate this idea with an algorithm for finding top-k elements. We present the recursive form of $\argtopk$ in Algorithm~\ref{alg:top-k}. For each recursion level, the algorithm finds the minimum element, decrements $k$ and calls itself to find the subset excluding the minimum variable. For reasons explained below, the algorithm subtracts the minimum from the sequence $\exprv_{j}' = \exprv_j - \exprv_\tracerv$ before the recursion. This step does not change the output and may seem redundant.

Assuming the input of the algorithm is a vector~$\exprv$ of independent exponential variables with rate parameters $\expparam$, the first argument in the recursive call~$\exprv'$ is again a vector of independent exponential variables (given $\tracerv$) due to Lemma~\ref{lemma:exp-min}. In other words, the input distribution class is \emph{invariant} throughout the recursion. Subtraction of the minimum is not necessary, but it allows to apply Lemma~\ref{lemma:exp-min} directly and simplifies the analysis of the algorithms. Besides that, for each recursion level variable $\tracerv$ has categorical distribution (conditioned on $\tracerv$ found in the above calls) with output probabilities proportional to $\expparam_k, k \in K$.

We use upper indices to denote the recursion depth and, with a slight abuse of notation, denote the concatenation of variables $\tracerv$ for each recursion depth as $\tracerv := (\tracerv^1, \dots, \tracerv^k)$. The output $\structrv$ is a set and does not take into account the order in $\tracerv$. Intuitively, $\tracerv$ acts as \emph{the execution trace} of the algorithm, whereas $\structrv$ contains only partial information about $\tracerv$. The marginal probability of $\structreal$ is the sum $\probnew{\structrv}{\structrv = \structreal; \expparam} = \sum_{\tracereal \in \structrv^{-1}(\structreal)} \probnew{\tracerv}{\tracerv = \tracereal; \expparam}$ over all possible orderings of $\structreal = \{\structreal_1, \dots, \structreal_k\}$ denoted as $\structrv^{-1}(\structreal)$. The direct computation of such sum is prohibitive even for moderate $k$.

The $\argtopk$ illustration is a well-known extension of the Exponential-Min trick. In particular, the distribution of $\tracerv$ is known as the Plackett-Luce distribution \citep{plackett1975analysis} and coincides with $k$ categorical samples without replacement. Following the recursion, the observation probability factorizes according to the chain rule with $i$-th factor governed by equation
$
\probnew{\tracerv_i}{\tracerv_i = \tracereal_i \mid \tracereal_1, \dots, \tracereal_{i-1}; \expparam} = \tfrac{\expparam_{\tracereal_i}}{\sum_{j=1}^{\thetadim} \expparam_j - \sum_{j=1}^{i-1} \expparam_{\tracereal_j}}
$.
We discuss the multiple applications of the trick in Section~\ref{sec:related-work}. Next, we extend Algorithm~\ref{alg:top-k} beyond subset selection.

\begin{figure}
\centering
\begin{minipage}{0.48\textwidth}
\begin{algorithm}[H]
\caption{$F_{\text{top-k}}(E, K, k)$ - finds $k$ smallest elements in a sequence $\exprv$, where $K$ is the set of indices \emph{(keys)} of $\exprv$}
\label{alg:top-k}
\begin{algorithmic}
    \REQUIRE $\exprv, K, k$
    \ENSURE $\structrv$
    \IF{$k = 0$}
    \STATE {\bf return}
    \ENDIF
    \STATE \COMMENT{Find the smallest element}
    \STATE $\tracerv \Leftarrow \arg\min_{j \in K} \exprv_j$ 
    \FOR{$j \in K$}
    \STATE $\exprv_j' \Leftarrow \exprv_j - \exprv_\tracerv$
    \ENDFOR
    \STATE \COMMENT{Exclude $\operatorname{arg}\min$ index $\tracerv$, decrement $k$}
    \STATE $K', k' \Leftarrow K \setminus \{T\}, k - 1$
    \STATE $\exprv' \Leftarrow  \{ \exprv_k' \mid k \in K'\}$
    \STATE \COMMENT{Solve for $k' = k - 1$}
    \STATE $\structrv' \Leftarrow F_{\text{top-k}}(E', K', k')$
    \STATE \textbf{return} $\{\tracerv\} \cup \structrv'$
\end{algorithmic}
\end{algorithm}
\end{minipage}
\enspace
\begin{minipage}{0.48\textwidth}
\begin{algorithm}[H]
\caption{$F_{\text{struct}}(\exprv, K, R)$ - returns structured variable $\structrv$ based on utilities $\exprv$ indexed by $K$ and an auxiliary variable $R$}
\label{alg:general}
\begin{algorithmic}
    \REQUIRE $\exprv, K, R$
    \ENSURE $\structrv$
    \IF{$f_{\text{stop}}(K, R)$}
    \STATE {\bf return}
    \ENDIF
    \STATE $P_1, \dots, P_m \Leftarrow f_{\text{split}}(K, R)$ \hfill \COMMENT{$\sqcup_{i=1}^m P_i = K$}
    \FOR{$i=1$ to $m$}
    \STATE $\tracerv_i \Leftarrow \arg\min_{j \in P_i} \exprv_j$
    \FOR{$j \in P_i$}
    \STATE $\exprv_j' \Leftarrow \exprv_j - \exprv_{\tracerv_i}$
    \ENDFOR
    \ENDFOR
    \STATE $K', R' \Leftarrow f_{\text{map}}(K, R, \{\tracerv_i\}_{i=1}^m)$ \hfill \COMMENT{$K' \subsetneq K$}
    \STATE $E' \Leftarrow \{E_k' \mid k \in K'\}$
    \STATE $\structrv' \Leftarrow F_{\text{struct}}(E', K', R')$ \hfill \COMMENT{Recursive call}
    \STATE {\bf return} $f_{\text{combine}}(\structrv', K, R, \{\tracerv_i\}_{i=1}^m)$
\end{algorithmic}
\end{algorithm}
\end{minipage}
\caption{The recursive algorithm for $\argtopk$ and the general algorithm with the stochastic invariant put side-by-side. Both algorithm perform the Exponential-Min trick and proceed with recursion using a subset of variables. The output $\structrv$ combines the current trace $\tracerv$ and the recursion output $\structrv'$.}
\end{figure}

\subsection{General Algorithm With the Stochastic Invariant}
\label{sec:general}
In this section, we generalize Algorithm~\ref{alg:top-k}. The idea is to preserve the property of Algorithm~\ref{alg:top-k} that allows applying the Exponential-Min trick and abstract away the details to allow various instantiations of the algorithm. Algorithm~\ref{alg:general} is the generalization we put next to Algorithm~\ref{alg:top-k} for comparison. It has a similar recursive structure and abstracts away the details using the auxiliary subrouties: $f_{\text{stop}}$ is the stop condition, $f_{\text{map}}$ and $f_{\text{combine}}$ handle the recursion and $f_{\text{split}}$ is an optional subroutine for the Exponential-Min trick. Additionally, we replace $k$ with an auxiliary argument $R$ used to accumulate information from the above recursion calls. Below, we motivate the subroutines and discuss the properties of a arbitrary instance of Algorithm~\ref{alg:general}.

After checking the stop condition with $f_{\text{stop}}$, Algorithm~\ref{alg:general} applies the Exponential-Min trick simultaneously over $m$ disjoint sets rather than the whole index set $K$. 
For example, such operation occurs when we find columnwise minimum in CLE algorithm\cite{edmonds1967optimum}.
To allow the operation we construct \emph{a partition} of indices $P_1, \dots, P_m$ and find the $\operatorname{arg}\min$ across the partition sets.
To generate the partition, we introduce a new subroutine $f_{split}$ taking the index set $K$ and the auxiliary argument $R$ as inputs.
The partition size $m$ may also be variable.

After the $m$ simultaneous Exponential-Min tricks, the generalized algorithm calls $f_{\text{map}}$ to select a subset of indices $K' \subsetneq K$ and to accumulate the necessary information for the next call in $R'$. Intuitively, the argument $R'$ represents \emph{a reduction} to a smaller problem solved with a recursive call. In the $\argtopk$ example, $K'$ is $K \setminus \{\tracerv\}$ and $R'$ is the decrement $k - 1$. Note that Algorithm~\ref{alg:general} does not allow to capture such information with the other inputs $\exprv'$ and $K'$ exclusively.

Finally, the algorithm calls $f_{\text{combine}}$ to construct the structured variable $\structrv$ using the recursive call output $\structrv'$ and the other variables.
In the top-k example, $f_{\text{combine}}$ appends the minimum variable index $\tracerv$ to the set $\structrv'$.

Now we argue that Algorithm~\ref{alg:general} preserves the invariant observed in Algorithm~\ref{alg:top-k}. Again, we call the sequence of variables $\tracerv = (\tracerv_1, \dots, \tracerv_m)$ \emph{the trace} of the algorithm. By design, if the input $\exprv$ is a sequence of independent exponential random variables, then the recursion input $\exprv'$ conditioned on $\tracerv$ is again a sequence of independent exponential distributions. For short, we call this property \emph{the stochastic invariant}. The key to the stochastic invariant is the signature of the subroutines Algorithm~\ref{alg:general} uses. The algorithm only accesses $\exprv$ values though the Exponential-Min trick. As a result, the intermediate variables $K'$ and $R'$ as well as the output $\structrv$ depend on $\exprv$ only through $\tracerv$. In other words, the execution trace is a function of perturbation $\tracerv = \tracerv(\exprv)$ and the structured variable $\structrv = \structrv(\tracerv)$ is a function of the trace. Additionally, due to Lemma~\ref{lemma:exp-min}, the trace components $\tracerv_1, \dots, \tracerv_m$ have categorical distributions, whereas $\exprv'_k, k \in K$ are exponential random variables. We prove these properties by induction w.r.t. the recursion depth in Appendix~\ref{sec:proofs}.

Given the above, we derive two modifications of Algorithm~\ref{alg:general} generalizing Lemma~\ref{lemma:exp-min} and the Plackett-Luce distribution from the illustration. Algorithm~\ref{alg:log-prob} computes the log-probability $\logprobnew{\tracerv}{\tracereal; \expparam}$ of a trace realization $\tracereal$. In Section~\ref{sec:estimation}, we use the algorithm output to construct gradient estimators. Again, the pseudo-code introduces index $j$ to denote the recursion depth and assumes the input $\tracereal = \{t^j_i\}_{i,j}$ is the concatenation of trace variables for all recursion depths $j=1,\dots,k$. Similarly, in Appendix~\ref{sec:algorithms} we present an algorithm returning a sample from $\exprv \mid \tracerv = \tracereal$ given trace realization $\tracereal$.

\subsection{Further Examples}
\label{sec:examples}
This subsection contains an overview of algorithms with stochastic invariants along with the corresponding structured variables. We present the details and the pseudo-code in Appendix~\ref{sec:pseudo-code}.

Analogous to the $\argtopk$ and the subset variable, the insertion sorting algorithm is an algorithm with the stochastic invariant. In the case of sorting, we do not omit the order of the trace variable $\tracerv$ and return the permutation $\structrv = \tracerv$. The resulting variable $\structrv$ has the Plackett-Luce distribution. We use the variable as a latent variable for insertion-based non-monotonic generation \cite{gu2019insertion}. As an alternative to the Plackett-Luce distribution, we consider a square parameter matrix and find a matching between rows and columns. We perturb the matrix and iteratively find the minimum element in the matrix. We exclude the row and the column containing the element and proceed to the next recursion step. Notably, in contrast to this algorithm, the Hungarian algorithm \cite{kuhn1955hungarian} for the minimum weight matching does not have the stochastic invariant.

As \cite{paulus2020gradient} observe, Kruskal's algorithm \cite{kruskal1956shortest} and Chu-Liu-Edmonds \cite{edmonds1967optimum} algorithm recursively apply the Exponential-Min trick, thus have the stochastic invariant. The former constructs the minimum spanning (MST) tree edge-by-edge. The corresponding trace variable $\tracerv$ is a sequence of edges, whereas $\structrv$ is an unordered set of edges. Interestingly, we could not represent Prim's algorithm \cite{prim1957shortest, dasgupta2008algorithms} for the MST as an algorithm with the stochastic invariant. The Chu-Liu-Edmonds algorithm is an analog of Kruskal's algorithm for directed graphs. It returns the minimum tree $\structrv$ with a fixed root.

Additionally, we construct a latent binary tree variable specifically for the non-monotonic generation model \cite{welleck2019non}. In this model, each token in the sentence corresponds to a node in the tree. We assign weights to nodes and perturb the weights to obtain a sample. During the recursion, we set the minimum weight node to be the parent. We put the nodes on the left-hand side to the left subtree and the nodes on the right-hand side to the right subtree.

\section{Gradient Estimation for the Recursive Gumbel-Max Trick}
\label{sec:estimation}
\begin{algorithm}[t]
\caption{$F_{\text{log-prob}}(\tracereal, \expparam, K, R)$ - returns $\logprobnew{\tracerv}{\tracereal; \expparam}$ for trace $\tracereal$, rates $\expparam$, $K$ and $R$ as in Alg.~\ref{alg:general}}
\label{alg:log-prob}
\begin{algorithmic}
    \REQUIRE $\tracereal, \expparam, K, R$
    \ENSURE $\logprobnew{\tracerv}{\tracereal; \expparam}$
    \IF{$f_{\text{stop}}(K, R)$}
    \STATE {\bf return}
    \ENDIF
    \STATE $P_1, \dots, P_m \Leftarrow f_{\text{split}}(K, R)$ 
    \FOR{$i=1$ to $m$}
    \STATE $\logprobnew{\tracerv}{\tracereal^1_i; \expparam} \Leftarrow \log \expparam_{\tracereal^1_i} - \log \left(\sum_{k \in P_i} \expparam_k \right)$ \hfill \COMMENT{Index $j$ in $T^j_i$ denotes the recursion level}
    \FOR{$k \in P_i \setminus \{\tracereal^1_i\}$ }
    \STATE $\expparam'_{k} \Leftarrow \expparam_k$
    \ENDFOR
    \STATE $\expparam'_{\tracereal^1_i} \Leftarrow +\infty$ \hfill \COMMENT{Because $\exprv'(\tracereal^1_i) = 0$}
    \ENDFOR
    \STATE $K', R' \Leftarrow f_{\text{map}}(K, R, \{\tracereal^1_i\}_{i=1}^m)$
    \STATE $\expparam' \Leftarrow \{ \expparam_k' \mid k \in K'\}$
    \STATE $\logprobnew{\tracerv}{\tracereal^{>1} \mid \tracerv^{1} = \tracereal^{1}; \expparam} \Leftarrow F_{\text{log-prob}}(\tracereal^{>1}, \expparam', K', R')$ \hfill \COMMENT{Compute log-prob of $\tracereal^{>1} := \{\tracereal^j_i\}_{j > 1}$}
    \STATE {\bf return} $ \sum_{i=1}^m \logprobnew{\tracerv}{\tracereal^1_i; \expparam} + \logprobnew{\tracerv}{\tracereal^{>1} \mid \tracerv^1 = \tracereal^1 ; \expparam}$
\end{algorithmic}
\end{algorithm}

In this section, we develop the gradient estimates for the structured distributions defined in Subsection~\ref{sec:general}. We start with a brief introduction of the two main approaches to gradient estimation for discrete categorical variables: the score function estimator~\cite{williams1992simple} and the Gumbel-Softmax estimator~\cite{maddison2016concrete, jang2016categorical}. Then, we propose a low-variance modification of the score function estimator for the structured variables based on the intermediate representation of the variable. Finally, we conclude with a discussion of control variates we use together with the proposed estimator.

\subsection{Gradient Estimation for Categorical Variables}

We consider gradient estimates of an expected objective $\nabla_\expparam \mathbb E_\catrv \loss{\catrv}$, where a discrete random variable $\catrv$ has parametric distribution $\probnew{\catrv}{\cdot; \expparam}$ with finite support. The basic \emph{score function estimator}~\citep{williams1992simple}, also known as REINFORCE, defines an unbiased estimate for the gradient using a sample $\catreal$ as $\loss{\catreal} \nabla_\expparam \logprobnew{\catrv}{\catrv = \catreal; \expparam}$. The estimator does not make any assumptions about $\loss{\cdot}$, but requires an efficient sampling algorithm for ${\catrv}$ and the score function $\nabla_\expparam \logprobnew{\catrv}{\catrv = \catreal; \expparam}$.
For a categorical random variable~$\catrv$ with outcome probabilities $\probnew{\catrv}{\catrv = k; \expparam} \propto \expparam_k$ computation of $\nabla_{\expparam} \logprobnew{\catrv}{\catrv = \catreal; \expparam}$ is linear in the number of outcomes $\thetadim$. Therefore, the gradient estimation is fast when $\thetadim$ is small. However, for structured variables, such as graphs or sequences, the number of outcomes $\thetadim$ grows rapidly with the structure size. In this case, the estimator requires custom algorithms for sampling and estimating the score function.

\emph{The Gumbel-softmax estimator}, introduced in \cite{jang2016categorical, maddison2016concrete}, is an alternative estimator that defines a continuous relaxation based on Lemma~\ref{lemma:exp-min}. On the forward pass, it replaces the categorical variable $\catrv$ with a differentiable surrogate $\tilde{\catreal} = \operatorname{soft}\max{(\tfrac{\gumbreal}{\temp})}$, where the input $\gumbreal := (-\log \expreal_1, \dots, -\log \expreal_\thetadim)$ is a component-wise transformation of exponential samples. Due to Lemma~\ref{lemma:exp-min}, the surrogate converges to the one-hot encoding of a categorical sample $\catreal$ as $\operatorname{soft}\max$ converges to $\operatorname{arg}\max$ when $\temp \rightarrow 0$. On the backward pass, the estimator uses the chain rule to construct the gradient $\nabla_\expparam \loss{\tilde{\catreal}}$ using the reparameterization trick~\cite{kingma2013auto, rezende2014stochastic} to define the partial derivative of a sample as $\tfrac{\partial \expreal_i}{\partial \expparam_i} = -\tfrac{\expreal_i}{\expparam_i}$. The Gumbel-Softmax estimator naturally extends to structured variables \cite{mena2018learning, paulus2020gradient}. Specifically, the component-wise optimization in Lemma~\ref{lemma:exp-min} can be replaced with a linear program over a structured set to generate structured variables and a relaxation can be used to define gradients. In the experimental section, we consider Stochastic Softmax Tricks (SST), introduced in \cite{paulus2020gradient}, as the relaxation-based baseline for comparison.

As opposed to the score function estimator, the Gumbel-Softmax estimator requires a differentiable loss~$\loss{\cdot}$. Such requirement imposes an additional restriction on a model architecture. The architecture must be defined for the relaxed samples as well as the hard samples, a non-trivial requirement for the models where discrete variables define branching \cite{le2020revisiting} or the inputs the model is not allowed to see \cite{kool2018attention,gu2019insertion}. In practice, the vanilla score function estimator has notoriously higher variance compared to the Gumbel-Softmax estimator and requires a control variate to improve the gradient descent convergence.

\subsection{The Score Function Estimator for the Recursive Gumbel-Max Trick}

In Subsection~\ref{sec:general}, we have introduced a probabilistic model involving an exponential variable $\exprv$ and the structured variable $\structrv$ defined as an output of an algorithm with input $\exprv$. Additionally, we have defined an intermediate trace variable $\tracerv = \tracerv(\exprv)$ such that $\structrv$ is a function $\structrv = \structrv(\tracerv)$. In this subsection, we apply $\tracerv$ to estimate gradients of $\mathbb E_\structrv \loss{\structrv}$.

In our setup, the score function $\nabla_\expparam \logprobnew{\exprv}{\exprv=\expreal;\expparam}$ is available out of the box.
However, the score function estimator 
\begin{equation}
g_\exprv := \loss{\structrv(\expreal)} \nabla_\expparam \logprobnew{\exprv}{\exprv = \expreal; \expparam},
\end{equation}
which we refer to as $\exprv$-REINFORCE, is rarely used in practice. In fact, the variance of the score function estimator using $\exprv$ as an action space exceeds the variance of the estimator using $\structrv$. On the other hand, the score function for the structured variable $\nabla_\expparam \logprobnew{\structrv}{\structrv = \structreal; \expparam}$ involves marginalization w.r.t. $\exprv$ and may require significant computation resources to estimate.

To mitigate the variance of $g_\exprv$, we follow the observation of \cite[Appendix, Sec. B]{tucker2017rebar} and define another estimator as
\begin{equation}
g_\tracerv := \loss{\structrv(\tracereal)} \nabla_\expparam \logprobnew{\tracerv}{\tracerv = \tracereal; \expparam},
\end{equation}
which is essentially the score function estimator that uses $\tracerv$ rather than $\exprv$. Below we refer to it as $\tracerv$-REINFORCE.
Such estimate can be seen as the score function estimator $g_\exprv$ marginalized over $\exprv$ given $\tracerv$ (Appendix~\ref{sec:proofs} contains the detailed derivation)
\begin{equation}
\loss{\structrv} \nabla_\expparam \logprobnew{\tracerv}{\tracerv; \expparam}
=
\mathbb E_{\exprv \mid \tracerv} \left[ \loss{\structrv} \nabla_\expparam \logprobnew{\exprv}{\exprv;\expparam} \mid \tracerv \right].
\end{equation}
As a result, the proposed gradient estimate $g_\tracerv$ is unbiased
\begin{equation}
\mathbb E_{\exprv} \loss{\structrv} \nabla_\expparam \logprobnew{\exprv}{\exprv;\expparam} = \mathbb E_{\tracerv} \mathbb E_{\exprv \mid \tracerv} \left[ \loss{\structrv} \nabla_\expparam \logprobnew{\exprv}{\exprv;\expparam} \mid \tracerv \right],
\end{equation}
whereas the variance of the estimate does not exceed the variance of $g_\exprv$
\begin{equation}
\var{\exprv}{g_\exprv} = \var{\tracerv}{\mathbb E_{\exprv \mid \tracerv} g_\exprv} + \mathbb E_{\tracerv} \var{\exprv \mid \tracerv}{g_\exprv} = \var{\tracerv}{g_\tracerv} + \mathbb E_{\tracerv}\var{\exprv \mid \tracerv}{g_\exprv} \geq \var{\tracerv}{g_\tracerv}.
\end{equation}
In fact, in our experiments, we observed a significant difference in optimization due to the reduced variance of the estimator.

As we have argued in Subsection~\ref{sec:general}, we can compute the score function for trace variable and apply the estimator $g_\tracerv$ in practice.
Similarly, marginalization with respect to $\tracerv \mid \structrv$ leads to the score function estimator $g_\structrv := \loss{\structreal} \nabla_\expparam \logprobnew{\structrv}{\structrv = \structreal; \expparam}$ and reduces the variance even further $g_\tracerv: \var{\tracerv}{g_\tracerv} \geq \var{\structrv}{g_\structrv}$. Therefore, the standard score function estimator is preferable when $\nabla_\expparam \logprobnew{\structrv}{\structrv = \structreal; \expparam}$ is available. In other cases, $g_\tracerv$ is a practical alternative. 

\subsection{Further Variance Reduction for the Score Function Estimator}

In addition to the marginalization described above, we mitigate the variance of the score function estimator with control variates. We use two strategies to construct the control variates. The first strategy uses the algorithm for conditional reparameterization of $\exprv \mid \tracerv$ (Appendix~\ref{sec:algorithms}, Algorithm~\ref{alg:conditional}) and defines a family of sample-dependent control variates for the score function estimator \citep{tucker2017rebar, grathwohl2017backpropagation}.
The estimator generates a sample $\expreal$, runs the corresponding algorithm to obtain $\tracereal$ and $\structreal = \structrv(\tracereal)$, adds a control variate $c(\expreal)$ and uses an independent sample $\tilde{\expreal}$ from the conditional distribution $\exprv \mid \tracerv = \tracereal$ to eliminate the introduced bias
\begin{equation}
\left(\loss{\structrv(\tracereal)} - c({\tilde{\expreal}})\right) \nabla_\expparam \logprobnew{\tracerv}{\tracerv=\tracereal; \expparam} - \nabla_\expparam c(\tilde{\expreal}) + \nabla_\expparam c(\expreal),
\end{equation}
In general, the above estimate extends to any pair of random variables $(B, Z)$ such that $B = B(Z)$ and the conditional distribution $Z \mid B$ admits the reparameterization trick. In \cite{tucker2017rebar}, the control variate used the relaxed loss~$\loss{\cdot}$, whereas \cite{grathwohl2017backpropagation} proposed to learn the control variate to improve the training dynamic. In our experiments, we use the estimator of \cite{grathwohl2017backpropagation} and refer to it as RELAX.

The second family of control variates we consider uses $K > 1$ samples $\tracereal_1,\dots, \tracereal_K$ to reduce the variance. Besides averaging the independent estimates, it uses the objective sample mean $\bar{\mathcal{L}} := \tfrac{\sum_{i=1}^K \loss{\structrv(\tracereal_i)}}{K}$ to reduce the variance even further:
\begin{equation}
\label{eq:reinforce-plus}
\frac{1}{K - 1} \sum_{i=1}^{K} \left(\loss{\structrv(\tracereal_i)} - \overline{\mathcal{L}}\right) \nabla_{\expparam} \logprobnew{\tracerv}{\tracerv = \tracereal_i; \expparam}.
\end{equation}
Despite being quite simple, the above leave-one-out estimator~\cite{kool2019buy} proved to be competitive with multiple recent works \cite{dongcoupled, richter2020vargrad}. In our experiments, we refer to such estimator as $\tracerv$-REINFORCE+.\footnote{We denote the analogue, which uses the exponential score instead of the score of the trace, as $\exprv$-REINFORCE+} 
To facilitate fair comparison, in a batch training setup we reduce the batch size proportionally to $K$.


\section{Related Work}
\label{sec:related-work}

Early models with structured latent variables include HMMs \cite{rabiner1989tutorial}, PCFGs \cite{petrov2008discriminative}, make strong assumptions about the model structure, and typically use EM-algorithm variations for training. This paper continues the line of work on perturbation models \cite{papandreou2011perturb} for distributions over combinatorial sets. Initially, perturbation models approximated Gibbs distributions with an efficient sampling procedure using the MAP oracle for the Gibbs distribution. Later, \cite{jang2016categorical, maddison2016concrete} proposed to relax the component-wise optimization used in the Gumbel-Max trick to facilitate gradient-based learning for Gibbs distributions. The combination of the two approaches, namely a perturbed model together with a bespoke relaxation of the MAP oracle, allows designing learning algorithms for latent subsets \citep{xie2019reparameterizable}, permutations \citep{mena2018learning}, trees \citep{corro2019differentiable} and sequences \citep{fu2020latent}. Recently, \cite{paulus2020gradient} developed a systematic approach to design relaxations for perturbed models with linear MAP oracle. Unlike the previous works, we mainly focus on the score function estimators \citep{williams1992simple} for learning. 

We illustrate the framework with the well-known Gumbel-Top-k trick \cite{yellott1977relationship}. Among the various applications, \cite{xie2020differentiable} used the trick to define a differentiable relaxation for the subset model considered in our paper; meanwhile, \cite{gadetsky2020low, santucci2020gradient} used the trick to define score function estimators for latent permutations. Besides that, \cite{kool2019stochastic, kool2020ancestral} leveraged the trick for sampling without replacement for a certain family of graphical models and design a gradient estimator using the sampler \cite{kool2019buy}. Importantly, \cite[Appendix, Sec. B]{paulus2020gradient} showed that the Kruskal's algorithm and the Chu-Liu-Edmonds algorithm extend the Gumbel-Top-k trick. They used the observation to argue in favor of exponential perturbations. In turn, we generalize the observation and propose a learning algorithm based on the generalization. 

The conditional reparameterization scheme, proposed in our work, allows action-dependent control variates \citep{tucker2017rebar, grathwohl2017backpropagation} for learning the structured variables. However, similarly to \cite{richter2020vargrad, dongcoupled} we observed that often a simpler leave-one-out baseline~\citep{kool2019buy} has better performance. Besides the control variates, \cite{paulus2020rao} recently adopted conditional reparameterization to construct an improved Gumbel Straight-Through estimator \citep{jang2016categorical} for the categorical latent variables.

\section{Applications}
\label{sec:applications}
In the section below, we study various instances of algorithms with stochastic invariants and compare them against relaxation-based Stochastic Softmax Tricks (SST) introduced in \cite{paulus2020gradient}. SST offers a generalization of the well-known Gumbel-Softmax trick to the case of different structured random variables. The experimental setup is largely inherited from \cite{paulus2020gradient}, apart from Subsection \ref{sec:nonmon_main} where the specifics of the problem do not allow relaxation-based gradient estimators, thus showing broader applicability of the score function-based gradient estimators. The main goal of \cite{paulus2020gradient} was to show that introducing structure will lead to superior performance compared to unstructured baselines. In turn, we focus on studying the benefits that one could get from using score-function gradient estimators based on $\tracerv$ rather than $\exprv$ as well as showing competitive performance compared to relaxation-based SSTs. 

Concerning the efficiency of the proposed score-function-based gradient estimators, the recursive form of Algorithm~\ref{alg:general} does not facilitate batch parallelization in general. Specifically, the recursion depth and the decrease in the size of $\exprv$ may differ within a batch. Therefore, Algorithms~\ref{alg:general},\ref{alg:log-prob} may require further optimization. We discuss our implementations in Appendix, Section~\ref{sec:details}, and provide speed benchmarks in Table~\ref{timings-table}. It shows that, in practice, the performance is not much inferior to the relaxation-based competitor. The implementation and how-to-use examples are publicly available\footnote{https://github.com/RakitinDen/pytorch-recursive-gumbel-max-trick}.
\subsection{Learning to Explain by Finding a Fixed Size Subset}



\begin{table*}[t]
\centering
\caption{Results of $k$-subset selection on Aroma aspect data. MSE ($\times 10^{-2}$) and subset precision (\%) is shown for best models selected on validation averaged across different random seeds.}  
\label{table:l2x}
\begin{center}
\begin{small}
\begin{sc}
\scalebox{0.85}{
\begin{tabular}{@{}llllllll@{}}
\toprule
\multirow{2.5}{*}{\shortstack[l]{Model}}& \multirow{2.5}{*}{\shortstack[l]{Estimator}} & \multicolumn{2}{c}{$k=5$}  & \multicolumn{2}{c}{$k=10$} & \multicolumn{2}{c}{$k=15$}  \\ \cmidrule(r){3-4} \cmidrule(r){5-6} \cmidrule(r){7-8}
 &  & mean $\pm$ std & Prec. & mean $\pm$ std & Prec. & mean $\pm$ std & Prec. \\
\cmidrule[\heavyrulewidth]{1-8}
\multirow{4.5}{*}{\shortstack[l]{Simple}}
  & \emph{SST (Our Impl.)}
  	& $\mathbf{3.6 \pm 0.1}$ 
  	& $28 \pm 1.4$ 
  	& $3.21 \pm 0.12$ 
  	& $29.5 \pm 1.7$ 
  	& $2.77 \pm 0.09$ 
  	& $\mathbf{28.1 \pm 1.2}$ \\
  \cmidrule[0.15pt]{2-8}
    & \emph{\exprv-REINFORCE+} 
    & $3.89 \pm 0.2$ 
    & $25 \pm 1.4$ 
    & $3.77 \pm 0.23$ 
    & $26.7 \pm 3.4$ 
    & $3.16 \pm 0.16$ 
    & $25.3 \pm 1.3$ \\
  & \emph{\tracerv-REINFORCE+}
    & $3.79 \pm 0.13$ 
    & $\mathbf{30.5 \pm 2.2}$ 
    & $\mathbf{3.14 \pm 0.16}$ 
    & $\mathbf{31 \pm 2.9}$ 
    & $\mathbf{2.69 \pm 0.11}$ 
    & $27.6 \pm 0.9$ \\
  & \emph{RELAX} 
  	& $3.76 \pm 0.11$ 
  	& $24 \pm 1.9$ 
  	& $3.5 \pm 0.13$ 
  	& $28.9 \pm 1.9$ 
  	& $2.95 \pm 0.15$ 
  	& $26.1 \pm 1.9$ \\
  \cmidrule[\heavyrulewidth]{1-8}
  \multirow{4.5}{*}{\shortstack[l]{Complex}}
  & \emph{SST (Our Impl.)}
    & $2.93 \pm 0.09$ 
    & $\mathbf{56 \pm 2.1}$ 
    & $2.55 \pm 0.08$ 
    & $\mathbf{49.4 \pm 3.1}$ 
    & $2.51 \pm 0.05$ 
    & $40.3 \pm 0.9$ \\
  \cmidrule[0.15pt]{2-8}
    & \emph{\exprv-REINFORCE+} 
    & $3.03 \pm 0.2$ 
    & $49.4 \pm 3.3$ 
    & $2.92 \pm 0.12$ 
    & $45 \pm 2.3$ 
    & $2.76 \pm 0.22$ 
    & $42 \pm 2.2$ \\
  & \emph{\tracerv-REINFORCE+}
    & $\mathbf{2.75 \pm 0.08}$ 
    & $55.8 \pm 3.4$ 
    & $\mathbf{2.48 \pm 0.05}$ 
    & $48.6 \pm 2.4$ 
    & $\mathbf{2.4 \pm 0.03}$ 
    & $\mathbf{44.2 \pm 1.3}$ \\
  & \emph{RELAX} 
    & $2.8 \pm 0.08$ 
    & $54.4 \pm 2.1$ 
    & $2.58 \pm 0.09$ 
    & $47.6 \pm 1.6$ 
    & $2.46 \pm 0.08$ 
    & $42.1 \pm 1.5$ \\
\bottomrule
\end{tabular}
}
\end{sc}
\end{small}
\end{center}
\end{table*}
We evaluated our method on the experimental setup from L2X \citep{chen2018learning} on the BeerAdvocate \citep{mcauley2012learning} dataset. The dataset consists of textual beer reviews and numerical ratings for four beer aspects (\textit{Aroma},  \textit{Taste},  \textit{Appearance},  \textit{Palate}). The model utilizes encoder-decoder architecture. The encoder outputs parameters of top-$k$ distribution over subsets of a review given the entire review. The decoder predicts a rating given the review subset of size $k$. Following setup of \citep{paulus2020gradient} we use $k = \{5,10,15\}$ and two CNN architectures: \textit{Simple} --- a one-layer CNN and  \textit{Complex} --- a three-layer CNN and train our models for each aspect using MSE loss. We train models using several score function based estimators and compare them with our implementation of SST. We report means and standard deviations for loss and precision averaged across different random model initializations. Table~\ref{table:l2x} shows the obtained results for \textit{Aroma} aspect. The best metrics with respect to means are highlighted in bold. Detailed experimental setup and description of models can be found in Appendix~\ref{sec:topkexp}

\subsection{Learning Latent Spanning Trees with Kruskal's Algorithm}
\begin{table*}[t]
\centering
\caption{Graph Layout experiment results for T=10 iterations. Metrics are obtained by choosing models with best validation ELBO and averaging results across different random seeds on the test set.}
\label{nri1-table}
\begin{center}
\begin{small}
\begin{sc}
\begin{tabular}{@{}lcccccc@{}}
    \toprule
     & \multicolumn{6}{c}{$T=10$} \\ 
     \cmidrule(l){2-7}     
     Estimator & \multicolumn{2}{c}{ELBO} & \multicolumn{2}{c}{Edge Prec.} & \multicolumn{2}{c}{Edge Rec.} \\
               & mean $\pm$ std & max & mean $\pm$ std & max & mean $\pm$ std & max \\ 
     \midrule
    \emph{SST (Our Impl.)} 
    & $-1860.79 \pm 1116.83$ 
    & $-1374.81$ 
    & $\mathbf{87 \pm 21}$ 
    & $\mathbf{95}$ 
    & $\mathbf{93 \pm 3}$ 
    & $\mathbf{95}$ \\
    \cmidrule[0.15pt]{1-7}
    \emph{\tracerv-REINFORCE+} 
    & $\mathbf{-1582.72 \pm 571.18}$ 
    & $\mathbf{-1192.04}$ 
    & $70 \pm 31$ 
    & $91$ 
    & $86 \pm 8$ 
    & $91$ \\
    \emph{RELAX} 
    & $-2079.18 \pm 569.49$ 
    & $-1205.87$ 
    & $43 \pm 31$ 
    & $90$ 
    & $81 \pm 6$ 
    & $90$ \\
    \bottomrule
    \end{tabular}
\end{sc}
\end{small}
\end{center}
\end{table*}

Given a system of interacting particles, the dependencies of their states can be formally described as an undirected graph. We use Neural Relational Inference \cite{kipf2018neural}, initially representing a relaxation-based approach, and examine its performance as a generative model and ability to reconstruct system interconnections, when applying score function techniques instead. We build an experiment in line with \cite{paulus2020gradient}, generating data by translating a ground truth latent spanning tree (corresponding to the connections in the system) into a sequence of positions of points on a real plane, representing dynamics of the system over time. These points are obtained executing force-directed algorithm \cite{fruchterman1991graph} for $T = 10$ or $T = 20$ iterations and fed into the model.

The architecture of the model consists of a graph neural network (GNN) encoder, producing distribution over spanning trees, and a GNN decoder, producing distribution over time series of points positions. Model is trained in a manner of variational autoencoders (VAEs), optimizing ELBO, a lower bound on the joint log-likelihood of the observed data points at all timesteps. 
 
We measure precision and recall with respect to the encoder samples and the ground truth dependency spanning tree. Table~\ref{nri1-table} shows the results for T=10 iterations. Overall, score function methods performed better than their relaxation-based counterpart, achieving higher values of ELBO on the test set, but slightly worse performance in terms of structure recovery metrics. The results for T=20 and the detailed experimental setup are described in the Appendix~\ref{sec:graphlayoutexp}.

\subsection{Unsupervised Parsing with Rooted Trees with CLE Algorithm}
\begin{table*}[t]
\centering
\caption{Unsupervised Parsing on ListOps. We report the average test-performance of the model with the best validation accuracy across different random initializations.}
\label{edmonds-table}
\begin{center}
\begin{small}
\begin{sc}
\begin{tabular}{*7l}
\toprule 
Estimator &
\multicolumn{2}{c}{Accuracy} &
\multicolumn{2}{c}{Precision} &
\multicolumn{2}{c}{Recall} \\
& mean $\pm$ std & max
& mean $\pm$ std & max
& mean $\pm$ std & max
 \\
\midrule
   \emph{SST (Our Impl.)}
   & $78.42 \pm 8.14$ & $\mathbf{93.78}$ 
    & $56.84 \pm 20.08$ & $\mathbf{82.40}$
    & $30.18 \pm 19.10$ & $73.11$
    \\
  \cmidrule[0.15pt]{1-7}
     \emph{\exprv-REINFORCE+}
   & $60.25 \pm 2.29$ & $64.47$
   & $40.87 \pm 6.90$ & $45.74$
   & $40.74 \pm 6.93$ & $45.46$
     \\
   \emph{\tracerv-REINFORCE+}
   & $\mathbf{87.34 \pm 3.00}$ & $91.97$
   & $\mathbf{77.93 \pm 7.36}$ & $79.65$
   & $\mathbf{61.10 \pm 14.11}$ & $\mathbf{79.65}$
    \\
   \emph{RELAX} 
   & $79.60 \pm 9.36$ & $88.64$ 
   & $54.73 \pm 17.48$ & $75.27$ 
   & $53.61 \pm 17.14$ & $75.27$
   \\
\bottomrule
\end{tabular}
\end{sc}
\end{small}
\end{center}
\end{table*}

We study the ability of the proposed score function estimators to recover the latent structure of the data in a setting, where it can be quite accurately described with an arborescence. Following details about data and models outlined by \cite{paulus2020gradient}, we use a simplified version of the ListOps \cite{nangia2018listops} dataset. It consists of mathematical expressions (e.g. $\texttt{min[3 med[3 5 4] 2]}$), written in prefix form along with results of their evaluation, which are integers in $[0, 9]$. Given a prefix expression, one can algorithmically recover its structure as a parse tree. We bound maximal length of expressions and maximal depth of their parses along with removing the examples with $\texttt{summod}$ operator. These limitations sufficiently decrease the amount of memory a model should have to calculate the result and facilitates the usage of GNNs which now become capable of evaluating expressions by a bounded number of message passing steps.

Our model consists of two parts: an encoder and a classifier. The encoder is a pair of LSTMs that generate parameters of the distribution over rooted arborescence on token nodes. The classifier is a GNN, which passes a fixed number of messages over the sampled arborescence and feeds the resulting embedding of the first token into the final MLP. Models are trained simultaneously to minimize cross-entropy. We examine the performance of the models by measuring classification accuracy along with precision and recall with respect to the edges of ground truth parse trees. Table~\ref{edmonds-table} shows score function based estimators, particularly \tracerv-REINFORCE+, show more stable performance in comparison to relaxation-based estimator. Detailed description of the experiment can be found in the Appendix~\ref{sec:edmondsexp}.

\subsection{Non-monotonic Generation of Balanced Parentheses with Binary Trees}\label{sec:nonmon_main}
\begin{figure}[ht]
\begin{center}
\centerline{\includegraphics[width=\textwidth]{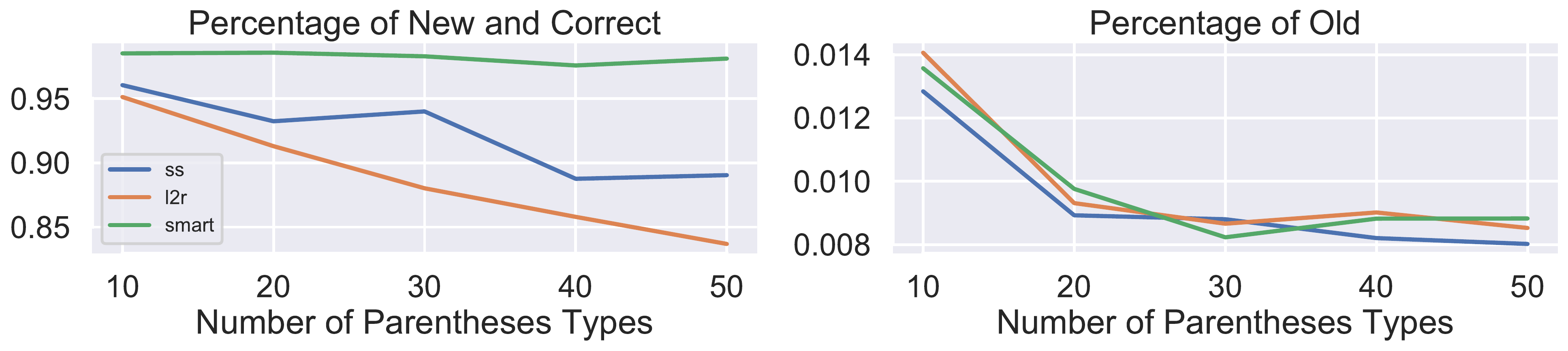}}
\caption{Generative statistics of non-monotonic language model with different orders}
\label{toy-nonmonotonic}
\end{center}
\end{figure}
We apply our methods in a setting where the application of continuous relaxations is seemingly not possible. We construct a simple dataset by sampling a fixed number of balanced parentheses of various types (from 10 to 50 types) and model them with non-monotonic architecture, defined in \cite{welleck2019non}. Here, a language model generates sequences by consecutively adding new tokens in between previously generated ones. This process can be seen as modeling joint distribution over binary trees (treated as orders of generation) and sequences (which are generated along these orders).

We refer to this probabilistic model as the decoder, fix its architecture to be a single-layer LSTM and use teacher-forcing for training. More precisely, we compare two different modes of training decoder: fixed order and semi-supervised trainable order. In the first case, the order is obtained by applying a deterministic function to the input sequence. We study two fixed orders: \textit{left-to-right}, corresponding to a degenerate tree with only right child nodes, and a more natural \textit{smart} order, described in Appendix~\ref{sec:nonmonexp}. In the semi-supervised (\textit{ss}) case, 10\% of sequences are paired with the smart order, and the overall model is trained as a VAE. We train the decoder by directly backpropagating the output signal and obtain the gradients for the encoder using RELAX estimator.

We choose models with the best validation perplexity and evaluate them by generating 200000 unique sequences and measuring portion of those which are present in the train dataset (Old) and those which are balanced and not seen during training (New and Correct). Results of this toy experiment show that it is possible to improve generative metrics by considering non-trivial orders of generation. Experiment is described in details in the Appendix~\ref{sec:nonmonexp}.

\section{Discussion}
Below, we speculate about the pros and cons of the relaxation-based \cite{paulus2020gradient} and the score function-based approaches to training latent structured variable models.
While both build upon the Gumbel-Max trick, the generalizations develop different ideas.
Our work summarizes the recent observations and formulates a general algorithm for which the Gumbel-Max applies recursively.
We utilize the properties of these algorithms to construct unbiased score function-based gradient estimators.
In contrast, \cite{paulus2020gradient} generalizes the relaxation of the Gumbel-Max trick to different combinatorial structures and produces biased reparametrization-based gradient estimators.
While the relaxation-based gradient estimators are biased and limited to differentiable objectives, they have lower variance and faster batch processing time (Tables~\ref{nri-var-table} and \ref{timings-table}).
At the same time, score function-based estimators are unbiased and apply when the loss functions do not admit relaxations (Section~\ref{sec:nonmon_main}).
Occasionally, they lead to better optimization and lower objective values (Tables~\ref{table:l2x-train}, \ref{nri1-train-table} and \ref{edmonds-table-train}).
The choice of control variates also introduces a trade-off in our framework.
As the experiments show, if the objective is highly parallelable, the multi-sample $\tracerv$-REINFORCE+ estimator is preferable.
However, a single-sample RELAX estimator is a more suitable choice when we cannot obtain multiple objective samples (i.e., in reinforcement learning). 
Finally, as a direction for future improvements, we suggest applying the conditional reparameterization scheme used in the control variate to improve the Gumbel straight-through estimators.



\subsubsection*{Acknowledgements}
The authors thank the reviewers for the valuable feedback. The research was supported by the Russian Science Foundation grant no. 19-71-30020 and through the computational resources of HPC facilities at HSE University\cite{kostenetskiy2021hpc}.

\bibliographystyle{plain}
\bibliography{references}

\newpage
\appendix

\section*{Appendix Outline}

\begin{itemize}
\item Section~\ref{sec:proofs}: Proofs for the Exponential-Min trick, the variance inequality for the estimators, the correctness of the trace probability, and the correctness of the conditional sampling;
\item Section~\ref{sec:algorithms}: Pseudo-code for $\logprob{\tracerv; \expparam}$ and the conditional reparameterization of~$\exprv \mid \tracerv$;
\item Section~\ref{sec:details}: Discussion of the algorithm implementations;
\item Section~\ref{sec:experiments}: Additional results and experimental details;
\item Section~\ref{sec:pseudo-code}: Pseudo-code for the algorithms;
\end{itemize}

\section{Proofs}
\label{sec:proofs}
In this section, we provide the proofs for the lemmas and the theorem formulated in the main part.
\subsection{The Exponential-Min Trick}
\begin{lemma}
\label{lemma:exp-min-appendix}
\textit{(the Exponential-Min trick)} If $\exprv_i \sim \operatorname{Exp}{(\expparam_i)}, i \in \{1, \dots, \thetadim\}$ are independent, then for $\catrv := \operatorname{argmin}\limits_{i} \exprv_i$
\begin{enumerate}
\item the outcome probability is $\probnew{\catrv}{\catrv = \catreal; \expparam} \propto \expparam_\catreal$;
\item random variables $\exprv_i' := \exprv_i - \exprv_\catrv, i \in \{1, \dots, \thetadim\}$ are mutually independent given $\catrv$ with
$\exprv_i' \mid \catrv \sim \operatorname{Exp}(\expparam_i)$ when $i \neq \catrv$ and $\exprv_i' = 0$ otherwise.\footnote{As a convention, we assume that $0 \overset{d}{=} \operatorname{Exp}{(\infty)}$.}
\end{enumerate}
\end{lemma}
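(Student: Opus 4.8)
The plan is to compute directly the joint law of the pair $(\catrv, (\exprv_i')_{i=1}^{\thetadim})$ by a single conditioning-and-integration argument, and then to read off both claims from the resulting closed form. As a preliminary remark, since each $\exprv_i$ has a continuous distribution and the $\exprv_i$ are independent, with probability one the minimum is attained at a unique index, so $\catrv = \operatorname{argmin}_i \exprv_i$ is well-defined almost surely; the degenerate coordinate $\exprv_\catrv' = 0$ is handled by the stated convention $0 \overset{d}{=} \operatorname{Exp}(\infty)$.

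Next I would fix an index $x \in \{1,\dots,\thetadim\}$ and reals $s_i \geq 0$ for $i \neq x$. Up to a null set, the event $\{\catrv = x\} \cap \{\exprv_i - \exprv_x > s_i \text{ for all } i \neq x\}$ coincides with $\{\exprv_i > \exprv_x + s_i \text{ for all } i \neq x\}$. Conditioning on the value $\exprv_x = t$ (with density $\expparam_x e^{-\expparam_x t}$) and using independence together with the exponential survival function $\prob{\exprv_i > t + s_i} = e^{-\expparam_i(t+s_i)}$ for $i \neq x$, I obtain
\[
\prob{\catrv = x,\ \exprv_i - \exprv_x > s_i \ \forall i \neq x} \;=\; \int_0^\infty \expparam_x e^{-\expparam_x t} \prod_{i \neq x} e^{-\expparam_i(t+s_i)}\, dt \;=\; \frac{\expparam_x}{\sum_{j=1}^{\thetadim} \expparam_j} \prod_{i \neq x} e^{-\expparam_i s_i}.
\]

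Finally, both claims follow by specialization. Setting all $s_i = 0$ gives $\probnew{\catrv}{\catrv = x; \expparam} = \expparam_x / \sum_j \expparam_j \propto \expparam_x$, which is claim 1. Dividing the display by this probability yields $\prob{\exprv_i - \exprv_x > s_i \ \forall i \neq x \mid \catrv = x} = \prod_{i \neq x} e^{-\expparam_i s_i}$; since the conditional joint survival function of $\{\exprv_i' = \exprv_i - \exprv_x : i \neq x\}$ factorizes into the survival functions $e^{-\expparam_i s_i}$ of $\operatorname{Exp}(\expparam_i)$, these variables are mutually independent given $\catrv = x$ with the prescribed exponential marginals, while $\exprv_x' = 0$ identically — claim 2.

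I do not expect a substantive obstacle: the argument is one clean Fubini-type computation. The only points that need a little care are the almost-sure uniqueness of the $\operatorname{argmin}$ (so that $\catrv$ is well-defined and the conditioning is legitimate) and the bookkeeping observation that the product form of the conditional joint tail is, by definition, exactly the asserted mutual independence with $\operatorname{Exp}(\expparam_i)$ marginals. An induction over the recursion depth then lifts this one-step statement to the stochastic invariant for Algorithm~\ref{alg:general}, but that is deferred to the later parts of this section.
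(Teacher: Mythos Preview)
Your proof is correct and follows essentially the same strategy as the paper: compute the joint law of $(\catrv, \exprv')$ in closed form and read off both claims. The only technical difference is that you work with the joint survival function and integrate out the minimum value $\exprv_\catrv$, whereas the paper performs the change of variables $\expreal_j \mapsto \expreal_j' = \expreal_j - \expreal_x$ directly on the joint density and keeps $\expreal_x$ as a separate coordinate. As a byproduct, the paper's density factorization also identifies the conditional law of the minimum itself, $\exprv_\catrv \mid \catrv \sim \operatorname{Exp}(\sum_j \expparam_j)$ independent of the $\exprv_i'$; this extra piece is not part of the lemma statement but is used later in the proof of Theorem~\ref{exp-min-thm}. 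Your survival-function argument could recover it just as easily by not integrating all the way and instead computing $\prob{\catrv = x,\ \exprv_x > u,\ \exprv_i' > s_i\ \forall i \neq x}$, so there is no real loss.
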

\begin{proof}
Starting with the joint density of $\catrv$ realization $\catreal$ and $\exprv$ realization $\expreal$
\begin{equation}
\prod_{j=1}^{\thetadim} \left( \expparam_j e^{-\expparam_j e_j} \cdot \mathbb{I}(e_x \leq e_j) \right),
\end{equation}
we make the substitution $e'_j := e_j - e_x$ for $j \in \{1, \dots, \thetadim\}$ and rewrite the density
\begin{align}
\prod_{j=1}^{\thetadim} \left( \expparam_j e^{-\expparam_j (e'_j + e_x)} \cdot \mathbb{I}(0 \leq e'_j) \right) = 
\expparam_x e^{-\sum_{j=1}^\thetadim \expparam_j e_x} \prod_{j = 1}^{\thetadim} \left( \expparam_j e^{-\expparam_j e'_j} \mathbb{I}(0 \leq e'_j) \right) = \\
\tfrac{\expparam_x}{\sum_{j=1}^\thetadim \expparam_j} \times \left(\sum_{j=1}^\thetadim \expparam_j\right) e^{-(\sum_{j=1}^\thetadim \expparam_j) e_x} 
\times \prod_{j = 1}^{\thetadim} \left( \expparam_j e^{-\expparam_j e'_j} \mathbb{I}(0 \leq e'_j) \right).
\end{align}
The latter is the joint density of
\begin{itemize}
\item a categorical latent variable $\catrv$ with $\prob{\catrv = x} \propto \expparam_x$;
\item an independent exponential random variable $\exprv_\catrv$ with rate parameter $\sum_{j=1}^\thetadim \expparam_j$;
\item and a sequence of random variables $\exprv'_j := \exprv_j - \exprv_\catrv, j \neq x$ with mutually independent exponential distributions $\operatorname{Exp}{(\expparam_j)}$ conditioned on $\catrv = \catreal$.
\end{itemize}
\end{proof}

\subsection{Properties of the Score Function Estimators}
This subsection contains the analysis of mean and variance of the defined score function estimators with respect to variables $\exprv$, $\tracerv$ and $\structrv$. We follow the derivations in \cite[Appendix, Sec. B]{tucker2017rebar} and start with lemma about the conditional marginalization. We assume applicability of the log-derivative trick every time it is used.
\begin{lemma}
Consider a random variable $Y$ with distribution from parametric family $\probnew{Y}{\cdot ; \expparam}$ and a function $Z = Z(Y)$. Then, $Z$-REINFORCE estimator is the marginalization of $Y$-REINFORCE estimator with respect to the distribution $Y \mid Z$:

\begin{equation}
\mathbb E_{Y \mid Z} \left[\loss{Z}\nabla_\expparam \logprobnew{Y}{Y ; \expparam}\mid Z \right] = \loss{Z}\nabla_\expparam \logprobnew{Z}{Z ; \expparam}.
\end{equation}
\label{lemma:rao}
\end{lemma}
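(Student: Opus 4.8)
The plan is to prove the identity by a direct application of the law of the unconscious statistician together with the log-derivative (score function) trick, exploiting the fact that $Z$ is a deterministic function of $Y$. First I would write the left-hand side as an integral (or sum) over the conditional law of $Y$ given $Z$: since $Z = Z(Y)$ is constant on the fibre $\{Y : Z(Y) = z\}$, the factor $\loss{Z}$ can be pulled out of the conditional expectation, leaving $\loss{z}\,\mathbb E_{Y \mid Z = z}\!\left[\nabla_\expparam \logprobnew{Y}{Y;\expparam}\right]$. So the crux reduces to showing
\begin{equation}
\mathbb E_{Y \mid Z = z}\!\left[\nabla_\expparam \logprobnew{Y}{Y;\expparam}\right] = \nabla_\expparam \logprobnew{Z}{Z = z;\expparam}.
\end{equation}

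Next I would apply the log-derivative trick in reverse on the right-hand side: $\nabla_\expparam \logprobnew{Z}{Z = z;\expparam} = \tfrac{\nabla_\expparam \probnew{Z}{Z = z;\expparam}}{\probnew{Z}{Z = z;\expparam}}$. Then I would write $\probnew{Z}{Z = z;\expparam} = \int_{Z^{-1}(z)} \probnew{Y}{y;\expparam}\,dy$ (the marginalization defining the pushforward law of $Z$), differentiate under the integral sign with respect to $\expparam$, and reintroduce the score via $\nabla_\expparam \probnew{Y}{y;\expparam} = \probnew{Y}{y;\expparam}\,\nabla_\expparam \logprobnew{Y}{y;\expparam}$. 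Dividing by $\probnew{Z}{Z=z;\expparam}$ turns the resulting integral into exactly the conditional expectation $\mathbb E_{Y \mid Z = z}\!\left[\nabla_\expparam \logprobnew{Y}{Y;\expparam}\right]$, because $\tfrac{\probnew{Y}{y;\expparam}}{\probnew{Z}{Z=z;\expparam}}$ restricted to the fibre $Z^{-1}(z)$ is precisely the conditional density of $Y$ given $Z = z$. This establishes the displayed identity, and multiplying through by $\loss{z}$ and noting $z$ was arbitrary (i.e.\ replacing $z$ by the random variable $Z$) gives the lemma.

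The main obstacle is not conceptual but a matter of rigor in the measure-theoretic bookkeeping: justifying the interchange of $\nabla_\expparam$ and the fibre integral, and making precise the disintegration of $\probnew{Y}{\cdot;\expparam}$ along $Z$ so that "the conditional density of $Y$ given $Z = z$ equals $\probnew{Y}{y;\expparam}/\probnew{Z}{Z=z;\expparam}$ on $Z^{-1}(z)$." In the structured setting of the paper $Y$ ranges over a finite (or countable) support, so these integrals are finite sums and the interchange of sum and derivative is immediate, which is presumably why the authors state up front that they "assume applicability of the log-derivative trick every time it is used." I would therefore carry out the argument in the discrete case explicitly — replacing integrals by sums over $Z^{-1}(z)$ — and remark that the continuous case follows under standard dominated-convergence hypotheses. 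The remaining steps are routine algebra.
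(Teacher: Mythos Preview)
Your proposal is correct and complete. It does, however, take a slightly different route from the paper. After the common first step of pulling $\loss{Z}$ outside the conditional expectation, you compute $\nabla_\expparam \logprobnew{Z}{Z=z;\expparam}$ directly by writing the marginal $\probnew{Z}{Z=z;\expparam}$ as a fibre integral, differentiating under the integral, and recognizing the ratio $\probnew{Y}{y;\expparam}/\probnew{Z}{Z=z;\expparam}$ as the conditional density. The paper instead works from the other side: it decomposes $\logprobnew{Y}{Y;\expparam}=\logprobnew{Y,Z}{Y,Z;\expparam}-\logprobnew{Z\mid Y}{Z\mid Y;\expparam}$, observes that the second term vanishes because $Z$ is deterministic in $Y$, then decomposes the joint as $\logprobnew{Z}{Z;\expparam}+\logprobnew{Y\mid Z}{Y\mid Z;\expparam}$ and invokes the standard fact that the conditional score $\nabla_\expparam\logprobnew{Y\mid Z}{Y\mid Z;\expparam}$ has zero conditional mean. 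Your approach is arguably more elementary and self-contained (one log-derivative trick and a change of order of differentiation/summation), while the paper's decomposition isolates the zero-mean-score property as the essential ingredient, which is conceptually cleaner and reusable (e.g.\ for the baseline corollary that follows). Both arguments rely on the same regularity assumption the paper flags, namely validity of the log-derivative trick.
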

\begin{proof}
Since $\loss{Z}$ is a function of $Z$, it can be moved outside of the conditional expectation:
\begin{equation}
\mathbb E_{Y \mid Z} \left[\loss{Z}\nabla_\expparam \logprobnew{Y}{Y ; \expparam}\mid Z \right] = \loss{Z} \mathbb E_{Y \mid Z} \left[\nabla_\expparam \logprobnew{Y}{Y ; \expparam}\mid Z \right].
\end{equation}

The only thing that remains is to show that the $Y$-score marginalized over $Y \mid Z$ equals $Z$-score:
\begin{equation}
\mathbb E_{Y \mid Z} \left[\nabla_\expparam \logprobnew{Y}{Y ; \expparam}\mid Z \right] = \nabla_\expparam \logprobnew{Z}{Z ; \expparam}.
\end{equation}
We start by rewriting the log-probability of $Y$ as the difference between the joint and the conditional log-probabilities:
\begin{equation}
\mathbb E_{Y \mid Z} \left[\nabla_\expparam \logprobnew{Y}{Y ; \expparam}\mid Z \right] = \mathbb E_{Y \mid Z} \left[\nabla_\expparam \logprobnew{Y, Z}{Y, Z ;\expparam} - \nabla_\expparam \logprobnew{Z \mid Y}{Z \mid Y ; \expparam} \mid Z \right].
\end{equation}

Next, we observe that the conditional log-probability $\logprobnew{Z \mid Y}{Z \mid Y ; \expparam}$ is concentrated in $Z = Z(Y)$ and equals $\log \mathbb I\left(Z = Z(Y)\right)$, which is zero for all $Y \sim \probnew{Y \mid Z}{\cdot \mid Z ; \expparam}$. Thus, the second summand cancels out:
\begin{equation}
\mathbb E_{Y \mid Z} \left[\nabla_\expparam \logprobnew{Y}{Y ; \expparam}\mid Z \right] = \mathbb E_{Y \mid Z} \left[\nabla_\expparam \logprobnew{Y, Z}{Y, Z ;\expparam} \mid Z \right].
\end{equation}

Then we rewrite the joint score as the sum of marginal $Z$-score and the conditional score of $Y$ given~$Z$:
\begin{equation}
\mathbb E_{Y \mid Z} \left[\nabla_\expparam \logprobnew{Y, Z}{Y, Z ;\expparam} \mid Z \right] = \mathbb E_{Y \mid Z} \left[\nabla_\expparam \logprobnew{Z}{Z ;\expparam} + \nabla_\expparam \logprobnew{Y \mid Z}{Y \mid Z ;\expparam}\right].
\end{equation}

Finally, the expectation of the score with respect to the corresponding distribution is zero:\footnote{Here we apply the log-derivative trick to $\logprobnew{Y \mid Z}{Y\mid Z; \expparam}$. While the trick may not apply to an arbitrary distribution, it is easy to show that the trick is correct for the conditional distributions we consider in our work.}
\begin{equation}
\mathbb E_{Y \mid Z} \left[\nabla_\expparam \logprobnew{Y \mid Z}{Y \mid Z ;\expparam}\mid Z \right] = \nabla_\expparam \mathbb E_{Y \mid Z} \left[ 1 \mid Z\right] = 0.
\end{equation}

At the same time, $Z$-score is already the function of $Z$ and can be moved outside of expectation:
\begin{equation}
\mathbb E_{Y \mid Z} \left[\nabla_\expparam \logprobnew{Y, Z}{Y, Z ;\expparam} \mid Z \right] = \mathbb E_{Y \mid Z} \left[\nabla_\expparam \logprobnew{Z}{Z ;\expparam}\mid Z \right] + 0 = \nabla_\expparam \logprobnew{Z}{Z ;\expparam}.
\end{equation}

Combining all previous steps, we arrive at:
\begin{equation}
\mathbb E_{Y \mid Z} \left[\loss{Z}\nabla_\expparam \logprobnew{Y}{Y ; \expparam}\mid Z \right] = \loss{Z}\nabla_\expparam \logprobnew{Z}{Z ; \expparam}.
\end{equation}
\end{proof}

The fact above also generalizes to the case of REINFORCE with baselines:

\begin{corollary}
Let $Y$ and $Z$ be random variables defined as in Lemma~\ref{lemma:rao} and $C$ be a random variable, independent of $Y$. Then, $Z$-REINFORCE with baseline $C$ is the marginalization of $Y$-REINFORCE with baseline $C$ with respect to the distribution $Y \mid Z, C$:

\begin{equation}
\mathbb E_{Y \mid Z, C} \left[ \left(\loss{Z} - C \right)\nabla_\expparam \logprobnew{Y}{Y ; \expparam} \mid Z, C\right] = \left(\loss{Z} - C\right)\nabla_\expparam \logprobnew{Z}{Z; \expparam}.
\end{equation}
\label{corollary:rao}
\end{corollary}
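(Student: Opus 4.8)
The plan is to reduce the statement to Lemma~\ref{lemma:rao}, the only new ingredient being the independence of $C$ and $Y$. First I would record the structural observation that, since $Z = Z(Y)$ is a deterministic function of $Y$, independence of $C$ and $Y$ upgrades to independence of $C$ and the pair $(Y, Z)$. Consequently the conditional law of $Y$ given $(Z, C)$ coincides with the conditional law of $Y$ given $Z$ alone: for any integrand $h(Y)$ not involving $C$,
\begin{equation}
\mathbb E_{Y \mid Z, C}\left[ h(Y) \mid Z, C \right] = \mathbb E_{Y \mid Z}\left[ h(Y) \mid Z \right].
\end{equation}

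Next I would split the baselined estimator. The factor $\loss{Z} - C$ is a function of $(Z, C)$ and hence measurable with respect to the conditioning $\sigma$-algebra, so it pulls out of the conditional expectation:
\begin{equation}
\mathbb E_{Y \mid Z, C}\left[ (\loss{Z} - C)\,\nabla_\expparam \logprobnew{Y}{Y;\expparam} \mid Z, C \right] = (\loss{Z} - C)\,\mathbb E_{Y \mid Z, C}\left[ \nabla_\expparam \logprobnew{Y}{Y;\expparam} \mid Z, C \right].
\end{equation}
By the observation above the remaining conditional expectation equals $\mathbb E_{Y \mid Z}\left[ \nabla_\expparam \logprobnew{Y}{Y;\expparam} \mid Z \right]$, and by the intermediate identity established inside the proof of Lemma~\ref{lemma:rao} (the $Y$-score marginalized over $Y \mid Z$ equals the $Z$-score) this is $\nabla_\expparam \logprobnew{Z}{Z;\expparam}$. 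Substituting back yields exactly $(\loss{Z} - C)\,\nabla_\expparam \logprobnew{Z}{Z;\expparam}$, as claimed.

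The only genuine subtlety — and the step I would be most careful about — is the first one: justifying that adjoining $C$ to the conditioning does not alter the conditional distribution of $Y$. This is where the hypothesis $C \indep Y$ is essential, and it must be promoted to $C \indep (Y, Z)$ via $Z = Z(Y)$; everything downstream is a routine application of the tower property and the log-derivative trick already invoked for Lemma~\ref{lemma:rao}. An alternative that sidesteps even re-deriving anything is to note that $\mathbb E[\,\cdot \mid Z, C]$ acts only on the $Y$-randomness while $C$ is held fixed, and then quote Lemma~\ref{lemma:rao} verbatim with the $Z$-measurable coefficient $\loss{Z}$ replaced by the $(Z, C)$-measurable coefficient $\loss{Z} - C$ — the proof of that lemma never used that the coefficient depended on $Z$ only, merely that it was constant under $\mathbb E_{Y \mid Z}$.
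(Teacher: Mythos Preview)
Your proposal is correct and follows essentially the same approach as the paper: pull the $(Z,C)$-measurable factor $\loss{Z}-C$ outside the conditional expectation, use $C\indep Y$ together with $Z=Z(Y)$ to upgrade to $C\indep(Y,Z)$ and thereby drop the conditioning on $C$, then invoke Lemma~\ref{lemma:rao} for the remaining score identity. The only difference is cosmetic ordering of the first two steps.
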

\begin{proof}
Start by observing that the loss part $\left(\loss{Z} - C\right)$ is a function of $Z$ and $C$; it can be moved outside of the conditional expectation:
\begin{equation}
\mathbb E_{Y \mid Z, C} \left[ \left(\loss{Z} - C \right)\nabla_\expparam \logprobnew{Y}{Y ; \expparam} \mid Z, C\right] = \left(\loss{Z} - C \right)\mathbb E_{Y \mid Z, C} \left[\nabla_\expparam \logprobnew{Y}{Y ; \expparam} \mid Z, C\right].
\end{equation}

Next, since $Y$ and $C$ are independent and $Z$ is a deterministic function of $Y$, the whole random vector $(Y, Z)$ is independent from $C$. Given this, one can remove the conditioning on $C$:
\begin{equation}
\left(\loss{Z} - C \right)\mathbb E_{Y \mid Z, C} \left[\nabla_\expparam \logprobnew{Y}{Y ; \expparam} \mid Z, C\right] = \left(\loss{Z} - C \right)\mathbb E_{Y \mid Z} \left[\nabla_\expparam \logprobnew{Y}{Y ; \expparam} \mid Z\right].
\end{equation}

Finally, Lemma~\ref{lemma:rao} states that the remaining conditional expectation is just the $Z$-score:
\begin{equation}
\left(\loss{Z} - C \right)\mathbb E_{Y \mid Z} \left[\nabla_\expparam \logprobnew{Y}{Y ; \expparam} \mid Z\right] = \left(\loss{Z} - C \right)\nabla_\expparam \logprobnew{Z}{Z ; \expparam}.
\end{equation}
\end{proof}

To draw the connection between the above statements and the estimators, defined in Section~\ref{sec:estimation}, we observe that the execution trace variable $\tracerv$ is the deterministic function of algorithm's input, exponential random vector $\exprv$, and the execution trace is defined in such a way, that the discrete random variable $\structrv$ can be recovered from it, i.e. $\structrv$ is a deterministic function of $\tracerv$, which is true for the algorithms we take into consideration (see Theorem~\ref{exp-min-thm}). Given this, we formulate the following lemma:

\begin{lemma}
Let $\exprv$, $\tracerv$ and $\structrv$ be random variables defined as in section \ref{sec:estimation}, i.e. $\exprv$ is the exponential random vector with (multidimensional) parameter $\expparam$, $\tracerv = \tracerv(\exprv)$ is the function of $\exprv$ (execution trace) and $\structrv = \structrv(\tracerv)$ is the function of $\tracerv$ (structured variable). For each of the variables define the corresponding REINFORCE estimator with baseline random variable $C$, independent of $\exprv$:
\begin{align}
g_{\exprv} &= \left(\loss{\structrv} - C \right)\nabla_\expparam \logprobnew{\exprv}{\exprv; \expparam}, \\
g_{\tracerv} &= \left(\loss{\structrv} - C \right)\nabla_\expparam \logprobnew{\tracerv}{\tracerv; \expparam}, \\
g_{\structrv} &= \left(\loss{\structrv} - C \right)\nabla_\expparam \logprobnew{\structrv}{\structrv; \expparam}.
\end{align}
Then, all the defined gradients are unbiased estimates of the true gradient:
\begin{equation}
\mathbb E_{\exprv, C}[g_\exprv] = \mathbb E_{\tracerv, C}[g_\tracerv] = \mathbb E_{\structrv, C}[g_\structrv] = \nabla_\expparam \mathbb E_X \loss{X}.
\end{equation}
With the following inequality between their variances:
\begin{equation}
\var{\structrv, C}{g_\structrv} \leq \var{\tracerv, C}{g_\tracerv} \leq \var{\exprv, C}{g_\exprv}.
\end{equation}
\end{lemma}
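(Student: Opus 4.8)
The plan is to prove unbiasedness first and then obtain the variance chain from two applications of the law of total variance, with Lemma~\ref{lemma:rao} and Corollary~\ref{corollary:rao} acting as the bridge between consecutive estimators.

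\emph{Unbiasedness.} I would begin with $g_\exprv$. Since $\structrv = \structrv(\tracerv(\exprv))$ is a deterministic function of $\exprv$, the log-derivative identity gives $\mathbb E_\exprv[\loss{\structrv}\nabla_\expparam \logprobnew{\exprv}{\exprv;\expparam}] = \nabla_\expparam \mathbb E_\exprv \loss{\structrv} = \nabla_\expparam \mathbb E_\structrv \loss{\structrv}$, and independence of $C$ from $\exprv$ makes the baseline contribution vanish, $\mathbb E_{\exprv,C}[C\nabla_\expparam \logprobnew{\exprv}{\exprv;\expparam}] = \mathbb E[C]\,\mathbb E_\exprv[\nabla_\expparam \logprobnew{\exprv}{\exprv;\expparam}] = 0$; hence $\mathbb E_{\exprv,C}[g_\exprv] = \nabla_\expparam \mathbb E_\structrv \loss{\structrv}$. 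Next I invoke Corollary~\ref{corollary:rao} twice. Applied with $Y = \exprv$, $Z = \tracerv$ and with the coefficient $\loss{\structrv(\cdot)}$ in place of $\loss{\cdot}$ (the corollary's proof only uses that the coefficient is a function of $Z$), it gives $\mathbb E_{\exprv \mid \tracerv, C}[g_\exprv \mid \tracerv, C] = g_\tracerv$; applied with $Y = \tracerv$, $Z = \structrv$ it gives $\mathbb E_{\tracerv \mid \structrv, C}[g_\tracerv \mid \structrv, C] = g_\structrv$. Taking outer expectations and using the tower property propagates unbiasedness from $g_\exprv$ to $g_\tracerv$ and then to $g_\structrv$, so all three equal $\nabla_\expparam \mathbb E_\structrv \loss{\structrv}$.

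\emph{Variance.} Reading $\var{}{\cdot}$ coordinate-wise (equivalently as the trace of the covariance matrix, which is legitimate because the law of total variance is additive over coordinates), I apply the law of total variance to $g_\exprv$ conditioning on $(\tracerv, C)$:
\[
\var{\exprv, C}{g_\exprv} = \mathbb E_{\tracerv, C}\!\left[\var{\exprv \mid \tracerv, C}{g_\exprv}\right] + \var{\tracerv, C}{\mathbb E_{\exprv \mid \tracerv, C}[g_\exprv]}.
\]
By the identity above the inner expectation equals $g_\tracerv$, so the second term is exactly $\var{\tracerv, C}{g_\tracerv}$ while the first term is nonnegative, yielding $\var{\exprv,C}{g_\exprv} \ge \var{\tracerv,C}{g_\tracerv}$. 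Repeating with $g_\tracerv$ conditioned on $(\structrv, C)$ and using $\mathbb E_{\tracerv \mid \structrv, C}[g_\tracerv \mid \structrv, C] = g_\structrv$ gives $\var{\tracerv,C}{g_\tracerv} \ge \var{\structrv,C}{g_\structrv}$, and chaining the two inequalities finishes the proof.

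\emph{Main obstacle.} The calculations are routine once the conditional-expectation identities hold, so the only points needing care are: (i) specifying what the variance of a vector-valued estimator means so that the law of total variance is applicable; (ii) keeping the nested conditionings consistent, in particular that $C$ is carried as an independent coordinate so that conditioning on $(\tracerv, C)$ and $(\structrv, C)$ collapses to conditioning on $\tracerv$ and $\structrv$ respectively inside the relevant expectations --- this is exactly what Corollary~\ref{corollary:rao} supplies; and (iii) the minor bookkeeping that for the step from $\exprv$ to $\tracerv$ the corollary must be read with the composite coefficient $\loss{\structrv(\cdot)}$. Validity of the log-derivative trick for the conditional distributions in play is assumed, as noted earlier.
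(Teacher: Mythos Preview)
Your proposal is correct and follows essentially the same route as the paper: establish unbiasedness of $g_\exprv$ directly via the log-derivative trick and independence of $C$, then use Corollary~\ref{corollary:rao} twice to obtain the conditional-expectation identities $\mathbb E_{\exprv\mid\tracerv,C}[g_\exprv\mid\tracerv,C]=g_\tracerv$ and $\mathbb E_{\tracerv\mid\structrv,C}[g_\tracerv\mid\structrv,C]=g_\structrv$, propagate unbiasedness by the tower property, and derive the variance ordering from two applications of the law of total variance. Your extra remarks on reading the variance coordinate-wise and on applying the corollary with the composite coefficient $\loss{\structrv(\cdot)}$ are points the paper leaves implicit, so if anything your write-up is slightly more careful.
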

\begin{proof}
We first observe that using change of variables theorem for Lebesgue integral (particularly, law of the unconscious statistician) one can rewrite the true gradient using expectation with respect to $\exprv$ and then apply the log-derivative trick:
\begin{equation}
\nabla_\expparam \mathbb E_{\structrv} \loss{X} = \nabla_\expparam \mathbb E_{\exprv} \loss{X} = \mathbb E_{\exprv} \loss{X} \nabla_\expparam \logprobnew{\exprv}{\exprv; \expparam}.
\end{equation}

Since the baseline $C$ is independent from $\exprv$, subtracting it from the loss does not change the expectation:
\begin{equation}
\mathbb E_{\exprv, C} C \cdot \nabla_\expparam \logprobnew{\exprv}{\exprv; \expparam} = \mathbb E_{C} C \cdot \mathbb E_{\exprv}\nabla_\expparam \logprobnew{\exprv}{\exprv; \expparam} = \mathbb E_{C} C \cdot \nabla_\expparam \mathbb E_{\exprv} 1 = 0.
\end{equation}
Thus, gradient estimator with respect to $\exprv$ is unbiased:
\begin{equation}
\mathbb E_{\exprv, C}[g_\exprv] = \mathbb E_{\exprv, C} \left(\loss{X} - C \right) \nabla_\expparam \logprobnew{\exprv}{\exprv; \expparam} = \mathbb E_{\exprv} \loss{X} \nabla_\expparam \logprobnew{\exprv}{\exprv; \expparam} = \nabla_\expparam \mathbb E_{\structrv} \loss{X}.
\end{equation}
Next, we use the fact that $\tracerv$ is the function of $\exprv$ and $\structrv$ is the function of $\tracerv$. Corollary \ref{corollary:rao} then states that the estimates $g_\tracerv$ and $g_\structrv$ can be obtained by applying two sequential conditional marginalizations:
\begin{equation}
g_\tracerv = \mathbb E_{\exprv \mid \tracerv, C} \left[g_\exprv \mid \tracerv, C \right], \:\:g_\structrv = \mathbb E_{\tracerv \mid \structrv, C} \left[g_\tracerv \mid \structrv, C \right].
\end{equation}
Since the conditional expectation preserves the expectation, the above two gradients are also unbiased:
\begin{equation}
\mathbb E_{\tracerv, C}\left[g_\tracerv\right] = \mathbb E_{\structrv, C}\left[g_\structrv\right] = \nabla_\expparam \mathbb E_{\structrv} \loss{\structrv}
\end{equation}
Finally, conditional expectation reduces variance. For the trace random variable:
\begin{align*}
\var{\exprv, C}{g_\exprv} &= \mathbb E_{\tracerv, C} \var{\exprv \mid \tracerv, C}{g_\exprv \mid \tracerv, C} + \var{\tracerv, C}{\mathbb E_{\exprv \mid \tracerv, C} \left[g_\exprv \mid \tracerv, C \right]} = \\
&= \mathbb E_{\tracerv, C} \var{\exprv \mid \tracerv, C}{g_\exprv \mid \tracerv, C} + \var{\tracerv, C}{g_\tracerv}
\end{align*}

Observing that the conditional variance $\var{\exprv \mid \tracerv, C}{g_\exprv \mid \tracerv, C}$ is non-negative, we get:
\begin{equation}
\var{\exprv, C}{g_\exprv} \geq \var{\tracerv, C}{g_\tracerv}.
\end{equation}
Applying the same reasoning for $g_\structrv$, we obtain:
\begin{equation}
\var{\tracerv, C}{g_\tracerv} \geq \var{\structrv, C}{g_\structrv}.
\end{equation}
\end{proof}

\subsection{Distributions of \texorpdfstring{$\tracerv$}{Lg} and \texorpdfstring{$\exprv \mid \tracerv$}{Lg}}
Below we prove the main claim of this work. To simplify the argument, we will introduce an additional notation to denote the variables on various recursion depths. We will assume that the first level of recursion has index $j=1$. We will denote the input variables as $\exprv^1, K^1, R^1$ and as $\exprv, K, R$ interchangeeably.

Similarly, we will add the depth index to the partition, the trace variables and the updated exponential variables
\begin{align}
P^j_1, \dots, P^j_{m_j} &:= f_{\text{split}}(K^j, R^j) \\
\tracerv^j_i &:= \arg\min_{k \in P^j_i} \exprv^j_k \\
\exprv^{j+1}_k &:=
\begin{cases} 
\exprv^j_k - \exprv^j_{\tracerv^j_{i(k, j)}}, \:\:\text{if} \:\: k \in K^{j}\\
\exprv^j_k, \:\:\text{otherwise},
\end{cases}
\end{align}
where the function $i(k, j)$ returns the index of the partition set containing key $k$ on depth $j$. To update the index of the auxiliary variables $K$ and $R$ we use the $f_{\text{map}}(\cdot)$ subroutine:
\begin{equation}
K^{j+1}, R^{j+1} := f_{\text{map}}(K^j, R^j, \{\tracerv^j_i\}_{i=1}^{m_j}).
\end{equation}
Finally, we will use $\expparam_{k}^{j}$ to denote the updated parameter $\expparam^1_k := \expparam_{k}$ after transformations at the depth $j - 1$. It is equal to the $\expparam_{k}$, if $k$ was not argminimum at all recursion depths from $1$ to $j - 1$ or is equal to $+\infty$ otherwise. Given the above notation we formulate
\begin{theorem}
\label{exp-min-thm}
Let~$\exprv$ be a set of exponential random variables~$\exprv_k \sim \operatorname{Exp}{(\expparam_k)}$ indexed by~$k \in K$, let $\tracerv$ be the trace and $\structrv$ be the output of Algorithm~\ref{alg:general} respectively. If the subroutine $f_{\text{stop}}$ is defined in such a way that
$f_{\text{stop}}\left(\emptyset, R\right) = 1$ for all possible auxiliary variables $R$, then:
\begin{enumerate}
\item The output $\structrv$ is determined by the trace~$\tracerv$ and the auxiliary variables $K, R$: $X(E, T, K, R) = X(T, K, R)$;\footnote{In the paper we omit the dependence on $K$ and $R$ since these are typically fixed.}
\item The trace~$\tracerv$ is a sequence of categorical latent variables $\tracerv_{i}^{j}$. Each $\tracerv_{i}^{j}$ is defined at the recursion level $j$ and is either deterministic or has (conditional) outcome probabilities proportional to $\expparam_k$ for $k \in P_{i}^{j}$;
\item The elements of the conditional distribution $\exprv \mid \tracerv$ are distributed as a sum of exponential random variables
\begin{equation}
\exprv_k \mid T \sim \sum_{j = 1}^{N(k)} \operatorname{Exp}{\left(\sum_{k' \in P_{i(k, j)}^{j}} \expparam_{k'}^{j}\right)} + \operatorname{Exp}{\left(\expparam_{k}^{N(k) + 1}\right)},
\end{equation}
where $N(k)$ is the deepest recursion level $j$ such that the index $k$ is contained in the index set $K^j$.
\end{enumerate}
\end{theorem}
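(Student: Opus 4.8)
The plan is to prove all three claims simultaneously by induction on the recursion depth, since they are tightly coupled: the distributional statement about $\exprv \mid \tracerv$ at one level is exactly what licenses applying Lemma~\ref{lemma:exp-min-appendix} at the next level. The base case is the termination condition: when $f_{\text{stop}}(K,R)=1$ the algorithm returns immediately, so $\structrv$ is a (trivial) function of $K,R$ alone, the trace is empty, and there is nothing to say about the conditional distribution. The hypothesis $f_{\text{stop}}(\emptyset,R)=1$ guarantees the recursion actually bottoms out, since $f_{\text{map}}$ strictly shrinks $K$ ($K' \subsetneq K$), so after finitely many steps $K$ is empty and we are in the base case; this also makes the induction well-founded when phrased as induction on $|K|$.

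For the inductive step, I would fix an input $(\exprv, K, R)$ with $\exprv_k \sim \operatorname{Exp}(\expparam_k)$ independent, and assume the theorem holds for every call with a strictly smaller key set. First I would handle claim~1: the algorithm accesses $\exprv$ only through the $m$ arg-min operations producing $\tracerv_1,\dots,\tracerv_m$, and the subtraction step; the subsequent quantities $K' = $ first output of $f_{\text{map}}(K,R,\{\tracerv_i\})$, $R'$, and the final return value via $f_{\text{combine}}(\structrv', K, R, \{\tracerv_i\})$ depend on $\exprv$ only through $\{\tracerv_i\}$ and the recursive output $\structrv'$. By the inductive hypothesis $\structrv' = \structrv'(\tracerv^{>1}, K', R')$, and $K', R'$ are themselves functions of $\tracerv^1 := \{\tracerv_i\}_{i=1}^m$ and $K,R$; composing, $\structrv$ is a function of $\tracerv = (\tracerv^1, \tracerv^{>1})$ and $K,R$ only. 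Claim~2 at the current level: applying the first part of Lemma~\ref{lemma:exp-min-appendix} to each partition block $P_i$ separately (the blocks are disjoint and $\exprv$ is independent, so the $m$ arg-mins over the blocks are mutually independent), each $\tracerv_i$ is categorical with $\prob{\tracerv_i = k} \propto \expparam_k^{}$ for $k \in P_i$ (deterministic if $|P_i| = 1$); the claim for deeper levels follows from the inductive hypothesis applied to the recursive call, once we know its input satisfies the exponential hypothesis — which is the content of the next step.

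The crux is establishing that the recursive call receives a valid input, i.e. that $\exprv' = \{\exprv'_k \mid k \in K'\}$ is, conditionally on $\tracerv^1$, a family of independent exponentials, and simultaneously tracking the conditional law of the \emph{discarded} coordinates to assemble claim~3. The second part of Lemma~\ref{lemma:exp-min-appendix}, applied blockwise, gives that conditionally on $\tracerv^1$ the variables $\{\exprv'_k = \exprv_k - \exprv_{\tracerv_{i(k)}} \mid k \in K\}$ are mutually independent, with $\exprv'_k \sim \operatorname{Exp}(\expparam_k)$ for $k \ne \tracerv_{i(k)}$ and $\exprv'_k = 0$ (i.e. $\operatorname{Exp}(\infty)$, matching the update $\expparam^{2}_{\tracerv^1_i} = +\infty$) for the arg-min indices. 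Restricting to $K' \subsetneq K$ preserves independence and the exponential marginals, so the recursive call's input distribution matches the theorem's hypothesis with rates $\expparam'_k = \expparam^{2}_k$ — this is the stochastic invariant. Now for claim~3 I would unwind the recursion: a coordinate $k$ that survives to depth $j$ has, at that level, been shifted by the minimum of its block, and Lemma~\ref{lemma:exp-min-appendix} tells us that minimum is itself $\operatorname{Exp}(\sum_{k' \in P^j_{i(k,j)}} \expparam^j_{k'})$, independent of everything conditioned on. Writing the original $\exprv_k$ as the telescoping sum of the successive minima of its blocks over levels $1,\dots,N(k)$ plus the final residual $\exprv^{N(k)+1}_k \sim \operatorname{Exp}(\expparam^{N(k)+1}_k)$, and invoking the conditional independence at each level plus the inductive hypothesis for the residuals, yields exactly the claimed representation $\exprv_k \mid \tracerv \sim \sum_{j=1}^{N(k)} \operatorname{Exp}(\sum_{k' \in P^j_{i(k,j)}} \expparam^j_{k'}) + \operatorname{Exp}(\expparam^{N(k)+1}_k)$. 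The main bookkeeping obstacle is making the telescoping precise across variable partition sizes $m_j$ and correctly identifying, for each $k$, the level $N(k)$ at which it leaves $K$ and the block $P^j_{i(k,j)}$ it belongs to at each intermediate level — the probabilistic content is entirely Lemma~\ref{lemma:exp-min-appendix} applied blockwise, and the work is in the indexing.
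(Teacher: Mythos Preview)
Your proposal is correct and follows essentially the same approach as the paper: termination via $K'\subsetneq K$ and $f_{\text{stop}}(\emptyset,R)=1$, claim~1 by induction tracing how each subroutine depends on $\exprv$ only through the arg-mins, and claims~2--3 by repeated blockwise application of Lemma~\ref{lemma:exp-min-appendix} together with the telescoping identity $\exprv_k = \exprv^{N(k)+1}_k + \sum_{j=1}^{N(k)} \exprv^{j}_{\tracerv^{j}_{i(k,j)}}$. The only cosmetic difference is that the paper carries out claims~2--3 by explicitly unrolling and factoring the joint density of $(\exprv,\tracerv)$ level by level, whereas you package the same computation as an induction using the distributional conclusions of the lemma directly; both routes yield the same factorization.
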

\begin{proof}
First, we note that the recursion depth of Algorithm~\ref{alg:general} is limited. Indeed, when we make a recursive call, the second argument of $F_\text{struct}$ is the index set $K'$, which, by construction, is a strict subset of the finite index set $K$ from the previous call of the function. It means that either $f_\text{stop}(K, R)$ is true at some point, or there exists a stage where we call $F_\text{struct}$ with $K = \emptyset$. By the assumption, the stop condition is true for the empty set $f_\text{stop}(\emptyset, R) = 1$, which means that we always reach the last level of recursion. Next, we will prove the first claim of the theorem by induction on the recursion depth.

We start by considering an input~$\exprv^1, K^1, R^1$ for which the algorithm has recursion depth $N=1$. For a single recursion layer Algorithm~\ref{alg:general} checks the stop condition and halts. The stop condition is a function of $K^1$ and $R^1$, therefore the output does not depend on $\exprv$. When an input leads to an arbitrary recursion depth $N$, the algorithm returns $f_{\text{combine}}(\structrv^2, K^1, R^1, \tracerv^1)$. We will argue that all of the arguments of the function depend on $\exprv^1$ only through $\tracerv$, therefore the algorithm output can be represented as a function of $\tracerv$ rather than $\exprv^1$.

By the induction hypothesis, the intermediate structure~$\structrv^2 = F_{\text{struct}}(\exprv^2, K^2, R^2)$ is determined by~$\tracerv^{>1}$, $K^2$ and $R^2$. To obtain $K^2,R^2 = f_{\text{map}}(K^1, R^1, \tracerv^1)$ we do not take $\exprv^1$ as an input, therefore $\structrv^2$ is determied by~$\tracerv^1$. The arguments $K^1$ and $R^1$ of $f_{\text{combine}}$ also do not depend on $\exprv^1$. The last argument~$\tracerv^1$ depends on $\exprv^1$, but for a given $\tracerv^1$ the variability in $\exprv^1$ does not change the output of $f_{\text{combine}}$. Therefore, $\structrv^1$ is determined by the trace~$\tracerv$ and the auxiliary inputs $K^1$ and $R^1$.

To prove the second and the third claim, we repeat the derivation of Lemma~\ref{lemma:exp-min-appendix} for the joint density of $\exprv$ and $\tracerv$. We denote the realizations of $\tracerv$ and $\exprv$ with lower-case letters. Here, variables of the form $\expreal_k^j$ are defined analogously to $\exprv_k^j$, but for the corresponding realizations. Recall that $\tracerv = \{\tracerv_i^j\}_{i,j}$ is the concatenation of trace variables $\tracerv_1^j, \dots, \tracerv^j_{m_j}$ for all recursion depths $j=1, \dots, k$. We regroup the joint density using the partition $P^1_1, \dots P^1_{m_1}$ of $K = \sqcup P^1_i$ and splitting the indicator, corresponding to the conditional distribution $\tracerv \mid \exprv$, into two indicators, corresponding to the first level of recursion and to the remaining ones respectively:
\begin{align}
\mathbb I(\tracereal = \tracerv(\expreal)) \prod_{k \in K} \operatorname{Exp}{\left(\expreal_k \mid \expparam_k \right)}
&= \mathbb I(\tracereal = \tracerv(\expreal)) \prod_{k \in K} \expparam_k \expreal^{-\expparam_k \expreal_k}\\
&= \mathbb I(\tracereal^{>1} = \tracerv^{>1}(\expreal)) \cdot \mathbb I(\tracereal^1 = \tracerv^1(\expreal)) \cdot \prod_{k \in K} \expparam_k \expreal^{-\expparam_k \expreal_k} = \\
&= \mathbb I(\tracereal^{>1} = \tracerv^{>1}(\expreal)) \cdot \prod\limits_{i=1}^{m_1} \mathbb \prod_{k \in P^1_i} \expparam_k \expreal^{-\expparam_k \expreal_k} \mathbb I(\expreal_{\tracereal^1_i} \leq \expreal_k).
\end{align}

Then we apply Lemma~\ref{lemma:exp-min-appendix} for each $i=1,\dots,m_1$ and rewrite the internal product $\prod_{k\in P^1_i}$ as
\begin{equation}
\label{eq:exp-min-trick-thm}
\frac{\expparam_{\tracereal^1_i}}{\sum_{k \in P^1_i} \expparam_k} \cdot
\left( \sum_{k \in P^1_i} \expparam_k \right) \expreal^{-\left(\sum_{k \in P_i} \expparam_k\right) \expreal_{\tracereal^1_i}} \cdot
\prod_{k \in P^1_i } \left( \expparam^2_k \expreal^{-\expparam^2_k \expreal^2_k} \mathbb I(0 \leq \expreal^2_k) \right),
\end{equation}
where $\expreal^2_k$ is the realization of the random variable $\exprv^2_k := \exprv_k - \exprv_{\tracerv^1_i}$ and $\expparam^2_k := \expparam_k$ for $k \neq \tracereal^1_i$ and $\expparam^2_{\tracereal^1_i} := +\infty$.

To improve readability, we rewrite the same product as
\begin{equation}
\frac{\expparam_{\tracereal^1_i}}{\sum_{k \in P^1_i} \expparam_k} \cdot
\operatorname{Exp}{\left(\expreal_{\tracereal_i^1} \Big| \sum\limits_{k \in P^1_i}\expparam_k\right)} \cdot
\prod_{k \in P^1_i }\operatorname{Exp}{\left(\expreal^2_k \mid \expparam^2_k \right)}
\end{equation}

and substitute it into the overall joint density:
\begin{equation}
\mathbb I(\tracereal^{>1} = \tracerv^{>1}(\expreal)) \cdot \prod_{i=1}^{m_1} 
\frac{\expparam_{\tracereal^1_i}}{\sum_{k \in P^1_i} \expparam_k} \cdot
\operatorname{Exp}{\left(\expreal_{\tracereal_i^1} \Big| \sum\limits_{k \in P^1_i}\expparam_k\right)} \cdot
\prod_{k \in P^1_i }\operatorname{Exp}{\left(\expreal^2_k \mid \expparam^2_k \right)}.
\end{equation}

Since the execution trace variables at depths $> 1$ are determined by the transformed values $\expreal^2 = \{\expreal^2_k \mid k \in K\}$, the indicator can be rewritten as $\mathbb I(\tracereal^{>1} = \tracerv^{>1}(\expreal^2))$. At the same time, $\prod_{i=1}^{m_1}\prod_{k \in P^1_i }\operatorname{Exp}{\left(\expreal^2_k \mid \expparam^2_k \right)}$ can be rewritten as just $\prod\limits_{k \in K}\operatorname{Exp}{\left(\expreal^2_k \mid \expparam^2_k \right)}$, since $P^1_1, \dots P^1_{m_1}$ is the partition of $K$. Given this, we rewrite the overall joint density one more time as
\begin{equation}
\prod_{i=1}^{m_1}\left[
\frac{\expparam_{\tracereal^1_i}}{\sum_{k \in P^1_i} \expparam_k} \cdot
\operatorname{Exp}{\left(\expreal_{\tracereal_i^1} \Big| \sum\limits_{k \in P^1_i}\expparam_k\right)}\right] \cdot
\mathbb I(\tracereal^{>1} = \tracerv^{>1}(\expreal^2)) \cdot
\prod\limits_{k \in K}\operatorname{Exp}{\left(\expreal^2_k \mid \expparam^2_k \right)},
\end{equation}
where the last two terms have the same form as the joint density, written in the beginning, but for the execution trace variables $\tracerv^{>1}$ and transformed exponential variables $\exprv^2$. 

We use this observation and apply the same transformations to the density
\begin{equation}
\mathbb I(\tracereal^{>1} = \tracerv^{>1}(\expreal^2)) \cdot \prod\limits_{k \in K}\operatorname{Exp}{\left(\expreal^2_k \mid \expparam^2_k \right)}
\end{equation}

based on the partition $P^j_1, \dots, P^j_{m_j}$ of $K^j \subset K$ for the next recursion steps. We apply the transformations until we reach the bottom of the recursion. By design, the algorithm excludes some of the indices from consideration $K^{j+1} \subsetneq K^{j} \subseteq K$. According to our notation, the variables excluded from the index set on a certain depth $j$ stay unchanged along with parameters, i.e. $\exprv^j_k = \exprv^{j\text{’}}_k$, $\expparam^j_k = \expparam^{j\text{’}}_k$ for $j\text{’} \geq j$. Such notation allows to preserve the product across all keys $\prod_{k \in K} \operatorname{Exp}(\expreal^{j}_k \mid \expparam^j_k)$ throughout the recursion.

After performing all transformations at recursion depths $j$ from $2$ to $N$ we arrive at the following representation of the joint density:
\begin{equation}
\prod\limits_{j = 1}^{N}\prod_{i=1}^{m_j}\left[
\frac{\expparam^j_{\tracereal^j_i}}{\sum_{k \in P^j_i} \expparam^j_k} \cdot
\operatorname{Exp}{\left(\expreal^j_{\tracereal_i^j} \Big| \sum\limits_{k \in P^j_i}\expparam^j_k\right)}\right] \cdot
\prod\limits_{k \in K}\operatorname{Exp}{\left(\expreal^{N + 1}_k \mid \expparam^{N + 1}_k \right)}.
\end{equation}

For each $k$ we observe one more time that $\expreal_k^{j}$ and $\expparam_k^{j}$ do not change after $j = N(k) + 1$, since $k$ is excluded from all the corresponding $K^j$. Given this, we rewrite the latter product as
\begin{equation}
\prod\limits_{k \in K}\operatorname{Exp}{\left(\expreal^{N(k) + 1}_k \mid \expparam^{N(k) + 1}_k \right)}.
\end{equation}

Finally, we recursively apply the definition of $\expreal^j_k$ to represent it as a function of the initial variable $\expreal^1_k = \expreal_k$ and the set of minima $\expreal^{j'}_{\tracereal_{i(k, j')}^{j'}}$, obtained at recursion depths $j'$ from $1$ to $j - 1$. For each depth $j$:
\begin{equation}
\expreal^{j + 1}_k = \expreal^{j}_k - \expreal^{j}_{\tracereal^{j}_{i(k, j)}} = \expreal^{j - 1}_k - \expreal^{j}_{\tracereal^{j}_{i(k, j)}} - \expreal^{j - 1}_{\tracereal^{j - 1}_{i(k, j - 1)}} = \ldots = \expreal_k - \sum\limits_{j' = 1}^{j}\expreal^{j'}_{\tracereal^{j'}_{i(k, j')}}.
\end{equation}

Applying the same observation for $j = N(k)$, we obtain:
\begin{equation}
\expreal^{N(k) + 1}_k = \expreal_k - \sum\limits_{j = 1}^{N(k)}\expreal^{j}_{\tracereal^{j}_{i(k, j)}},
\end{equation}
which leads to the final representation of the joint density:


\begin{equation}
\underbrace{\left[\prod\limits_{j = 1}^{N}\prod_{i=1}^{m_j}
\frac{\expparam^j_{\tracereal^j_i}}{\sum_{k \in P^j_i} \expparam^j_k}\right]}_{\probnew{\tracerv}{\tracereal ; \expparam}} \cdot
\underbrace{\left[\prod\limits_{j = 1}^{N}\prod_{i=1}^{m_j} \operatorname{Exp}{\left(\expreal^j_{\tracereal_i^j} \Big| \sum\limits_{k \in P^j_i}\expparam^j_k\right)}\right]}_{\probnew{\exprv_\tracerv \mid \tracerv}{\expreal_\tracereal \mid \tracereal ; \expparam}} \cdot
\underbrace{\prod\limits_{k \in K}\operatorname{Exp}{\left(\expreal_k - \sum\limits_{j = 1}^{N(k)}\expreal^{j}_{\tracereal^{j}_{i(k, j)}} \Bigg | \expparam^{N(k) + 1}_k \right)}}_{\probnew{\exprv \mid \tracerv, \exprv_\tracerv}{\expreal \mid \tracereal, \expreal_\tracereal ; \expparam}}.
\end{equation}

This representation defines the following generation process:
\begin{itemize}
\item First, the trace variables (argminima) are generated from $\probnew{\tracerv}{\tracereal ; \expparam}$, the marginal probability of $\tracerv$, represented as the product of conditional probabilities of $\tracerv^j_i$;
\item Second, the corresponding minima for all partition indices $i$ at all recursion depths $j$ are sampled from $\probnew{\exprv_\tracerv \mid \tracerv}{\expreal_\tracereal \mid \tracereal ; \expparam}$;
\item Finally, the set of exponential random variables $\exprv$ is obtained by sampling from $\probnew{\exprv \mid \tracerv, \exprv_\tracerv}{\expreal \mid \tracereal, \expreal_\tracereal ; \expparam}$. All the realizations $\expreal_k$ here come from the exponential distribution with parameter $\expparam_k^{N(k) + 1}$, shifted at the value $\sum\limits_{j = 1}^{N(k)}\expreal^{j}_{\tracereal^{j}_{i(k, j)}}$.
\end{itemize}

Note that we have started from the joint distribution on $\exprv, \tracerv$ and come to the joint distribution on $\tracerv, \exprv_\tracerv, \exprv$. We obtained larger set of variables, however, as initially, only $|K|$ of them are non-degenerate. This comes from the definition of $\expparam^{j}_k$ and the observation that at each step of taking minimum we either find a constant zero, which does not change the number of non-constant variables, or find a non-degenerate value, introduce a new (non-degenerate) exponential variable, corresponding to the minimum, and replace the corresponding $\expparam^{j}_k$ with $+\infty$. The latter corresponds to setting one of the variables to be constant, thus, the overall number of non-degenerate distributions does not change when we perform the above transformations with density.

The first item above proves the claim about the distribution of trace variables. The second tells that each minimum realization $\expreal^{j}_{\tracereal^j_{i(k, j)}}$ comes from the distribution $\operatorname{Exp}{\left(\sum\limits_{k' \in P^j_{i(k, j)}}\expparam^j_{k'}\right)}$. Combined with the third one, it proves that the conditional distribution of each $\exprv_k$ is the sum of the corresponding exponential distributions, claimed in the thorem:
\begin{equation}
\exprv_k \mid T \sim \sum_{j = 1}^{N(k)} \operatorname{Exp}{\left(\sum_{k' \in P_{i(k, j)}^{j}} \expparam_{k'}^{j}\right)} + \operatorname{Exp}{\left(\expparam_{k}^{N(k) + 1}\right)}.
\end{equation}


\end{proof}

Based on the above derivation, we propose a procedure to compute the log-probability of the trace and to draw the conditional sample. 

To compute the log-probability, we compute the log-probabilites of the top trace level $\{\tracereal^1_i\}_i$ as in Eq.~\ref{eq:exp-min-trick-thm}. Then we repeat the exp-min trick as in the above derivation and repeat the procedure. Assume the induction hypothesis that the procedure computes the log-prob of the rest of the trace. Then, by induction, we obtain the log-prob of the whole trace as a sum of the log-prob of the top trace $\{\tracereal^1_i\}_i$ and the rest of the trace $\{\tracereal^{>1}\}_j$. 

Similarly, assume we have a procedure to draw $\expreal_k', k \in K'$. At the bottom of the recursion $\expreal_k'$ are just exponential random variables. For the induction step, we draw $\exprv_k$ for $k \notin K \setminus K'$ and definee $\expreal_k := \expreal_k' + \expreal_{\tracereal_i}$, $k \in P_i$. In the next section, we provide the pseudo-code for the two procedures.

\begin{algorithm}[tbh]
\caption{$F_{\text{struct}}(\exprv, K, R)$ - returns structured variable $\structrv$ based on utilities $\exprv$ and auxiliary variables $K$ and $R$}
\begin{algorithmic}
    \REQUIRE $\exprv, K, R$
    \ENSURE $\structrv$
    \IF{$f_{\text{stop}}(K, R)$}
    \STATE {\bf return}
    \ENDIF
    \STATE $P_1, \dots, P_m \Leftarrow f_{\text{split}}(K, R)$ \hfill \COMMENT{$\sqcup_{i=1}^m P_i = K$}
    \FOR{$i=1$ to $m$}
    \STATE $\tracerv_i \Leftarrow \arg\min_{k \in P_i} \exprv_k$
    \FOR{$k \in P_i$}
    \STATE $\exprv_k' \Leftarrow \exprv_k - \exprv_{\tracerv_i}$
    \ENDFOR
    \ENDFOR
    \STATE $K', R' \Leftarrow f_{\text{map}}(K, R, \{\tracerv_i\}_{i=1}^m)$ \hfill \COMMENT{$K' \subsetneq K$}
    \STATE $E' \Leftarrow \{E_k' \mid k \in K'\}$
    \STATE $\structrv' \Leftarrow F_{\text{struct}}(E', K', R')$ \hfill \COMMENT{Recursive call}
    \STATE {\bf return} $f_{\text{combine}}(\structrv', K, R, \{\tracerv_i\}_{i=1}^m)$
\end{algorithmic}
\end{algorithm}

\begin{algorithm}[ht]
\caption{$F_{\text{log-prob}}(\tracereal, \expparam, K, R)$ - returns $\logprobnew{\tracerv}{\tracereal; \expparam}$ for trace $\tracereal$, rates $\expparam$, $K$ and $R$ as in Alg.~\ref{alg:general}}
\label{alg:log-prob-appendix}
\begin{algorithmic}
    \REQUIRE $\tracereal, \expparam, K, R$
    \ENSURE $\logprobnew{\tracerv}{\tracereal; \expparam}$
    \IF{$f_{\text{stop}}(K, R)$}
    \STATE {\bf return}
    \ENDIF
    \STATE $P_1, \dots, P_m \Leftarrow f_{\text{split}}(K, R)$ 
    \FOR{$i=1$ to $m$}
    \STATE $\logprobnew{\tracerv}{\tracereal^1_i; \expparam} \Leftarrow \log \expparam_{\tracereal^1_i} - \log \left(\sum_{k \in P_i} \expparam_k \right)$ \hfill \COMMENT{Index $j$ in $T^j_i$ denotes the recursion level}
    \FOR{$k \in P_i \setminus \{\tracereal^1_i\}$ }
    \STATE $\expparam'_{k} \Leftarrow \expparam_k$
    \ENDFOR
    \STATE $\expparam'_{\tracereal^1_i} \Leftarrow +\infty$ \hfill \COMMENT{Because $\exprv'(\tracereal^1_i) = 0$}
    \ENDFOR
    \STATE $K', R' \Leftarrow f_{\text{map}}(K, R, \{\tracereal^1_i\}_{i=1}^m)$
    \STATE $\expparam' \Leftarrow \{ \expparam_k' \mid k \in K'\}$
    \STATE $\logprobnew{\tracerv}{\tracereal^{>1} \mid \tracerv^{1} = \tracereal^{1}; \expparam} \Leftarrow F_{\text{log-prob}}(\tracereal^{>1}, \expparam', K', R')$ \hfill \COMMENT{Compute log-prob of $\tracereal^{>1} := \{\tracereal^j_i\}_{j > 1}$}
    \STATE {\bf return} $ \sum_{i=1}^m \logprobnew{\tracerv}{\tracereal^1_i; \expparam} + \logprobnew{\tracerv}{\tracereal^{>1} \mid \tracerv^1 = \tracereal^1 ; \expparam}$
\end{algorithmic}
\end{algorithm}

\begin{algorithm}[t]
\caption{$F_{\text{cond}}(\tracereal, \expparam, K, R)$ - returns a utility sample from $\exprv \mid \tracerv = \tracereal, \expparam$ with rates $\expparam$ conditioned on the execution trace $\tracereal = \{ \tracereal^j_i \}_{ij}$}
\label{alg:conditional}
\begin{algorithmic}
    \REQUIRE $\tracereal, \expparam, K, R$
    \ENSURE $\expreal$
    \IF{$f_{\text{stop}}(K, R)$}
    \STATE {\bf return}
    \ENDIF
    \STATE $P_1, \dots, P_m \Leftarrow f_{\text{split}}(K, R)$ 
    \FOR{$i=1$ to $m$}
    \STATE $\expreal_{\tracereal^1_i} \sim \operatorname{Exp}{(\sum_{k \in P_i} \expparam_k)}$ \hfill \COMMENT{Sample the $\min$}
    \FOR{$k \in P_i \setminus \{\tracereal^1_i\}$ }
    \STATE $\expparam'_{k} \Leftarrow \expparam_k$
    \ENDFOR
    \STATE $\expparam'_{\tracereal^1_i} \Leftarrow +\infty$ \hfill \COMMENT{Because $\expreal_{\tracereal^1_i}' = 0$}
    \ENDFOR
    \STATE $K', R' \Leftarrow f_{\text{map}}(K, R, \{\tracereal^1_i\}_{i=1}^m)$
    \STATE $\expparam' \Leftarrow \{\expparam_k \mid k \in K'\}$
    \STATE $\expreal' \Leftarrow F_{\text{cond}}(\tracereal^{>1}, \expparam', K', R')$ \hfill \COMMENT{Recursion, returns random variables indexed with $K'$}
    \FOR{$k \in K \setminus K'$}
    \STATE $\expreal_k' \sim \operatorname{Exp}{(\expparam_k')}$ 
    \hfill \COMMENT{Sample the rest of the utilities}
    \ENDFOR
    \FOR{$i=1$ to $m$}
    \FOR{$k \in P_i \setminus \{\tracereal^1_i\}$}
    \STATE $\expreal_k \Leftarrow \expreal_k' + \expreal_{\tracereal^1_i}$
    \hfill \COMMENT{Reverse the Exponential-Min trick}
    \ENDFOR
    \ENDFOR
    \STATE {\bf return} $\expreal$
\end{algorithmic}
\end{algorithm}

\newpage

\section{General Algorithms for Log-Probability and Conditional Sampling}
\label{sec:algorithms}
We provide pseudo-code for computing $\logprob{\tracerv; \expparam}$ in Algorithm~\ref{alg:log-prob} and sampling $\exprv \mid \tracerv$ in Algorithm~\ref{alg:conditional}. Both algorithms modify Algorithm~\ref{alg:general} and use the same subroutines $f_{\text{stop}}, f_{\text{split}}, f_{\text{map}},$ and $f_{\text{combine}}$. Algorithms~\ref{alg:log-prob}, \ref{alg:conditional} follow the structure as Algorithm~\ref{alg:general} and have at most linear overhead in time and memory for processing variables such as $\expparam'$ and $\logprob{T^j_i \mid \expparam}$.

The indexed set of exponential random variables $\exprv$ and the indexed set of the random variable parameters $\expparam$ have the same indices of indices $K$, which allows to call subroutines in the same way as in Algorithm~\ref{alg:general}.

Both algorithms take the trace variable $\tracereal = \{\tracereal^j_i\}_{j,i}$ as input. Note that index $j$ enumerate recursion levels. Both algorithms process the trace of the top recursion level $\tracereal^1_1, \dots, \tracereal^1_m$ and make a recursive call to process the subsequent trace~$\tracereal^{>1} := \{\tracereal^j_i\}_{i, j > 1}$.

\section{Implementation Details}
\label{sec:details}
In the paper, we chose the exponential random variables and the recursive form of Algorithm~\ref{alg:general} to simplify the notation. In practice, we parameterized the rate of the exponential distributions as $\expparam = \exp(-\gumbparam)$, where $\gumbparam$ was either a parameter or an output of a neural network. The parameter $\theta$ is essentially the location parameter of the Gumbel distribution and, unlike $\expparam$, can take any value in $\mathbb R$.

Additionally, the recursive form of Algorithm~\ref{alg:general} does not facilitate parallel batch computation. In particular, the recursion depth and the decrease in size of $\exprv$ may be different for different objects in the batch. Therefore, Algorithms~\ref{alg:log-prob},\ref{alg:conditional} may require further optimization.

For the top-k algorithm, we implemented the parallel batch version. To keep the input size the same, we masked the omitted random variables with $+\infty$. We modeled the recursion using an auxiliary tensor dimension.

For the Kruskal's algorithm, we implemented the parallel batch version and used the $+\infty$ masks to preserve the set size. We rewrote the recursion as a Python for loop.

To avoid the computation overhead for the Chu-Liu-Edmonds algorithm, we implemented the algorithms in C++ and processed the batch items one-by-one.

For the binary trees Algorithm \ref{alg:general} was implemented in C++ and processed the batch items one-by-one, while Algorithms \ref{alg:log-prob}, \ref{alg:conditional} utilize efficient parallel implementation.

Also, during optimization using RELAX gradient estimator we observed the following behaviour: sometimes $\exprv \mid \tracerv=t$ generates samples which do not lead to $t$ applying Algorithm \ref{alg:general}. Such behaviour occurs due to the usage of \textit{float} precision and does not show using \textit{double} precision. While it may be considered as a drawback, its worth noting that it occurs very rare (less than 0.1 \% of all conditional samples produced during optimization) and does not affect overall optimization procedure.


\section{Experimental Details}
\label{sec:experiments}

Setting up the experiments with Top-K, Spanning Tree and Arborescence we followed details about data generation, models and training procedures, described by \cite{paulus2020gradient}, to make a valid comparison of the proposed score function methods with Stochastic Softmax Tricks (SSTs). In each experiment we fixed the number of function evaluations $N$ per iteration instead of batch size to make a more accurate comparison in terms of computational resources. With $N$ fixed, RELAX and SST were trained with batch size equal to $N$, while \exprv-REINFORCE+ and \tracerv-REINFORCE+ were trained with batch size $N/K$ and $K$ samples of the latent structure for each object. 

To get rid of the influence of any factors other than efficacy of the gradient estimator we fixed the same random model initialization. Then, for each gradient estimator we chose best model hyperparameter's set with respect to validation task metric (MSE, ELBO, accuracy). Given best model hyperparameter's set we report mean and standard deviations of the metrics across different random model initializations.

\subsection{Top-K and Beer Advocate}
\label{sec:topkexp}

\begin{table*}[t]
\centering
\caption{Results of $k$-subset selection on Appearance aspect data. MSE ($\times 10^{-2}$) and subset precision (\%) is shown for best models selected on validation averaged across different random seeds.}  
\label{table:l2x-app}
\begin{center}
\begin{small}
\begin{sc}
\scalebox{0.85}{
\begin{tabular}{@{}llllllll@{}}
\toprule
\multirow{2.5}{*}{\shortstack[l]{Model}}& \multirow{2.5}{*}{\shortstack[l]{Estimator}} & \multicolumn{2}{c}{$k=5$}  & \multicolumn{2}{c}{$k=10$} & \multicolumn{2}{c}{$k=15$}  \\ \cmidrule(r){3-4} \cmidrule(r){5-6} \cmidrule(r){7-8}
 &  & mean $\pm$ std & Prec. & mean $\pm$ std & Prec. & mean $\pm$ std & Prec. \\
\cmidrule[\heavyrulewidth]{1-8}
\multirow{4.5}{*}{\shortstack[l]{Simple}}
  & \emph{SST (Our Impl.)}
  	& $3.44 \pm 0.13$
  	& $43.3 \pm 4.5$
  	& $3.09 \pm 0.12$
  	& $45.7 \pm 3.6$
  	& $\mathbf{2.67 \pm 0.12}$
  	& $\mathbf{42.1 \pm 1.1}$ \\
  \cmidrule[0.15pt]{2-8}
    & \emph{\exprv-REINFORCE+} 
    & $3.74 \pm 0.11$
    & $38.8 \pm 2.9$
    & $3.46 \pm 0.12$
    & $33.2 \pm 3.6$
    & $3.24 \pm 0.15$
    & $31.2 \pm 3.4$ \\
  & \emph{\tracerv-REINFORCE+}
    & $3.57 \pm 0.11$
    & $\mathbf{48.9 \pm 2.5}$
    & $3.02 \pm 0.11$
    & $\mathbf{47 \pm 4.1}$
    & $2.69 \pm 0.06$
    & $41.6 \pm 2.2$ \\
  & \emph{RELAX} 
  	& $\mathbf{3.36 \pm 0.1}$
  	& $44.2 \pm 3.2$
  	& $\mathbf{3.01 \pm 0.08}$
  	& $42.4 \pm 2.7$
  	& $2.85 \pm 0.09$
  	& $40.7 \pm 1.8$ \\
  \cmidrule[\heavyrulewidth]{1-8}
  \multirow{4.5}{*}{\shortstack[l]{Complex}}
  & \emph{SST (Our Impl.)}
    & $2.96 \pm 1.1$
    & $73.2 \pm 5.3$
    & $2.61 \pm 0.09$
    & $71.9 \pm 3.3$
    & $2.57 \pm 0.08$
    & $65.6 \pm 2.9$ \\
  \cmidrule[0.15pt]{2-8}
    & \emph{\exprv-REINFORCE+} 
    & $3.25 \pm 0.11$
    & $72.9 \pm 6.1$
    & $2.9 \pm 0.19$
    & $63.1 \pm 1$
    & $2.63 \pm 0.13$
    & $63.3 \pm 0.5$ \\
  & \emph{\tracerv-REINFORCE+}
    & $\mathbf{2.65 \pm 0.05}$
    & $\mathbf{82.9 \pm 1.3}$
    & $\mathbf{2.48 \pm 0.05}$
    & $74.5 \pm 3.7$
    & $\mathbf{2.41 \pm 0.03}$
    & $\mathbf{68.3 \pm 2}$ \\
  & \emph{RELAX} 
    & $2.67 \pm 0.06$
    & $81.3 \pm 1.5$
    & $2.54 \pm 0.03$
    & $\mathbf{74.8 \pm 1.3}$
    & $2.51 \pm 0.03$
    & $67.1 \pm 2.1$ \\
\bottomrule
\end{tabular}
}
\end{sc}
\end{small}
\end{center}
\end{table*}
\begin{table*}[t]
\centering
\caption{Results of $k$-subset selection on Taste aspect data. MSE ($\times 10^{-2}$) and subset precision (\%) is shown for best models selected on validation averaged across different random seeds.}  
\label{table:l2x-taste}
\begin{center}
\begin{small}
\begin{sc}
\scalebox{0.85}{
\begin{tabular}{@{}llllllll@{}}
\toprule
\multirow{2.5}{*}{\shortstack[l]{Model}}& \multirow{2.5}{*}{\shortstack[l]{Estimator}} & \multicolumn{2}{c}{$k=5$}  & \multicolumn{2}{c}{$k=10$} & \multicolumn{2}{c}{$k=15$}  \\ \cmidrule(r){3-4} \cmidrule(r){5-6} \cmidrule(r){7-8}
 &  & mean $\pm$ std & Prec. & mean $\pm$ std & Prec. & mean $\pm$ std & Prec. \\
\cmidrule[\heavyrulewidth]{1-8}
\multirow{4.5}{*}{\shortstack[l]{Simple}}
  & \emph{SST (Our Impl.)}
  	& $\mathbf{3.19 \pm 0.16}$
  	& $26.7 \pm 2.5$
  	& $\mathbf{2.93 \pm 0.12}$
  	& $28 \pm 0.9$
  	& $\mathbf{2.89 \pm 0.04}$
  	& $28.7 \pm 1.3$ \\
  \cmidrule[0.15pt]{2-8}
    & \emph{\exprv-REINFORCE+} 
    & $3.6 \pm 0.4$
    & $23.6 \pm 2.6$
    & $3.51 \pm 0.36$
    & $21.4 \pm 2.2$
    & $3.12 \pm 0.16$
    & $24.6 \pm 3.2$\\
  & \emph{\tracerv-REINFORCE+}
    & $3.24 \pm 0.2$
    & $\mathbf{28.5 \pm 2.4}$
    & $3.07 \pm 0.05$
    & $\mathbf{28.5 \pm 1.4}$
    & $2.9 \pm 0.04$
    & $\mathbf{29.2 \pm 3.2}$\\
  & \emph{RELAX} 
  	& $3.26 \pm 0.08$
  	& $24 \pm 3.4$
  	& $3.13 \pm 0.09$
  	& $25.8 \pm 2.1$
  	& $2.95 \pm 0.09$
  	& $24.4 \pm 2.6$ \\
  \cmidrule[\heavyrulewidth]{1-8}
  \multirow{4.5}{*}{\shortstack[l]{Complex}}
  & \emph{SST (Our Impl.)}
    & $2.7 \pm 0.21$
    & $36.2 \pm 3.1$
    & $2.66 \pm 0.19$
    & $36 \pm 5.1$
    & $\mathbf{2.2 \pm 0.02}$
    & $\mathbf{43.2 \pm 1}$ \\
  \cmidrule[0.15pt]{2-8}
    & \emph{\exprv-REINFORCE+} 
    & $3.43 \pm 0.52$
    & $33.2 \pm 4.8$
    & $3.15 \pm 0.33$
    & $33 \pm 4.3$
    & $2.81 \pm 0.16$
    & $39.1 \pm 3$ \\
  & \emph{\tracerv-REINFORCE+}
    & $\mathbf{2.62 \pm 0.2}$
    & $\mathbf{40.2 \pm 2.4}$
    & $\mathbf{2.45 \pm 0.04}$
    & $\mathbf{40.6 \pm 2.6}$
    & $2.43 \pm 0.04$
    & $40.3 \pm 2.3$ \\
  & \emph{RELAX} 
    & $2.78 \pm 0.07$
    & $34.7 \pm 2.5$
    & $2.99 \pm 0.2$
    & $32.1 \pm 3.6$
    & $2.64 \pm 0.04$
    & $33.9 \pm 3.8$ \\
\bottomrule
\end{tabular}
}
\end{sc}
\end{small}
\end{center}
\end{table*}
\begin{table*}[!t]
\centering
\caption{Results of $k$-subset selection on Palate aspect data. MSE ($\times 10^{-2}$) and subset precision (\%) is shown for best models selected on validation averaged across different random seeds.}  
\label{table:l2x-pal}
\begin{center}
\begin{small}
\begin{sc}
\scalebox{0.85}{
\begin{tabular}{@{}llllllll@{}}
\toprule
\multirow{2.5}{*}{\shortstack[l]{Model}}& \multirow{2.5}{*}{\shortstack[l]{Estimator}} & \multicolumn{2}{c}{$k=5$}  & \multicolumn{2}{c}{$k=10$} & \multicolumn{2}{c}{$k=15$}  \\ \cmidrule(r){3-4} \cmidrule(r){5-6} \cmidrule(r){7-8}
 &  & mean $\pm$ std & Prec. & mean $\pm$ std & Prec. & mean $\pm$ std & Prec. \\
\cmidrule[\heavyrulewidth]{1-8}
\multirow{4.5}{*}{\shortstack[l]{Simple}}
  & \emph{SST (Our Impl.)}
  	& $\mathbf{3.63 \pm 0.17}$
  	& $\mathbf{28.1 \pm 2.7}$
  	& $3.37 \pm 0.08$
  	& $25 \pm 1.2$
  	& $3.14 \pm 0.09$
  	& $22.1 \pm 1.3$ \\
  \cmidrule[0.15pt]{2-8}
    & \emph{\exprv-REINFORCE+} 
    & $4.15 \pm 0.22$
    & $21.3 \pm 6.3$
    & $3.79 \pm 0.23$
    & $19.6 \pm 3.1$
    & $3.71 \pm 0.22$
    & $15.8 \pm 2.1$ \\
  & \emph{\tracerv-REINFORCE+}
    & $3.81 \pm 0.2$
    & $26.7 \pm 3.8$
    & $\mathbf{3.33 \pm 0.09}$
    & $\mathbf{26.9 \pm 1.1}$
    & $\mathbf{3.14 \pm 0.07}$
    & $21.6 \pm 1.2$ \\
  & \emph{RELAX} 
  	& $3.79 \pm 0.18$
  	& $26.8 \pm 3.4$
  	& $3.45 \pm 0.11$
  	& $23.6 \pm 1.6$
  	& $3.32 \pm 0.1$
  	& $\mathbf{22.3 \pm 1.2}$ \\
  \cmidrule[\heavyrulewidth]{1-8}
  \multirow{4.5}{*}{\shortstack[l]{Complex}}
  & \emph{SST (Our Impl.)}
    & $2.98 \pm 0.09$
    & $53.6 \pm 1$
    & $\mathbf{2.79 \pm 0.01}$
    & $45 \pm 1.2$
    & $\mathbf{2.75 \pm 0.03}$
    & $37.2 \pm 1.3$ \\
  \cmidrule[0.15pt]{2-8}
    & \emph{\exprv-REINFORCE+} 
    & $3.48 \pm 0.22$
    & $47.3 \pm 5.3$
    & $3.22 \pm 0.2$
    & $39.7 \pm 3$
    & $2.96 \pm 0.06$
    & $36.8 \pm 3.2$ \\
  & \emph{\tracerv-REINFORCE+}
    & $\mathbf{2.92 \pm 0.03}$
    & $\mathbf{56.3 \pm 0.8}$
    & $2.87 \pm 0.03$
    & $\mathbf{47.5 \pm 1.9}$
    & $2.82 \pm 0.06$
    & $\mathbf{40.4 \pm 1.8}$ \\
  & \emph{RELAX} 
    & $3.05 \pm 0.03$
    & $52.6 \pm 1.9$
    & $3.03 \pm 0.09$
    & $42.6 \pm 3.6$
    & $2.86 \pm 0.05$
    & $36.6 \pm 1.2$ \\
\bottomrule
\end{tabular}
}
\end{sc}
\end{small}
\end{center}
\end{table*}

\subsubsection{Data}
As a base, we used the BeerAdvocate \citep{mcauley2012learning} dataset, which consists of beer reviews and ratings for different aspects: Aroma, Taste, Palate and Appearance. In particular, we took its decorrelated subset along with the pretrained embeddings from \cite{lei2016rationalizing}. Every review was cut to $350$ embeddings, aspect ratings were normalized to $[0,1]$.

\subsubsection{Model}
We used the Simple and Complex models defined by \cite{paulus2020gradient} for parameterizing the mask. The Simple model architecture consisted of Dropout (with $p = 0.1$) and a one-layered convolution with one kernel. In the Complex model architecture, two more convolutional layers with 100 filters and kernels of size 3 were added.

\subsubsection{Training}
We trained all models for 10 epochs with $N=100$. We used the same hyperparameters ranges as in \cite{paulus2020gradient}, where it was possible. Hyperparameters for our training procedure were learning rate, final decay factor, weight decay. They were sampled from $\{1, 3, 5, 10, 30, 50, 100\} \times 10^{-4}, \{1, 10, 100, 1000\} \times 10^{-4}, \{0, 1, 10, 100\} \times 10^{-6} $ respectively. We also considered regularizer type for SST as hyperparameter ($\{\text{Euclid., Cat. Ent., Bin. Ent., E.F. Ent.}\}$). For $\exprv$-REINFORCE+ and $\tracerv$-REINFORCE+ number of latent samples for every example in a batch was considered as hyperparameter with range $\{1, 2, 4\}$. We tuned hyperapameters over considered ranges with uniform search with 25 trials. Best model were chosen with respect to best validation MSE.

Results for Appearance aspect can be found in Table~\ref{table:l2x-app}, for Taste aspect in Table~\ref{table:l2x-taste}, for Palate aspect in Table~\ref{table:l2x-pal}. Mean and standard deviations reported in the tables are computed across 16 different random model initializations. In conclusion, we can state that the proposed method is comparable with SST on BeerAdvocate dataset.

\subsection{Spanning Tree and Graph Layout}
\label{sec:graphlayoutexp}

\begin{table*}[!h]
\centering
\caption{Graph Layout experiment results for T=20 iterations. Metrics are obtained by choosing models with best validation ELBO and averaging results across different random seeds on the test set.}
\label{nri2-table}
\vskip 0.15in
\begin{center}
\begin{small}
\begin{sc}
\begin{tabular}{@{}lcccccc@{}}
    \toprule
     & \multicolumn{6}{c}{$T=20$} \\ 
     \cmidrule(l){2-7}     
     Estimator & \multicolumn{2}{c}{ELBO} & \multicolumn{2}{c}{Edge Prec.} & \multicolumn{2}{c}{Edge Rec.} \\
               & mean $\pm$ std & max & mean $\pm$ std & max & mean $\pm$ std & max \\ 
     \midrule
    \emph{SST (Our Impl.)} 
    & $-2039.42 \pm 1079.56$ 
    & $-1483.31$ 
    & $83 \pm 30$ 
    & $98$ 
    & $93 \pm 9$ 
    & $98$ \\
    \cmidrule[0.15pt]{1-7}
    \emph{\tracerv-REINFORCE+} 
    & $-1976.16 \pm 980.12$ 
    & $-1458.81$ 
    & $83 \pm 30$ 
    & $98$ 
    & $94 \pm 8$ 
    & $98$ \\
    \emph{RELAX} 
    & $-3129.51 \pm 1464.88$ 
    & $-1594.85$ 
    & $60 \pm 37$ 
    & $98$ 
    & $90 \pm 8$ 
    & $98$ \\
    \bottomrule
    \end{tabular}
\end{sc}
\end{small}
\end{center}
\vskip -0.1in
\end{table*}

\subsubsection{Data}
For each dataset entry we obtained the corresponding ground truth spanning tree by sampling a fully-connected graph on 10 nodes and applying Kruskal algorithm. Graph weights were sampled independently from $\text{Gumbel}(0, 1)$ distribution. Initial vertex locations in $\mathbb{R}^2$ were distributed according to $N(0, I)$. Given the spanning tree and initial vertex locations, we applied the force-directed algorithm \cite{fruchterman1991graph} for $T=10$ or $T=20$ iterations to obtain system dynamics. We dropped starting positions and represented each dataset entry as the obtained sequence of $T=10$ or $T=20$ observations. We generated 50000 examples for the training set and 10000 examples for the validation and test sets.

\subsubsection{Model}
Following \cite{paulus2020gradient} we used the NRI model with encoder and decoder architectures analogous to the MLP encoder and MLP decoder defined by \cite{kipf2018neural}.

\textbf{Encoder.} Given the observation of dynamics, GNN encoder passed messages over the fully connected graph. Denoting its final edge representation by $\theta$, we obtained parameters of the distribution over undirected graphs as $\frac{1}{2}(\theta_{ij} + \theta_{ji})$ for an edge $i \leftrightarrow j$. Hard samples of spanning trees
were then obtained by applying the Kruskal algorithm on the perturbed symmetrized matrix of parameters $\expparam_{ij} = \exp\left(-\frac{1}{2}\left(\theta_{ij} + \theta_{ji}\right)\right)$.

\textbf{Decoder.} GNN decoder took observations from previous timesteps and the adjacency matrix $X$ of the obtained spanning tree as its input. It passed messages over the latent tree aiming at predicting future locations of the vertices. We used two separate networks to send messages over two different connection types ($X_{ij} = 0$ and $X_{ij} = 1$). Since parameterization of the model was ambiguous in terms of choosing the correct graph between $X$ and $1 - X$, we measured structure metrics with respect to both representations and reported them for the graph with higher edge precision.

In experiments with RELAX we needed to define a critic. It was a simple neural network defined as an MLP which took observations concatenated with the perturbed weights and output a scalar. It had one hidden layer and ReLU activations.

\textbf{Objective.} During training we maximized ELBO (lower bound on the observations' log-probability) with gaussian log-likelihood and KL divergence measured in the continuous space of exponential noise. It resulted in an objective which was also a lower bound on ELBO with KL divergence measured with respect to the discrete distributions.

\subsubsection{Training}
We fixed the number of function evaluations per iteration at $N = 128$. All models were trained for 50000 iterations. We used constant learning rates and Adam optimizer with default hyperparameters. For all estimators we tuned separate learning rates for encoder, decoder and RELAX critic by uniform sampling from the range $[1, 100] \times 10^{-5}$ in log scale. Additionally, for $\tracerv$-REINFORCE+ we tuned $K$ in $\{2, 4, 8, 16\}$ and for RELAX we tuned size of the critic hidden layer in $\{256, 512, 1024, 2056\}$. We did not train $\exprv$-REINFORCE+ since \cite[Appendix, Sec. C.1]{paulus2020gradient} report its bad performance (REINFORCE (\textit{Multi-sample}) according to their namings). We used Gumbel Spanning Tree SST because it showed the best performance on the corresponding task in \cite[Section 8.1]{paulus2020gradient}. We tuned hyperapameters over considered ranges with uniform search with 20 trials. Best model were chosen with respect to best validation ELBO.

Mean and standard deviations reported in the tables are computed across 10 different random model initializations. Table \ref{nri2-table} reports results for T=20 iterations. Despite the fact that the dataset for this experiment is highly synthetic we can note that model initialization plays big role in the final performance of the gradient estimator. Overall, we can see that $\tracerv$-REINFORCE+ performs slightly better in terms of ELBO which is expected since score function based methods give unbiased gradients of ELBO, while relaxation-based SST optimizes relaxed objective.

\subsection{Arborescence and Unsupervised Parsing}
\label{sec:edmondsexp}
\subsubsection{Data}
We took the ListOps \cite{nangia2018listops} dataset, containing arithmetical prefix expressions, e.g.  $\texttt{min[3 med[3 5 4] 2]}$, as a base, and modified its sampling procedure. We considered only the examples of length in $[10, 50]$ that do not include the $\texttt{summod}$ operator and have bounded depth $d$. Depth was measured with respect to the ground truth parse tree,  defined as a directed graph with edges going from functions to their arguments. We generated equal number of examples for each $d$ in $\{1, \ldots, 5\}$. Train dataset contained 100000 samples, validation and test sets contained 20000 samples.

\subsubsection{Model}
Model mainly consisted of two parts which we call encoder and classifier.

Encoder was the pair of identical left-to-right LSTMs with one layer, hidden size 60 and dropout probability 0.1. Both LSTMs used the same embedding lookup table. Matrices that they produced by encoding the whole sequence were multiplied to get parameters of the distribution over latent graphs. Equivalently, parameter for the weight of the edge $i \rightarrow j$ was computed as $\theta_{ij} = \langle v_i, w_j \rangle$, where $v_i$ and $w_j$ are hidden vectors of the corresponding LSTMs at timesteps $i$ and $j$. Given $\expparam = \exp\left(-\theta\right) \in \mathbb{R}^{n \times n}$, we sampled matrix weights from the corresponding factorized exponential distribution. Hard samples of latent arborescences, rooted at the first token, were obtained by applying Chu-Liu Edmonds algorithm to the weighted graph.

Classifier mainly consisted of the graph neural network which had the initial sequence embedding as an input and ran 5 message sending iterations over the sampled arborescence's adjacency matrix. It had its own embedding layer different from used in the encoder. GNN's architecture was based on the MLP decoder model by \cite{kipf2018neural}. It had a two-layered MLP and did not include the last MLP after message passing steps. Output of the GNN was the final embedding of the first token which was passed to the last MLP with one hidden layer. All MLPs included ReLU activations and dropout with probability 0.1.

In experiments with RELAX we needed to define a critic. It contained LSTM used for encoding of the initial sequence. It was left-to-right, had a single layer with hidden size 60 and dropout probability 0.1. It had its own embedding lookup table. LSTM's output corresponding to the last token of the input sequence was concatenated with a sample of the graph adjacency matrix and fed into the output MLP with one hidden layer of size 60 and ReLU activations. Before being passed to the MLP, weights of the adjacency matrix were centered and normalized.

\subsubsection{Training}
We fixed the number of function evaluations per iteration at $N = 100$ and trained models for 50000 iterations. We used AdamW optimizer, separate for each part of the model: encoder, classifier and critic in case of RELAX. They all had constant, but not equal in general case, learning rates, and default hyperparameters. We used Gumbel arborescence SST because it showed the best performance on the corresponding task in \cite[Section 8.2]{paulus2020gradient}. We tuned learning rates and weight decays in range $[1, 100] \times 10^{-5}$ in log scale and the number of latent samples in $\{2, 4, 5\}$ for $\exprv$-REINFORCE+ and $\tracerv$-REINFORCE+. We tuned hyperapameters over considered ranges with uniform search with 20 trials. Best model were chosen with respect to best validation accuracy.

Table \ref{edmonds-table} with results indicates more stable performance of score function based gradient estimators with respect to different random model initializations.

\subsection{Binary Tree and Non-monotonic Generation}
\label{sec:nonmonexp}

\subsubsection{Data}
In this experiment, we constructed 5 datasets of balanced parentheses, varying the number of their types in $\{10, 20, \ldots, 50\}$. For each number of parentheses' types we constructed a dataset by generating independent sequences with the following procedure:
\label{sec:nonmonexp_data}
\begin{enumerate}
  \item Sample length $l$ of the sequence from the uniform distribution on $\{2, 4, \ldots, 20\}$.
  \item Uniformly choose current type of parentheses.
  \item Choose one of the configurations $\texttt{"( sub )"}$ or $\texttt{"() sub"}$ with equal probabilities, where $\texttt{"("}$ and $\texttt{")"}$ denote the pair corresponding to the current parentheses type.
  \item Make a recursive call to generate substring $\texttt{sub}$ with length $l - 2$.
  \item Return the obtained sequence.
\end{enumerate}

Each train dataset contained 20000 samples, validation and test sets contained 2500 samples. In case of semi-supervised experiments, datasets were modified to contain 10\% of supervision.

\subsubsection{Model}
Language model consisted of the decoder with non-monotonic architecture, defined in \cite{welleck2019non}, and of the encoder (in case of semi-supervised training). In this experiment all models shared the same hidden and embedding dimensions equal to 300.

\textbf{Decoder.}
We fixed decoder's architecture to be a single-layer left-to-right LSTM. While training, we processed a tree-ordered input by first adding leaf nodes, labeled with $\texttt{EOS}$ token, to all places with a child missing, and transforming the modified tree into a sequence by applying the level-order traversal. The obtained sequence was then used for training in the teacher-forcing mode. While generating, we sampled raw sequences (treated as level-order traversals), transformed them into binary trees and output the in-order traversal of the obtained tree.

\textbf{Encoder.}
For semi-supervised training we defined the encoder as a single-layer bidirectional LSTM. Given an input sequence of length $l$, it output a vector of exponential parameters $\expparam = (\expparam_1, \ldots, \expparam_l)$. Hard samples of latent trees were obtained by applying Algorithm \ref{alg:tree} on the perturbed $\expparam$.

\textbf{Critic.}
Critic, used for estimating encoder's gradients with RELAX, was defined as a single-layer bidirectional LSTM. It took a sequence, concatenated with perturbed output of the encoder along the embedding dimension, as its input, and output a single value.

\subsubsection{Smart order}
\begin{figure}[ht]
\begin{center}
\centerline{\includegraphics[width=0.3\textwidth]{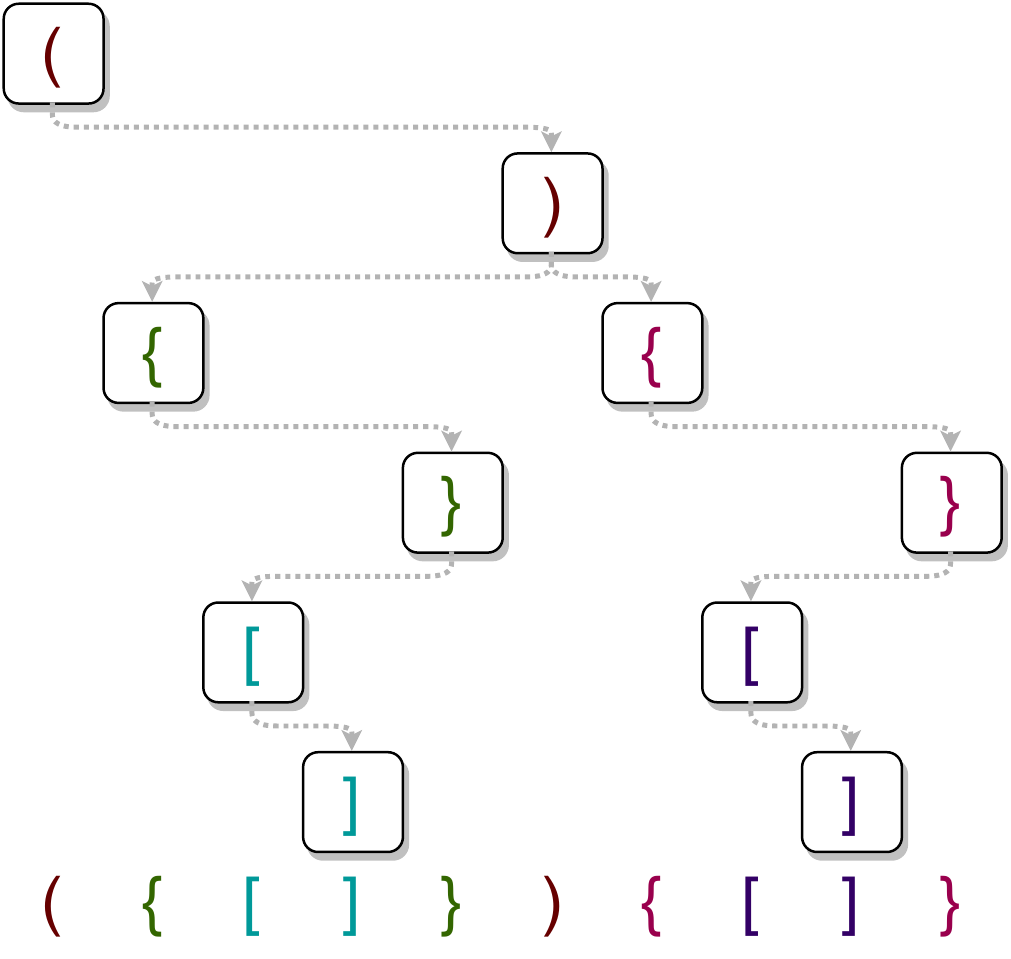}}
\caption{Visualization of $\textit{smart}$ order for binary trees}
\label{smart-order-trees}
\end{center}
\end{figure}
We defined $\textit{smart}$ order for binary trees in the way, visualized in Figure \ref{smart-order-trees}. Opening parentheses do not have left children, while their right children are fixed to be the corresponding closing parentheses. Construction starts from the first token; each time we generate a pair of parentheses and have a substring between them, we make a recursion step, generating the corresponding subtree at the left from the current closing parenthesis. If there is a substring at the right of the generated pair, corresponding subtree is attached as the closing parenthesis' right child.

From decoder's perspective (level-order generation) this order corresponds to an altering process of generation, where blocks of opening parentheses are followed by the corresponding closing ones. Intuitively, this type of process should simplify producing balanced parentheses sequences, since we do not mix opening and closing parentheses at each stage.

\subsubsection{Training}
Models with fixed order ($\textit{left-to-right}$ and $\textit{smart}$) were trained by minimization of cross-entropy using teacher-forcing. Semi-supervised models were trained in a manner of variational autoencoders. Unsupervised part of the training objective was defined by ELBO, lower bound on the marginal likelihood of training sequences, while supervised part consisted of joint likelihood (of sequence and fixed order), defined as negative cross-entropy between the decoder's output and train sequences, and the encoder's likelihood of the $\textit{smart}$ order.

All models were trained for 50 epochs. We chose the best model by measuring perplexity of the validation set. It was calculated explicitly for fixed-order models and approximated by IWAE bound \cite{burda2015importance} for semi-supervised ones. We observed that distribution of the encoder became degenerate during optimization, while decoder did not follow this behaviour. It made IWAE estimation with variational distribution highly underestimated. Instead of variational distribution, we used the empirical distribution on orders, obtained by sampling 10000 trees from decoder. Number of latent samples for IWAE estimation was fixed at $K = 1000$.

Results from Table \ref{toy-nonmonotonic} suggest that generative metrics of the model can be improved by training on the non-trivial order of generation even using semi-supervised approach with relatively small amount of supervision.

\subsection{Permutations by \texorpdfstring{$\operatorname{argsort}$}{Lg} and Non-monotonic Generation}
\label{sec:argsort_nonmonexp}
\begin{figure}[ht]
\begin{center}
\centerline{\includegraphics[width=\textwidth]{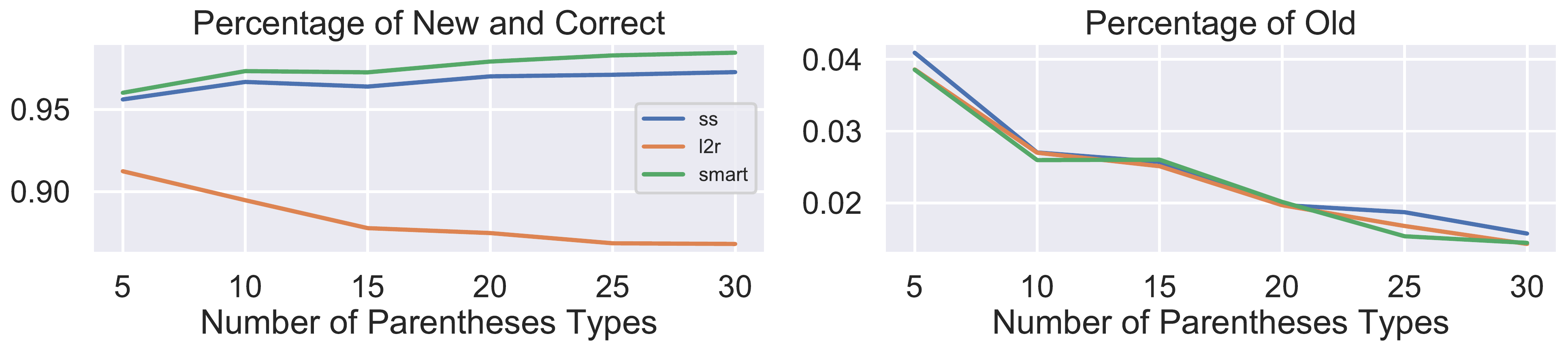}}
\caption{Generative statistics of insertion-based non-monotonic language model with different orders. Semi-supervised order is defined by Plackett-Luce distribution supervised with \textit{smart} order.}
\label{toy-nonmonotonic-argsort}
\end{center}
\end{figure}

To explore the applicability of the stochastic invariants to other models for non-monotonic text generation we consider the same task as in the previous experiment but examine another generative model as well as the latent variable which defines orderings.

\subsubsection{Data}
We used the same data generative process as in the previous experiment (Appendix \ref{sec:nonmonexp_data}).

\subsubsection{Model}
Model utilizes encoder-decoder architecture. For the decoder we took Transformer-InDIGO model from \cite{gu2019insertion}. It generates sequences by insertions using relative position representations. \cite{li2021discovering} discovered the one-to-one correspondence between relative positions and permutations, therefore we used them interchangeably in the model. Encoder is simple 1 layer bidirectional LSTM network which outputs parameters of the Plackett-Luce distribution given sequence of tokens. We used RELAX gradient estimator to train encoder parameters. Critic was also 1 layer bidirectional LSTM network which outputs scalar given concatenation of sequence of tokens and exponential noise.

\subsubsection{Training}
We trained the model with different orders: \textit{left-to-right}, \textit{smart} and \text{semi-supervised} with 10\% supervision with \textit{smart} orders for 100 epochs. \textit{Smart} order was defined by sequential generation of opening parenthesis and the corresponding closing parenthesis. Intuitively, it should be easier to generate balanced parentheses sequences using this order since model does not need stack to remember number of opened parentheses. Semi-supervised model was trained in the manner of semi-supervised variational autoencoders with teacher forcing for reconstruction term.

For each order we chose the best model with respect to the decoder perplexity (for semi-supervised we estimated marginal likelihood using IWAE estimator \cite{burda2015importance} with variational distribution as the proposal). During training we observed that different orders achieve the same perplexity which is expected since the data is too simple to model with any order. From the Figure \ref{toy-nonmonotonic-argsort} we can observe the same behaviour as with binary trees. While different orders achieve the same perplexity on the test set, using non-nomonotonic orders improves generation quality.

\subsection{Additional Tables}

\begin{table*}[!h]
\centering
\caption{Standard deviation (std) of the gradient estimators on Graph Layout experiment for T=10 iterations. Results are obtained by choosing models with best validation ELBO and averaging std estimates across \textit{train} set. Standard deviation is estimated with 10 samples for each batch.}
\label{nri-var-table}
\begin{center}
\begin{small}
\begin{sc}
\begin{tabular}{@{}lccc@{}}
    \toprule
     & \multicolumn{3}{c}{Mean Gradient Std} \\ 
     \cmidrule(l){2-4}     
     Estimator & \multicolumn{1}{c}{beginning} & \multicolumn{1}{c}{25k} & \multicolumn{1}{c}{50k} \\
     \midrule
    \emph{SST (Our Impl.)} 
    & $0.1674$ 
    & $0.0343$ 
    & $0.0379$ \\
    \emph{\tracerv-REINFORCE+} 
    & $1.6320$ 
    & $1.4409$ 
    & $0.9924$ \\
    \emph{RELAX} 
    & $3.0592$ 
    & $1.0292$ 
    & $0.8874$ \\
    \bottomrule
    \end{tabular}
\end{sc}
\end{small}
\end{center}
\end{table*}

\begin{table*}[!h]
\centering
\caption{Time per one gradient update (ms) for different gradient estimators and structured variables.}
\label{timings-table}
\begin{center}
\begin{small}
\begin{sc}
\begin{tabular}{@{}lccc@{}}
    \toprule
     & \multicolumn{3}{c}{Time per iter (ms)} \\ 
     \cmidrule(l){2-4}     
     Structure & \multicolumn{1}{c}{SST} & \multicolumn{1}{c}{\tracerv-REINFORCE+} & \multicolumn{1}{c}{RELAX} \\
     \midrule
    \emph{Spanning Tree} 
    & $123$ 
    & $137$ 
    & $180$ \\
    \emph{Arborescence} 
    & $175$ 
    & $249$ 
    & $535$ \\
    \bottomrule
    \end{tabular}
\end{sc}
\end{small}
\end{center}
\end{table*}

\begin{table*}[!ht]
\centering
\caption{Results of $k$-subset selection on Aroma aspect \textit{train} data. MSE ($\times 10^{-2}$) is shown for best models selected on validation averaged across different random seeds.}  
\label{table:l2x-train}
\begin{center}
\begin{small}
\begin{sc}
\scalebox{1.0}{
\begin{tabular}{@{}lllll@{}}
\toprule
\multirow{2.5}{*}{\shortstack[l]{Model}} & \multirow{2.5}{*}{\shortstack[l]{Estimator}} & \multicolumn{1}{c}{$k=5$}  & \multicolumn{1}{c}{$k=10$} & \multicolumn{1}{c}{$k=15$}  \\
 &  & mean $\pm$ std & mean $\pm$ std & mean $\pm$ std \\
\cmidrule[\heavyrulewidth]{1-5}
\multirow{4.5}{*}{\shortstack[l]{Simple}}
  & \emph{SST (Our Impl.)}
  	& $\mathbf{3.22 \pm 0.17}$
  	& $2.96 \pm 0.14$ 
  	& $2.54 \pm 0.13$  \\
  \cmidrule[0.15pt]{2-5}
    & \emph{\exprv-REINFORCE+} 
    & $3.45 \pm 0.19$ 
    & $3.38 \pm 0.23$ 
    & $2.85 \pm 0.21$ \\
  & \emph{\tracerv-REINFORCE+}
    & $3.38 \pm 0.13$ 
    & $\mathbf{2.93 \pm 0.19}$ 
    & $\mathbf{2.43 \pm 0.12}$ \\
  & \emph{RELAX} 
  	& $3.36 \pm 0.12$ 
  	& $3.13 \pm 0.18$ 
  	& $2.78 \pm 0.17$ \\
  \cmidrule[\heavyrulewidth]{1-5}
  \multirow{4.5}{*}{\shortstack[l]{Complex}}
  & \emph{SST (Our Impl.)}
    & $2.69 \pm 0.14$ 
    & $2.24 \pm 0.18$ 
    & $2.18 \pm 0.11$ \\
  \cmidrule[0.15pt]{2-5}
    & \emph{\exprv-REINFORCE+} 
    & $2.85 \pm 0.22$ 
    & $2.56 \pm 0.25$ 
    & $2.44 \pm 0.19$ \\
  & \emph{\tracerv-REINFORCE+}
    & $\mathbf{2.5 \pm 0.16}$ 
    & $\mathbf{2.19 \pm 0.15}$ 
    & $\mathbf{2.09 \pm 0.1}$ \\
  & \emph{RELAX} 
    & $2.53 \pm 0.15$ 
    & $2.21 \pm 0.17$ 
    & $2.19 \pm 0.14$ \\
\bottomrule
\end{tabular}
}
\end{sc}
\end{small}
\end{center}
\end{table*}
\begin{table*}[!ht]
\centering
\caption{Graph Layout experiment results for T=10 iterations. Metrics are obtained by choosing models with best validation ELBO and averaging results across different random seeds on the \textit{train} set.}
\label{nri1-train-table}
\begin{center}
\begin{small}
\begin{sc}
\begin{tabular}{@{}lcccccc@{}}
    \toprule
     & \multicolumn{6}{c}{$T=10$} \\ 
     \cmidrule(l){2-7}     
     Estimator & \multicolumn{2}{c}{ELBO} & \multicolumn{2}{c}{Edge Prec.} & \multicolumn{2}{c}{Edge Rec.} \\
               & mean $\pm$ std & max & mean $\pm$ std & max & mean $\pm$ std & max \\ 
     \midrule
    \emph{SST (Our Impl.)} 
    & $-1846.93 \pm 1124.23$ 
    & $-1357.03$ 
    & $\mathbf{88 \pm 22}$ 
    & $\mathbf{96}$ 
    & $\mathbf{94 \pm 4}$ 
    & $\mathbf{96}$ \\
    \cmidrule[0.15pt]{1-7}
    \emph{\tracerv-REINFORCE+} 
    & $\mathbf{-1584.11 \pm 572.16}$ 
    & $\mathbf{-1193.11}$ 
    & $70 \pm 31$ 
    & $91$ 
    & $86 \pm 8$ 
    & $91$ \\
    \emph{RELAX} 
    & $-2086.93 \pm 573.72$ 
    & $-1207.83$ 
    & $43 \pm 31$ 
    & $90$ 
    & $81 \pm 6$ 
    & $90$ \\
    \bottomrule
    \end{tabular}
\end{sc}
\end{small}
\end{center}
\end{table*}

\begin{table*}[!ht]
\centering
\caption{Unsupervised Parsing on ListOps. We report the average \textit{train}-performance of the model with the best validation accuracy across different random initializations.}
\label{edmonds-table-train}
\begin{center}
\begin{small}
\begin{sc}
\begin{tabular}{*7l}
\toprule 
Estimator &
\multicolumn{2}{c}{Accuracy} &
\multicolumn{2}{c}{Precision} &
\multicolumn{2}{c}{Recall} \\
& mean $\pm$ std & max
& mean $\pm$ std & max
& mean $\pm$ std & max
 \\
\midrule
   \emph{SST (Our Impl.)}
   & $79.31 \pm 8.17$ & $\mathbf{94.73}$ 
    & $57.15 \pm 19.92$ & $\mathbf{82.51}$
    & $30.58 \pm 19.03$ & $73.28$
    \\
  \cmidrule[0.15pt]{1-7}
     \emph{\exprv-REINFORCE+}
   & $60.64 \pm 2.51$ & $65.21$
   & $41.12 \pm 6.68$ & $45.95$
   & $40.99 \pm 6.75$ & $45.68$
     \\
   \emph{\tracerv-REINFORCE+}
   & $\mathbf{88.69 \pm 3.02}$ & $93.06$
   & $\mathbf{78.6 \pm 7.37}$ & $80.68$
   & $\mathbf{61.78 \pm 14.52}$ & $\mathbf{80.68}$
    \\
   \emph{RELAX} 
   & $79.92 \pm 9.35$ & $88.64$ 
   & $54.84 \pm 17.51$ & $75.27$ 
   & $53.73 \pm 17.18$ & $75.27$
   \\
\bottomrule
\end{tabular}
\end{sc}
\end{small}
\end{center}
\end{table*}

\newpage
\section{Pseudo-Code}
\label{sec:pseudo-code}
This section containes pseudo-code for the algorithms with stochastic invariants discussed in the paper.
\subsection{Pseudo-Code for \texorpdfstring{$\argtopk$}{Lg}}
We refer the reader to Algorithm~\ref{alg:top-k} in the main paper.
\subsection{Pseudo-Code for \texorpdfstring{$\operatorname{argsort}$}{Lg}}
Algorithm~\ref{alg:argsort} presents a recursive algorithm for sorting. The algorithm implements the insertion sorting. Although insertion sorting may not be the most efficient sorting algorithm, it has a stochastic invariant. Indeed, the algorithm recursively finds the minimum element and then excludes the element from the consideration. As opposed to $\argtopk$, the algorithm does not omit the order of $\structrv'$. As a result, for the algorithm, the trace $\tracerv$ coincides with the output $\structrv$.
\begin{algorithm}[ht]
\caption{$F_{\text{sort}}(E, K)$ - sorts the set $K$ based on the corresponding $\exprv$ values}
\label{alg:argsort}
\begin{algorithmic}
    \REQUIRE $\exprv, K$
    \ENSURE $\structrv$
    \IF{$\exprv = \emptyset$}
    \STATE {\bf return}
    \ENDIF
    \STATE $\tracerv \Leftarrow \arg\min_{j \in K} \exprv_j$ \hfill \COMMENT{Find the smallest element}
    \FOR{$j \in K$}
    \STATE $\exprv_j' \Leftarrow \exprv_j - \exprv_\tracerv$
    \ENDFOR
    \STATE $K' \Leftarrow K \setminus \{T\}$ \hfill \COMMENT{Exclude $\operatorname{arg}\min$ index $\tracerv$}
    \STATE $\exprv' \Leftarrow  \{ \exprv_k' \mid k \in K'\}$ 
    \STATE $\structrv' \Leftarrow F_{\text{sort}}(E', K')$ \hfill \COMMENT{Sort the subset $K'$}
    \STATE \textbf{return} $(\tracerv, \structrv'_1, \dots, \structrv'_{\text{size}(\structrv')})$ \hfill \COMMENT{Concatenate $\tracerv$ and the subset sorting $\structrv'$}
\end{algorithmic}
\end{algorithm}

\subsection{Pseudo-Code for the Bespoke Matching Variable}

\begin{figure}[ht]
\begin{center}
\includegraphics[width=0.5\textwidth]{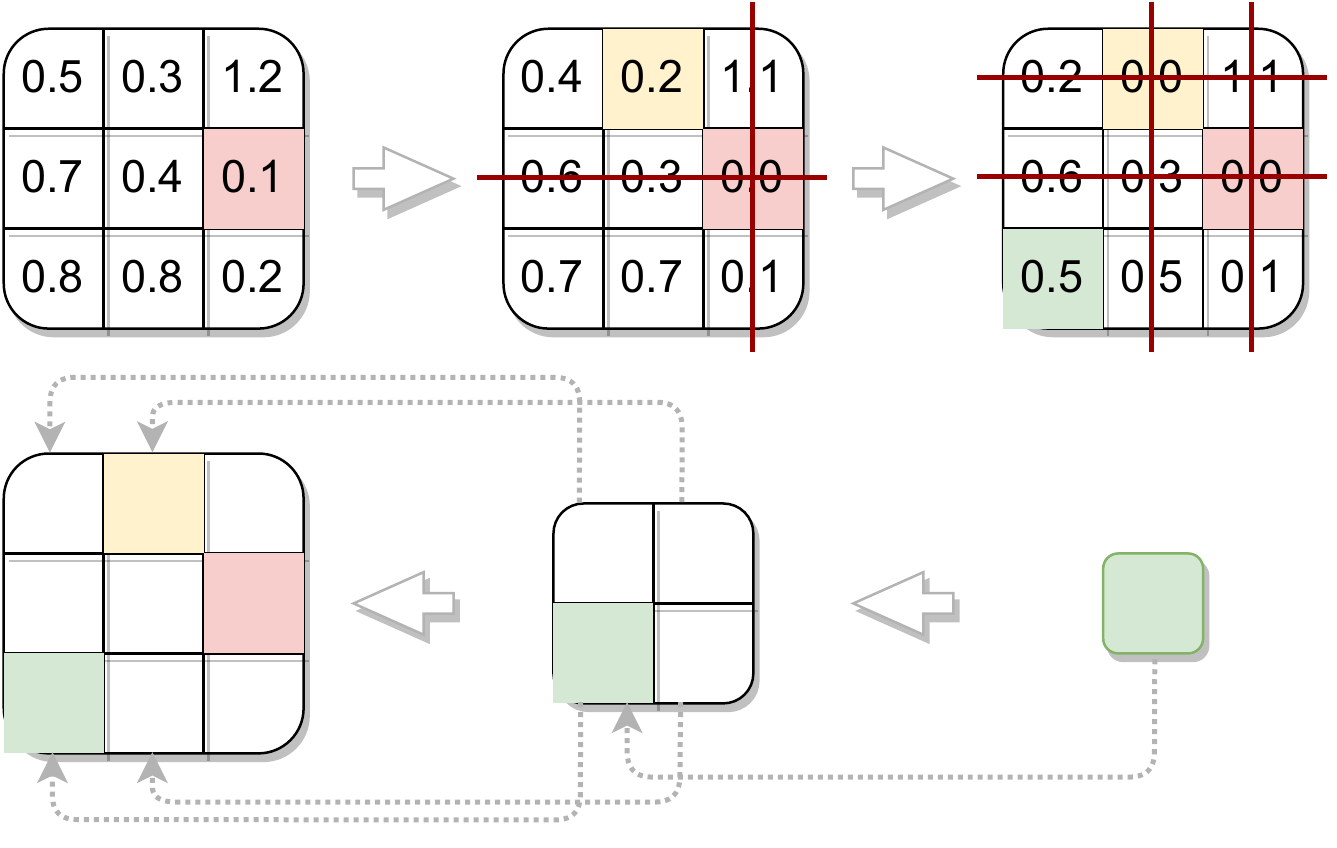}
\end{center}
\caption{The generative process for perfect matchings. On the top row, the algorithm recursively finds the minimum element and excludes the corresponding row and column. On the bottom row, the algorithm iteratively combines the subset matching~$\structrv'$ and the minimum element~$\tracerv$.}
\label{fig:matching}
\end{figure}

\begin{algorithm}[!h]
\caption{$F_{\text{match}}(E, K)$ - returns a matching between the columns and the rows of a square matrix $\exprv$ with the elements indexed with $K$.}
\label{alg:matching}
\begin{algorithmic}
    \REQUIRE $\exprv, K$
    \ENSURE $\structrv$
    \IF{$\exprv = \emptyset$}
    \STATE {\bf return}
    \ENDIF
    \STATE $\tracerv \Leftarrow \arg\min_{(u, v) \in K} \exprv_{(u,v)}$ \hfill \COMMENT{Find the smallest element, $\tracerv$ is an integer tuple}
    \FOR{$(u, v) \in K$}
    \STATE $\exprv_{(u, v)}' \Leftarrow \exprv_{(u, v)} - \exprv_\tracerv$
    \ENDFOR
    \STATE \COMMENT{Cross out the row and the column containing the minimum}
    \STATE $K' \Leftarrow \{(u, v) \in K \mid u \neq \tracerv[0] \lor v \neq \tracerv[1]\}$
    \STATE $\exprv' \Leftarrow  \{ \exprv_{(u,v)}' \mid (u, v) \in K'\}$
    \STATE $\structrv' \Leftarrow F_{\text{match}}(E', K')$ \hfill \COMMENT{Find a matching for the sub-matrix}
    \STATE \textbf{return} $\{\tracerv\} \cup \structrv'$ \hfill \COMMENT{Add an edge (tuple) to the matching}
\end{algorithmic}
\end{algorithm}

We present an algorithm with the stochastic invariant that returns a matching between the rows and the columns of a square matrix. Although such a variable is in one-to-one correspondence with permutations, the distribution has more parameters than the Plackett-Luce distribution. We speculate that the distribution may be more suitable for representing finite one-to-one mappings with a latent variable. Figure~\ref{fig:matching} illustrates the idea behind the algorithm. In particular, the algorithm iteratively finds the minimum element and excludes the row and the column containing the element from the matrix. Then the algorithm uses recursion to construct a matching for the submatrix.

Notably, we were unable to represent the Hungarian algorithm for the minimum matching to the problem. Algorithm~\ref{alg:matching} returns the same output when the column-wise minimum elements form a matching in the matrix. In general, the output matching may not be the minimum matching. 

\subsection{Pseudo-Code for the Bespoke Binary Tree Variable}
\begin{figure}[ht]
\begin{center}
\includegraphics[width=0.5\textwidth]{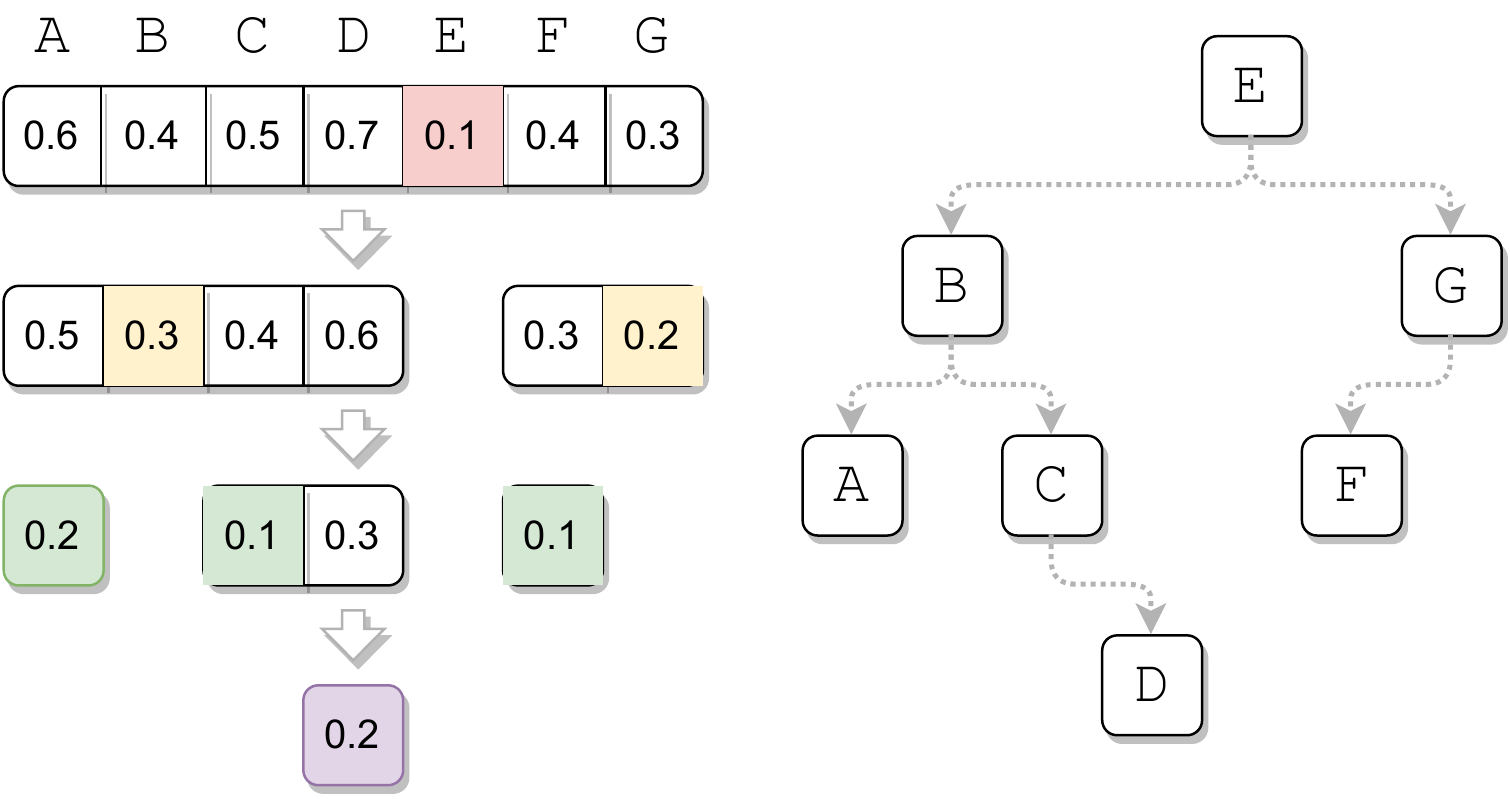}
\end{center}
\caption{The generative process for binary trees. On the left, we assign weights to tokens and set the minimum weight token to be the root. Then we recursively construct trees for the tokens on the left-hand side and the right-hand side of the root. On the right, we present the resulting tree.}
\label{fig:tree}
\end{figure}

\begin{algorithm}[!h]
\caption{$F_{\text{tree}}(E, K, R)$ - constructs a binary tree based on weight $\exprv$ with the node set $K$. The auxiliary variable $R$ is a partition of nodes initialized as a single set $K$.}
\label{alg:tree}
\begin{algorithmic}
    \REQUIRE $\exprv, K, R$
    \ENSURE $\structrv$
    \IF{$\exprv = \emptyset$}
    \STATE {\bf return}
    \ENDIF
    \STATE $P_1, \dots, P_m \Leftarrow R$
    \FOR{$i=1$ to $m$}
    \STATE $\tracerv_i \Leftarrow \arg\min_{j \in P_i} \exprv_j$
    \FOR{$j \in P_i$}
    \STATE $\exprv_j' \Leftarrow \exprv_j - \exprv_{\tracerv_i}$
    \ENDFOR
    \ENDFOR
    \STATE \COMMENT{$R'$ splits the partition sets $P_i$ into the left-hand side and the right-hand side nodes relative to $\tracerv_i$}
    \STATE \COMMENT{for example, when $\tracerv = 5$ we split $P = \{3, 4, 5, 6, 7, 8\}$ into $\{3, 4\}$ and $\{6, 7, 8\}$}
    \STATE $R' \Leftarrow \{P_i[0, \tracerv_i) \mid i=1,\dots,m\} \cup \{P_i(\tracerv_i, -1] \mid i=1, \dots, m\}$
    \STATE $K' \Leftarrow K \setminus \{\tracerv_1, \dots, \tracerv_m\}$
    \STATE $E' \Leftarrow \{E_k' \mid k \in K'\}$
    \STATE $\structrv' \Leftarrow F_{\text{tree}}(E', K', R')$ \hfill \COMMENT{Recursive call returns a sequence of $2m$ subtrees}
    \STATE {\bf return} $\big((\tracerv_i, \structrv'[2i], \structrv'[2i+i]), i=1, \dots, m \big)$ \hfill \COMMENT{Join the $2m$ trees into $m$ with roots $\tracerv_1, \dots, \tracerv_m$}
\end{algorithmic}
\end{algorithm}

For our experiments with the non-monotonic generation, we propose a distribution over binary trees. 
Given a sequence of tokens, we assign an exponential random variable to each token. Then we construct a tree with the following procedure illustrated in Figure~\ref{fig:tree}. First, we set the token with the minimum weight to be the root of the tree. The tokens on the left-hand side from the root will be the left descendants of the root, the tokens on the right-hand side will be the right descendants of the root. Then we repeat the procedure for the left-hand descendants and the right-hand descendants independently. We summarise the above in Algorithm~\ref{alg:tree}.

Intuitively, the algorithm should include two recursive calls: one for the left-hand side subtree and the other for the right-hand side subtree. According to the general framework (see Algorithm~\ref{alg:general}), our pseudo-code is limited to a single recursive call. In particular, at each recursion depth $k$ Algorithm~\ref{alg:tree} processes all subtrees with a root at the given depth $k$. As a result, at depth $k$ the partition includes $m=2^k$ subsets, some of which may be empty. Alternatively, Algorithm~\ref{alg:general} can be extended to multiple recursive calls.

\subsection{Kruskal's Algorithm}
Kruskal's algorithm \citep{kruskal1956shortest, dasgupta2008algorithms} for the minimum spanning tree gives another illustration of the framework. In this case, the edge weights are the exponential random variables. The input variable $\exprv$ is indexed by the edges in a graph, i.e. $\exprv_{(u, v)}$ is the weight of the edge~$(u, v)$.

Algorithm~\ref{alg:kruskal} contains the pseudo-code for the Kruskal's algorithm. The auxiliary variable $R$ is a set of disjoint sets of nodes. It represents the connected components of the current subtree. The algorithm build the tree edge-by-edge. It starts with an empty set of edges and all sets in $R$ of size one. Then the algorithm finds the lightest edge connecting between the connected components of $R$ and joins the two connected components. The algorithm repeats this greedy strategy until $R$ contains a single connected component.

From the recursion viewpoint, the algorithm constructs a tree with the nodes being the elements of $R$. First, the algorithm adds the lightest edge $\tracerv = (u, v)$ to the tree. Then it joins the sets $R_u$ and $R_v$ containing $u$ and $v$, we denote the result as $R'$. Next, the algorithm uses the recursion to construct a tree where the nodes are the elements of $R'$. The size of $R$ decreases with each step, therefore the recursion will stop. The resulting tree is a tree for the connected components $R'$ along with the edge $\tracerv$.

Notably, Prim's algorithm is a similar greedy algorithm for finding the minimum spanning tree. However, the algorithm considers different subsets of edges; as a result, we could not represent Prim's algorithm as an instance of Algorithm~\ref{alg:general}.

\begin{algorithm}[ht]
\caption{$F_\text{Kruskal}(\exprv, K, R)$ - finds the minimum spanning tree given edges~$K$ with the corresponding weights~$\exprv$; Call with $R := \{ \{v\} \mid v \in V\}$ set as node singletons} 
\label{alg:kruskal}
\begin{algorithmic}
    \REQUIRE $\exprv, K, R$
    \ENSURE $\structrv$
    \IF{$|R| = 1$}
    \STATE {\bf return}
    \ENDIF
    \STATE $\tracerv \Leftarrow \arg\min_{k \in K} \exprv_k$ \hfill\COMMENT{Find the smallest edge}
    \FOR{$k \in K$}
    \STATE $\exprv_k' \Leftarrow \exprv_k - \exprv_\tracerv$
    \ENDFOR
    \STATE \COMMENT{For $\tracerv = (u, v)$ find $R_u, R_v \in R$ s.t. $u \in R_u$, $v \in R_v$}
    \STATE $R_u, R_v \Leftarrow \operatorname{find\_connected\_components}(R, T)$
    \hfill \COMMENT{Merge the connected components $R_u$ and $R_v$}
    \STATE $R' \Leftarrow (R \setminus \{R_u, R_v\}) \cup (\{R_u \cup R_v\})$
    \hfill \COMMENT{Remove edges connecting $R_u$ and $R_v$}
    \STATE $K' \Leftarrow K \setminus \{(u', v') \in K \mid u', v' \in R_u \cup R_v\}$
    \STATE $\exprv' \Leftarrow \{\exprv_k' \mid k \in K'\}$
    \STATE $\structrv' \Leftarrow F_\text{Kruskal}(\exprv', K', R')$
    \hfill \COMMENT{Edges in $\structrv'$ form a spanning tree for nodes in $R'$}
    \STATE \hfill \COMMENT{$X \cup \{\tracerv = (u, v)\}$ is a spanning tree for nodes in $R$}
    \STATE {\bf return} $\structrv' \cup \{\tracerv\}$
\end{algorithmic}
\end{algorithm}

\subsection{Chu-Liu-Edmonds Algorithm}
We adopt Chu-Liu-Edmonds algorithm from~\cite{kleinberg2006algorithm} in Algorithm~\ref{alg:cle}.
Similarly to Kruskal's algorithm, the perturbed input $\exprv$ represents the weights of the graph edges, its indices are the edges of the input directed graph.

As opposed to the previous examples, the algorithm considers multiple subsets of indices $P_1, \dots, P_m$ at each recursion level.
In particular, for each node except $r$ the algorithm finds the incoming edge with minimal weight.
If $\{\tracerv_i\}_{i \neq r}$ is an arborescence, $f_{\text{combine}}$ returns it.

Otherwise, $\{\tracerv_i\}_{i \neq r}$ contains a cycle and $f_{\text{map}}$ constructs a new graph with the cycle nodes contracted to a single node. Similarly to Kruskal's algorithm, we use the variable $R$ to store sets of nodes. In this case, $R$ represent the contracted node as a set of the original nodes.
To construct~$\structrv$, the subroutine~$f_{\text{combine}}$ expands the contracted loop in~$\structrv'$ and adds all edges in $C$ but one.

\begin{algorithm}[ht]
\caption{$F_{\text{CLE}}(E, K, R, r)$ - finds the minimum arborescence $\structrv$ of a directed graph with edges $K$ of weight $\exprv$ and root node $r$. Auxiliary variable $R$ is a partition of nodes indicating merged nodes, initialized with node singletons $R := \{ \{v \} \mid v \in V \}.$}
\label{alg:cle}
\begin{algorithmic}
    \REQUIRE $\exprv, K, R, r$
    \ENSURE $\structrv$
    \IF{$|R| = 1$}
    \STATE {\bf return}
    \ENDIF
    \STATE $P_1, \dots, P_m \Leftarrow f_{\text{split}}(K, R, r)$  \hfill \COMMENT{Split $K$ into sets of edges ending at $R_v \in R$, $r \notin R_v$;} 
    \FOR{$i=1$ to $m$}
    \STATE $\tracerv_i \Leftarrow \arg\min_{k \in P_i} \exprv_k$
    \FOR{$k \in P_i$}
    \STATE $\exprv_k' \Leftarrow \exprv_k - \exprv_{\tracerv_i}$
    \ENDFOR
    \ENDFOR
    \STATE $C \Leftarrow \operatorname{find\_loop}(R, \{T_i\}_{i=1}^m)$
    \hfill \COMMENT{Find a loop $C$ assuming nodes $R$ and edges $\{T_i\}_{i=1}^{m}$}
    \IF{$C = \emptyset$}
    \STATE {\bf return} $\{T_i\}_{i=1}^m$ \hfill
    \ENDIF
    \STATE $R' \Leftarrow (R \setminus \{C_i \mid C_i \in C\}) \cup (\cup_{i=1}^{|C|} C_i)$
    \hfill \COMMENT{Contract the loop nodes into a single node}
    \STATE $K' \Leftarrow K \setminus \{(u, v) \in K \mid u \in C_i, v \in C_{j}\}$ 
    \STATE $\exprv' \Leftarrow \{\exprv_k' \mid k \in K' \}$
    \STATE $\structrv' \Leftarrow F_{\text{CLE}}(\exprv', K', R', r)$
    \hfill \COMMENT{Find arborescence for the contracted graph}
    \STATE $\structrv \Leftarrow \structrv' \cup \{T_i \mid T_i \text{ in cycle, preserves arborescence}\}$
    \hfill \COMMENT{Add to $\structrv'$ all loop edges but one}
    \STATE {\bf return} $\structrv$
\end{algorithmic}
\end{algorithm}

\end{document}


\section*{Outline}
\begin{itemize}
\item Section~\ref{sec:proofs}: Proofs of the lemmas and the theorem;
\item Section~\ref{sec:algorithms}: Pseudo-code the score function~$\logprob{\tracerv; \expparam}$ and the conditional reparameterization of~$\exprv \mid \tracerv$;
\item Section~\ref{sec:details}: Discussion of the algorithm implementations;
\item Section~\ref{sec:experiments}: Additional results and experimental details.
\end{itemize}

\section{Proofs}
\label{sec:proofs}
In this section, we provide the proofs for the lemmas and the theorem formulated in the main part.
\subsection{Exponential Min Trick}
\begin{lemma}
\label{lemma:exp-min-trick}
\textit{(Exponential min trick)} If $\exprv_i \sim \operatorname{Exp}{(\expparam_i)}, i \in \{1, \dots, \thetadim\}$ are independent, then for $\catrv := \operatorname{argmin}\limits_{i} \exprv_i$
\begin{enumerate}
\item $\prob{\catrv = i} \propto \expparam_i$
\item $\exprv_i - \exprv_\catrv \sim \begin{cases} \operatorname{Exp}({\lambda_i}) &\text{ if } i \neq \catrv \\ 0 &\text{ otherwise} \end{cases}$
\end{enumerate}
\end{lemma}
\begin{proof}
Starting with the joint density of $\catrv$ and $\exprv$
\begin{equation}
\prod_{j=1}^{\thetadim} \left( \expparam_j e^{-\expparam_j e_j} \cdot \mathbb{1}(e_x \leq e_j) \right),
\end{equation}
we make the substitute $e'_j := e_j - e_x$ for $j \in \{1, \dots, \thetadim\}$ and rewrite the density
\begin{align}
\prod_{j=1}^{\thetadim} \left( \expparam_j e^{-\expparam_j (e'_j + e_x)} \cdot \mathbb{1}(0 \leq e'_j) \right) = \\
\expparam_x e^{-\sum_{j=1}^\thetadim \expparam_j e_x} \prod_{j = 1}^{\thetadim} \left( \expparam_j e^{-\expparam_j e'_j} \mathbb{1}(0 \leq e'_j) \right) = \\
\tfrac{\expparam_x}{\sum_{j=1}^\thetadim \expparam_j} \times (\sum_{j=1}^\thetadim \expparam_j) e^{-(\sum_{j=1}^\thetadim \expparam_j) e_x} \\
\times \prod_{j = 1}^{\thetadim} \left( \expparam_j e^{-\expparam_j e'_j} \mathbb{1}(0 \leq e'_j) \right).
\end{align}
The latter is the joint density of
\begin{enumerate}
\item a categorical latent variable $\catrv$ with $\prob{\catrv = x} \propto \expparam_x$;
\item an independent exponential random variable $\exprv_\catrv$ with rate parameter $\sum_{j=1}^\thetadim \expparam_j$;
\item and a sequence of random variables $\exprv'_j := \exprv_j - \exprv_\catrv, j \neq x$ with mutually independent exponential distributions $\operatorname{Exp}{(\expparam_j)}$ conditioned on $\catrv$ and $\exprv_\catrv$.
\end{enumerate}
\end{proof}
\subsection{Alternative Score-Function Estimators}
We extend the formulation of Lemma~\ref{lemma:unbiased-estimates} from the main paper with the explicit requirements of $\tracerv(\cdot)$. The requirements are the sufficient conditions for the log-derivative trick and hold for $\tracerv$ defined by Algorithm~\ref{alg:general}.
\begin{lemma} For a sequence of exponential random variables $\exprv$ with parameter $\expparam$, a \textbf{piecewise constant} function $\tracerv = \tracerv(\exprv)$ \textbf{with a finite number of outputs $|\operatorname{Im} \tracerv | < \infty$} and a function $\structrv = \structrv(\tracerv)$ we have
\begin{equation}
\nabla_\expparam \mathbb E_{\structrv} \loss{\structrv} = \mathbb E_{\exprv} \loss{\structrv} \nabla_\expparam \logprob{\exprv; \expparam} 
\end{equation}
and
\begin{equation}
\nabla_\expparam \mathbb E_{\structrv} \loss{\structrv} = \mathbb E_{\tracerv} \loss{\structrv} \nabla_\expparam \logprob{\tracerv; \expparam}
\end{equation}
\label{lemma:unbiased-estimates}
\end{lemma}
\begin{proof}
Since $\structrv$ and $\tracerv$ are functions of $\exprv$, by the law of the unconscious statistician
\begin{equation}
\mathbb E_{\structrv} \loss{\structrv} =
\mathbb E_{\tracerv} \loss{\structrv} = 
\mathbb E_{\exprv} \loss{\structrv}.
\end{equation}
Moreover, the gradient $\nabla_\expparam \mathbb E_{\exprv} \loss{\structrv}$ exists and is equal to
\begin{equation}
\label{eq:log-derivative-trick}
\mathbb E_{\exprv} \loss{\structrv} \nabla_\expparam \logprob{\exprv; \expparam}.
\end{equation}
The above follows from the Leibniz integral rule applied to the expectation of a piece-wise constant function $\loss{\structrv(\tracerv(\cdot))}$ of the exponential random variables. The existence of the gradient also implies
\begin{equation}
\nabla_\expparam
\mathbb E_{\structrv} \loss{\structrv} = 
\nabla_\expparam
\mathbb E_{\tracerv} \loss{\structrv} =
\nabla_\expparam
\mathbb E_{\exprv} \loss{\structrv}.
\end{equation}

The second expectation $\mathbb E_{\tracerv} \loss{\structrv}$ is the finite weighted sum
\begin{equation}
\sum_{t} \prob{\tracerv = t; \expparam} \loss{X(t)}
\end{equation}
with $\prob{\tracerv = t; \expparam} = \int \prob{\exprv = e; \expparam} \mathbb 1(\tracerv(e) = t) \mathrm d e$. Again, the Leibniz integral rule implies that each weight $\prob{\tracerv = t; \expparam}$ is differentiable w.r.t. $\expparam$. Similar to Eq.~\ref{eq:log-derivative-trick}, we obtain
\begin{equation}
\nabla_\expparam \mathbb E_{\tracerv} \loss{\structrv} = \mathbb E_{\tracerv} \loss{\structrv} \nabla_\expparam \logprob{\tracerv; \expparam}
\end{equation}
using the log-derivative trick.
\end{proof}
\subsection{On the Variance of the Score-Function Estimators}
The lemma below explicitly specifies the conditions on the baseline $Y$ that ensure that the first and the second moments of the estimators are finite.
\begin{lemma} Define $\hat\exprv, \hat\tracerv, \hat\structrv$ samples from the corresponding distributions from Lemma~\ref{lemma:unbiased-estimates}. Then for a random variable $Y \perp \exprv$ \textbf{with $\mathbb E Y < \infty$ and $\mathbb E Y^2 < \infty$} and the corresponding sample $\hat Y$, the gradient estimator
\begin{equation}
g_\tracerv = \left( \loss{\hat\structrv} - \hat Y\right) \nabla_\expparam \logprob{\hat\tracerv; \expparam}
\end{equation}
and the gradient estimator
\begin{equation}
g_\exprv = \left( \loss{\hat\structrv} - \hat Y\right) \nabla_\expparam \logprob{\hat\exprv; \expparam},
\end{equation} we have
\begin{equation}
\operatorname{var}{g_\tracerv} \leq \operatorname{var}{g_\exprv}.
\end{equation}
\end{lemma}
\begin{proof}
We first note that both both $g_\tracerv$ and $g_\exprv$ are unbiased estimators of $\nabla_\expparam \mathbb E_{\structrv} \loss{\structrv}$, therefore 
\begin{equation}
\operatorname{var}{g_\tracerv} - \operatorname{var}{g_\exprv} = \mathbb E_{\tracerv, Y} g_\tracerv^2 -  \mathbb E_{\exprv, Y} g_\exprv^2.
\end{equation}
Indeed, the random baseline $Y$ does not introduce bias
\begin{align}
\mathbb E Y \nabla_\expparam \logprob{\tracerv; \expparam} = \mathbb E Y \mathbb E \nabla_\expparam \logprob{\tracerv; \expparam} = 0
\end{align}
because the expected value of the score function is zero.

Next we rewrite the second moment of $g_\exprv$ in terms of the second moment of $g_\tracerv$ and a non-negative term. We note that $\prob{\exprv, \tracerv; \expparam} = \prob{\exprv; \expparam} \mathbb 1(\tracerv = \tracerv(\exprv))$, therefore
\begin{align}
& \nabla_\expparam \logprob{\exprv; \expparam} = \nabla_\expparam \logprob{\exprv, \tracerv; \expparam} = \\
& \nabla_\expparam \logprob{\tracerv; \expparam} + \nabla_\expparam \logprob{\exprv \mid \tracerv; \expparam}.
\end{align}
Using the above equation, we rewrite $\mathbb E_{\exprv, Y} g_\exprv^2$ as
\begin{align}
\mathbb E_{\exprv, Y} \big(&(\loss{\structrv} - Y) \\
& (\nabla_\expparam \logprob{\tracerv; \expparam} + \nabla_\expparam \logprob{\exprv \mid \tracerv; \expparam})\big)^2
\end{align}
Expanding the squared sum, we obtain three terms. The first term is
\begin{equation}
\mathbb E_{\exprv, Y} \left((\loss{\structrv} - Y) \nabla_\lambda \logprob{\tracerv; \lambda} \right)^2 = \mathbb E_{\tracerv, Y} g_\tracerv^2.
\end{equation}
The second term is zero. Indeed, after marginalizing w.r.t. $\exprv$ one of the multipliers becomes zero
\begin{align}
\mathbb E_{\exprv, Y} \nabla_\expparam \logprob{\exprv \mid \tracerv; \expparam} = \\
\mathbb E_{\tracerv, Y} \mathbb E_{\exprv \mid \tracerv} \nabla_\expparam \logprob{\exprv \mid \tracerv; \expparam}
\end{align}
as the expected value of the score function $\mathbb E_{\exprv \mid \tracerv} \nabla_\expparam \logprob{\exprv \mid \tracerv; \expparam} = 0$. Therefore the whole term is zero (the marginalization w.r.t. $\exprv \mid \tracerv$ is possible because $\structrv = \structrv(\tracerv)$ and $\exprv \mid \tracerv$ are independent).
Finally, the third term is non-negative
\begin{equation}
\mathbb E_{\exprv, Y} \left( (\loss{\structrv} - Y ) \nabla_\exprv \logprob{\tracerv \mid \exprv; \expparam } \right)^2 \geq 0.
\end{equation}
Combining the three terms, we obtain
$\mathbb E_{\tracerv, Y} g_\tracerv^2 \leq  \mathbb E_{\exprv, Y} g_\exprv^2$,
therefore
$ \operatorname{var}{g_\tracerv} \leq \operatorname{var}{g_\exprv}.$
\end{proof}
\subsection{Distributions of $\tracerv$ and $\exprv \mid \tracerv$}
Below we prove the main claim of this work. Again, Algorithm~\ref{alg:general} assumes that 
\begin{itemize}
\item the output of $f_{\text{split}}$ is a sequence of disjoint sets, i.e. $K_i \cap K_j = \emptyset$, and $\cup K_i = \dom \exprv$;
\item the index map $K'$ returned by $f_{\text{map}}$ is an injective map and $\operatorname{Im}K'$ is a proper subset of $\dom \exprv$.
\end{itemize}
The two assumptions above are crucial for the correctness of the algorithm.
\begin{theorem}
Let~$\exprv$ be a set of exponential random variables~$\exprv(k) \sim \operatorname{Exp}{(\expparam(k))}$ indexed by~$k \in K$, let $\tracerv$ be the trace and $\structrv$ be the output of Algorithm~\ref{alg:general} respectively. Then
\begin{enumerate}
\item The output $\structrv$ is a function of trace $X( T, E) = X(T)$;
\item The trace~$\tracerv$ is a sequence of categorical latent variables. The outcome probabilities are either deterministic or proportional to $\expparam(k)$ for $k \in M \subset K$;
\item The elements of the conditional distribution $\exprv \mid \tracerv$ are distributed as a sum of exponential random variables
\begin{equation}
E(k) \mid T \overset{d}{=} \sum_{j \in M_k} \operatorname{Exp}{(\sum_{k' \in N_{k,j}} \expparam(k'))},
\end{equation}
where the subsets of keys $N_{k, j} \subset \dom \exprv$ and indices $M_k$ are determined by Algorithm~\ref{alg:general}  
\end{enumerate}
\end{theorem}
\begin{proof}
First, we show the recursion depth of Algorithm~\ref{alg:general} is limited and the algorithm halts if $f_{\text{split}}, f_{\text{map}}$ and $f_{\text{combine}}$ always halt.

Indeed, by construction $K'$ is injective and $\operatorname{Im} K'$ is a proper subset of $\operatorname{Dom} E$. For the next recursive call the domain size $|\dom \exprv' \circ K'|$ is strictly smaller than the domain $|\dom E|$. Therefore, at some point the Algorithm~\ref{alg:general} will meet the stop condition $\dom \exprv = \emptyset$.

Next, we will prove the three claims of the theorem by induction on the recursion depth. All the claims are trivial when $\dom \exprv = \emptyset$.

For an arbitrary recursion depth, the output of the algorithm is the output of $f_{\text{combine}}$. The induction hypothesis implies that the first argument~$\structrv'$ is a function of the trace~$\tracerv$. The second and the third arguments are hyper-parameters that do not change as the values of $\exprv$ change. The last argument is the trace. Therefore, $\structrv$ is a function of the trace~$\tracerv$ (modulo the omitted hyper-parameters $\dom \exprv$ and $J$).

To prove the second and the third claim, we repeat the derivation of Lemma~\ref{lemma:exp-min-trick} for the joint density of $\exprv$ and $\tracerv$. We denote the realizations of $\tracerv$ and $\exprv$ with lower-case letters. Recall that $\tracerv = \{T_i^j\}_{i,j}$ is the concatenation of trace variables $T_1^j, \dots, T_m^j$ for all recursion depths $j=1, \dots, k$. We regroup the joint density
\begin{equation}
\mathbb 1(\tracereal = \tracerv(\expreal)) \prod_{k \in \dom \exprv} \expparam(k) e^{-\expparam(k) \expreal(k)}
\end{equation}
using the disjoint subsets $K_i, \dom \exprv = \sqcup K_i$
\begin{align}
& \mathbb 1(\tracereal^{>1} = \tracerv^{>1}(\expreal)) \times \\
& \prod_{i=1}^m \mathbb \prod_{k \in K_i} \expparam(k) e^{-\expparam(k) \expreal(k)} \mathbb 1(\expreal(\tracereal^1_i) \leq \expreal(k)).
\end{align}
Then we apply Lemma~\ref{lemma:exp-min-trick} for $i=1,\dots,m$ and rewrite the product $\prod_{k\in K_i}$ as
\begin{align}
\label{eq:exp-min-trick-thm}
\frac{\expparam(\tracereal^1_i)}{\sum_{k \in K_i} \expparam(k)} \left( \sum_{k \in K_i} \expparam(k) \right) e^{-(\sum_{k \in K_i} \expparam(k)) \expreal(\tracereal^1_i)} \times \\
\prod_{k \in K_i } \left( \expparam'(k) e^{-\expparam'(k) \expreal'(k)} \mathbb 1(0 \leq \expreal'(k)) \right),
\end{align}
where $\expreal'(k)$ is the realization of the random variable $\exprv'(k) := \exprv(k) - \exprv(\tracerv^1_i)$ and $\expparam'(k) := \expparam(k)$ for $k \neq \tracereal_i$ and $\expparam(\tracereal^1_i) := +\infty$\footnote{As a convention, $0 \overset{d}{=} \operatorname{Exp}{(+\infty)}$ and $\expparam e^{-\expparam x} = \delta(x)$ for $\expparam = +\infty$}. 

The recursive call applies to a subset of $\exprv'$ indexed with $\operatorname{Im}{K'}$. By the induction hypothesis, $\{T^j_i\}$ for $j > 1$ is a set of categorical variables with outcome probabilities proportional to $\expparam'$. The equation~\ref{eq:exp-min-trick-thm} implies that for $j=1$ the trace variables are also categorical with outcome probabilities proportional to $\expparam$. This proves the second claim of the theorem.

Finally, by the induction hypothesis, $\exprv'(k)$ with $k \in \operatorname{Im}{K'}$ is distributed as a sum of exponential random variables $\exprv'(k) \mid \tracerv \overset{d}{=} \sum_{j \in M'_k} \operatorname{Exp}{(\sum_{k' \in N'_{k, j}} \expparam'(k'))}.$ At the same time, the other variables~$\exprv'(k)$ with $k \in \dom{E} \setminus \operatorname{Im}{K'}$ are mutually independent with $\exprv'(k'), k' \in \operatorname{Im}{K'}$ and $\exprv'(k) \mid \tracerv^1_1, \dots, \tracerv^1_m \overset{d}{=} \operatorname{Exp}{\expparam'(k)}$.

Since $\exprv'(k) = \exprv(k) - \exprv(\tracerv^1_i)$, for $k \in K_i \cap \operatorname{Im}{K'}$ we have
\begin{align}
& \exprv(k) \mid \tracerv \overset{d}{=} \\
& \operatorname{Exp}{(\sum_{k \in K_i} \expparam(k))} + \sum_{j \in M'_k} \operatorname{Exp}{(\sum_{k' \in N'_{k, j}} \expparam'{(k')})}
\end{align}
and for $k \in K_i \cap \dom \exprv \setminus \operatorname{Im}{K'}$ we have
\begin{equation}
\exprv(k) \mid \tracerv \overset{d}{=} \operatorname{Exp}{(\sum_{k \in K_i} \expparam(k))} + \operatorname{Exp}{\expparam'(k)}.
\end{equation}
We replace $\expparam'(k)$ with $\expparam(k)$ for $k \notin \{\tracerv^1_1, \dots, \tracerv^1_m\}$ and with $+\infty$ for $k \in \{\tracerv^1_1, \dots, \tracerv^1_m\}$ and obtain the third claim of the theorem.
\end{proof}
\section{Algorithms}
\label{sec:algorithms}
We provide pseudo-code for computing $\logprob{\tracerv; \expparam}$ in Algorithm~\ref{alg:log-prob} and sampling $\exprv \mid \tracerv$ in Algorithm~\ref{alg:conditional}. Both algorithms modify Algorithm~\ref{alg:general} and use the same subroutines $f_{\text{split}}, f_{\text{map}}$ and $f_{\text{combine}}$. Algorithms~\ref{alg:log-prob}, \ref{alg:conditional} follow the structure as Algorithm~\ref{alg:general} and have at most linear overhead in time and memory for processing variables such as $\expparam'$ and $\logprob{T^j_i \mid \expparam}$.

The indexed set of exponential random variables $\exprv$ and the indexed set of the random variable parameters $\expparam$ have the same indices. Therefore, we replace $\dom \exprv$ with $\dom \expparam$. 

Both algorithms take the trace variable $\tracerv = \{T^j_i\}_{j,i}$ as input. Note that index $j$ enumerate recursion levels. Both algorithms process trace of the top recursion level $\tracerv^1_1, \dots, \tracerv^1_m$ and make a recursive call to process the subsequent trace~$\tracerv^{>1} := \{T^j_i\}_{i, j > 1}$. 
\begin{algorithm}[t]
\caption{$F_{\text{struct}}(\exprv, K, R)$ - returns structured variable $\structrv$ based on utilities $\exprv$ and auxiliary variables $K$ and $R$}
\label{alg:general}
\begin{algorithmic}
    \REQUIRE $\exprv, K, R$
    \ENSURE $\structrv$
    \IF{$f_{\text{stop}}(K, R)$}
    \STATE {\bf return}
    \ENDIF
    \STATE $P_1, \dots, P_m \Leftarrow f_{\text{split}}(K, R)$ \hfill \COMMENT{$\sqcup_{i=1}^m P_i = K$}
    \FOR{$i=1$ to $m$}
    \STATE $\tracerv_i \Leftarrow \arg\min_{k \in P_i} \exprv_k$
    \FOR{$k \in P_i$}
    \STATE $\exprv_k' \Leftarrow \exprv_k - \exprv_{\tracerv_i}$
    \ENDFOR
    \ENDFOR
    \STATE $K', R' \Leftarrow f_{\text{map}}(K, R, \{\tracerv_i\}_{i=1}^m)$ \hfill \COMMENT{$K' \subsetneq K$}
    \STATE $E' \Leftarrow \{E_k' \mid k \in K'\}$
    \STATE $\structrv' \Leftarrow F_{\text{struct}}(E', K', R')$ \hfill \COMMENT{Recursive call}
    \STATE {\bf return} $f_{\text{combine}}(\structrv', K, R, \{\tracerv_i\}_{i=1}^m)$
\end{algorithmic}
\end{algorithm}

\begin{algorithm}[t]
\caption{$F_{\text{log-prob}}(\tracerv, \expparam, K, R)$ - returns $\logprob{\tracerv; \expparam}$ for trace $\tracerv = \{\tracerv^j_i\}_{i,j}$, rates $\expparam$ and auxiliary variables $K, R$}
\label{alg:log-prob}
\begin{algorithmic}
    \REQUIRE $\tracerv, \expparam, K, R$
    \ENSURE $\logprob{\tracerv; \expparam}$
    \IF{$f_{\text{stop}}(K, R)$}
    \STATE {\bf return}
    \ENDIF
    \STATE $P_1, \dots, P_m \Leftarrow f_{\text{split}}(K, R)$ 
    \FOR{$i=1$ to $m$}
    \STATE \COMMENT{The superscript $j$ in $T^j_i$ denotes the recursion level}
    \STATE $\logprob{\tracerv^1_i; \expparam} \Leftarrow \log \expparam_{\tracerv^1_i} - \log \left(\sum_{k \in P_i} \expparam_k \right)$
    \FOR{$k \in P_i \setminus \{\tracerv^1_i\}$ }
    \STATE $\expparam'_{k} \Leftarrow \expparam_k$
    \ENDFOR
    \STATE $\expparam'_{\tracerv^1_i} \Leftarrow +\infty$ \hfill \COMMENT{Because $\exprv'(\tracerv^1_i) = 0$}
    \ENDFOR
    \STATE $K', R' \Leftarrow f_{\text{map}}(K, R, \{\tracerv^1_i\}_{i=1}^m)$
    \STATE $\expparam' \Leftarrow \{ \expparam_k' \mid k \in K'\}$
    \STATE \COMMENT{Recurse to compute log-prob for $\tracerv^{>1} := \{T^j_i\}_{j > 1}$}
    \STATE $\logprob{\tracerv^{>1} \mid \tracerv^{1}; \expparam} \Leftarrow F_{\text{log-prob}}(\tracerv^{>1}, \expparam', K', R')$
    \STATE {\bf return} $ \sum_{i=1}^m \logprob{\tracerv^1_i; \expparam} + \logprob{\tracerv^{>1} \mid \tracerv^1 ; \expparam}$
\end{algorithmic}
\end{algorithm}

\begin{algorithm}[t]
\caption{$F_{\text{cond}}(\tracerv, \expparam, K, R)$ - returns a utility sample $\exprv \mid \tracerv, \expparam$ with rates $\expparam$ conditioned on the trace $\tracerv = \{ T^j_i \}_{ij}$}
\label{alg:conditional}
\begin{algorithmic}
    \REQUIRE $\tracerv, \expparam, K, R$
    \ENSURE $\exprv$
    \IF{$f_{\text{stop}}(K, R)$}
    \STATE {\bf return}
    \ENDIF
    \STATE $P_1, \dots, P_m \Leftarrow f_{\text{split}}(K, R)$ 
    \FOR{$i=1$ to $m$}
    \STATE $\exprv_{\tracerv^1_i} \sim \operatorname{Exp}{(\sum_{k \in P_i} \expparam_k)}$ \hfill \COMMENT{Sample the $\min$}
    \FOR{$k \in P_i \setminus \{\tracerv^1_i\}$ }
    \STATE $\expparam'_{k} \Leftarrow \expparam_k$
    \ENDFOR
    \STATE $\expparam'_{\tracerv^1_i} \Leftarrow +\infty$ \hfill \COMMENT{Because $\exprv'(\tracerv^1_i) = 0$}
    \ENDFOR
    \STATE $K', R' \Leftarrow f_{\text{map}}(K, R, \{\tracerv^1_i\}_{i=1}^m)$
    \STATE $\expparam' \Leftarrow \{\expparam_k \mid k \in K'\}$
    \STATE $\exprv' \Leftarrow F_{\text{cond}}(\tracerv^{>1}, \expparam', K', R')$ \hfill \COMMENT{Recursive call}
    \STATE \COMMENT{Sample the rest of the utilities}
    \FOR{$k \in K \setminus K'$}
    \STATE $\exprv_k' \sim \operatorname{Exp}{(\expparam_k')}$ 
    \ENDFOR
    \STATE \COMMENT{Reverse the exponential-min trick}
    \FOR{$i=1$ to $m$}
    \FOR{$k \in P_i \setminus \{\tracerv^1_i\}$}
    \STATE $\exprv_k \Leftarrow \exprv_k' + \exprv_{\tracerv^1_i}$     \ENDFOR
    \ENDFOR
    \STATE {\bf return} $\exprv$
\end{algorithmic}
\end{algorithm}
\section{Implementation Details}
\label{sec:details}
In the paper, we chose the exponential random variables and the recursive form of Algorithm~\ref{alg:general} to simplify the notation.

In practice, we parameterized the rate of the exponential distributions as $\expparam = \exp(-\gumbparam)$, where $\theta$ was either a parameter or an output of a neural network. The parameter $\theta$ is essentially the location parameter of the Gumbel distribution and, unlike $\expparam$, can take any value in $\mathbb R$.

Additionally, the recursive form of Algorithm~\ref{alg:general} does not facilitate parallel batch computation. In particular, the recursion depth and the decrease in size of $\exprv$ may be different for different objects in the batch. Therefore, Algorithms~\ref{alg:log-prob},\ref{alg:conditional} may require further optimization.

For the top-k algorithm, we implemented the parallel batch version. To keep the input size the same, we masked the omitted random variables with $+\infty$. We modeled the recursion using an auxiliary tensor dimension.

For the Kruskal's algorithm, we implemented the parallel batch version and used the $+\infty$ masks to preserve the set size. We rewrote the recursion as a Python for loop.

To avoid the computation overhead for the Chu-Liu-Edmonds algorithm, we implemented the algorithms in C++ and processed the batch items one-by-one.

\section{Experimental Details}
\label{sec:experiments}
Setting up all the experiments we followed details about data generation, models and training procedures, described by \cite{paulus2020gradient}, to make a valid comparison of the proposed score function methods with Stochastic Softmax Tricks. In each experiment we fixed the number of function evaluations $N$ per iteration instead of batch size to make a more accurate comparison in terms of computational resources. With $N$ fixed, RELAX estimator was trained with batch size equal to $N$, while REINFORCE+ and G-REINFORCE+ were trained with batch size $N/K$ and $K$ samples of the latent structure for each object. Results reported in tables were obtained by taking the best model with respect to the validation task metric (MSE, ELBO, accuracy) and evaluating it on the test set. We applied the bootstrapping procedure, defined by \cite{paulus2020gradient}, over results of the trained models to obtain standard deviations of metrics.

\subsection{Top-K and Beer Advocate}

\textbf{Data.} As a base, we used the BeerAdvocate \citep{mcauley2012learning} dataset, which consists of beer reviews and ratings for different aspects: Aroma, Taste, Palate and Appearance. In particular, we took its decorrelated subset along with the pretrained embeddings from \cite{lei2016rationalizing}. Every review was cut to $350$ embeddings, aspect ratings were normalized to $[0,1]$.

\textbf{Model.} We used the Simple and Complex models defined by \cite{paulus2020gradient} for parameterizing the mask. The Simple model architecture consisted of Dropout (with $p = 0.1$) and a one-layered convolution with one kernel. In the Complex model architecture, two more convolutional layers with 100 filters and kernels of size 3 were added.

\textbf{Training.} We trained all models with $N=100$. We trained our models for 30 epochs, unlike 10 for the SST, since the score function methods generally have larger variance than the relaxation-based approaches. We used the same hyperparameters as in \cite{paulus2020gradient}, where it was possible. Hyperparameters for our training procedure were learning rate, final decay factor, weight decay, and number of latent samples for every example in batch. They were sampled from $\{1, 3, 5, 10, 30, 50, 100\} \times 10^{-4}, \{1, 10, 100, 1000\} \times 10^{-4}, \{0, 1, 10, 100\} \times 10^{-6}, \{1, 2, 4\} $ respectively.

Results for Appearance aspect can be found in Table~\ref{table:app}, for Taste aspect in Table~\ref{table:tas}, for Palate aspect in Table~\ref{table:pal}. In conclusion, we can state that the proposed method is comparable with SST on BeerAdvocate dataset.


\subsection{Spanning Tree and Graph Layout}

\textbf{Data.} For each dataset entry we obtained the corresponding ground truth spanning tree by sampling a fully-connected graph on 10 nodes and applying Kruskal algorithm. Graph weights were sampled independently from $\text{Gumbel}(0, 1)$ distribution. Initial vertex locations in $\mathbb{R}^2$ were distributed according to $N(0, I)$. Given the spanning tree and initial vertex locations, we applied the force-directed algorithm \cite{fruchterman1991graph} for $T=10$ or $T=20$ iterations to obtain system dynamics. We dropped starting positions and represented each dataset entry as the obtained sequence of $T=10$ or $T=20$ observations. We generated 50000 examples for the training set and 10000 examples for the validation and test sets.

\textbf{Model.} Following \cite{paulus2020gradient} we used the NRI model with encoder and decoder architectures analogous to the MLP encoder and MLP decoder defined by \cite{kipf2018neural}.

Given the observation of dynamics, GNN encoder passed messages over the fully connected graph. Denoting its final edge representation by $\theta$, we obtained parameters of the distribution over undirected graphs as $\frac{1}{2}(\theta_{ij} + \theta_{ji})$ for an edge $i \leftrightarrow j$. Hard samples of spanning trees
were then obtained by applying the Kruskal algorithm on the perturbed symmetrized $\theta$.

GNN decoder took observations from previous timesteps and the adjacency matrix $X$ of the obtained spanning tree as its input. It passed messages over the latent tree aiming at predicting future locations of the vertices. We used two separate networks to send messages over two different connection types ($X_{ij} = 0$ and $X_{ij} = 1$). Since parameterization of the model was ambiguous in terms of choosing the correct graph between $X$ and $1 - X$, we measured structure metrics with respect to both representations and reported them for the graph with higher edge precision.

In experiments with RELAX we needed to define a critic. It was a simple neural network defined as an MLP which took observations concatenated with the perturbed weights and output a scalar. It had one hidden layer and ReLU activations.

\textbf{Objective.} During training we maximized ELBO (lower bound on the observations' log-probability) with gaussian log-likelihood and KL divergence measured in the continuous space of exponential noise. It resulted in an objective which was also a lower bound on ELBO with KL divergence measured with respect to the discrete distributions.

\textbf{Training.} We fixed the number of function evaluations per iteration at $N = 128$. We trained each of the score function models for 200000 iterations and evaluated them every epoch. SST models, which results were obtained from \cite{paulus2020gradient}, were trained for 50000 iterations. We found it important to run our proposed methods for a larger number of iterations since the score function methods generally have larger variance than the relaxation based approaches.

We saved the best model with respect to ELBO on validation. We used constant learning rate and Adam optimizer with default hyperparameters. For all estimators we tuned learning rate sampling it from $\{0.00005, 0.0001, 0.0003, 0.0005, 0.001\}$. Additionally, for REINFORCE+ we tuned $K$ in $\{2, 4, 6, 8, 16\}$ and for RELAX we tuned size of the critic hidden layer in $\{256, 512, 1024, 2056\}$.

\subsection{Arborescence and Unsupervised Parsing}

\textbf{Data.} We took the ListOps \cite{nangia2018listops} dataset, containing arithmetical prefix expressions, e.g.  $\texttt{min[3 med[3 5 4] 2]}$, as a base, and modified its sampling procedure. We considered only the examples of length in $[10, 50]$ that do not include the $\texttt{summod}$ operator and have bounded depth $d$. Depth was measured with respect to the ground truth parse tree,  defined as a directed graph with edges going from functions to their arguments. We generated equal number of examples for each $d$ in $\{1, \ldots, 5\}$. Train dataset contained 100000 samples, validation and test sets contained 20000 samples.

\textbf{Model.} Model mainly consisted of two parts which we call encoder and classifier.

Encoder was the pair of identical left-to-right LSTMs with one layer, hidden size 60 and dropout probability 0.1. Both LSTMs used the same embedding lookup table. Matrices that they produced by encoding the whole sequence were multiplied to get parameters of the distribution over latent graphs. Equivalently, parameter for the weight of the edge $i \rightarrow j$ was computed as $\theta_{ij} = \langle v_i, w_j \rangle$, where $v_i$ and $w_j$ are hidden vectors of the corresponding LSTMs at timesteps $i$ and $j$. Given $\theta \in \mathbb{R}^{n \times n}$, we sampled matrix weights from the corresponding factorized exponential distribution. Hard samples of latent arborescences, rooted at the first token, were obtained by applying Chu-Liu Edmonds algorithm to the weighted graph.

Classifier mainly consisted of the graph neural network which had the initial sequence embedding as an input and ran 5 message sending iterations over the sampled arborescence's adjacency matrix. It had its own embedding layer different from used in the encoder. GNN's architecture was based on the MLP decoder model by \cite{kipf2018neural}. It had a two-layered MLP and did not include the last MLP after message passing steps. Output of the GNN was the final embedding of the first token which was passed to the last MLP with one hidden layer. All MLPs included ReLU activations and dropout with probability 0.1.

In experiments with RELAX we needed to define a critic. It contained LSTM used for encoding of the initial sequence. It was left-to-right, had a single layer with hidden size 60 and dropout probability 0.1. It had its own embedding lookup table. LSTM's output corresponding to the last token of the input sequence was concatenated with a sample of the graph adjacency matrix and fed into the output MLP with one hidden layer of size 60 and ReLU activations. Before being passed to the MLP, weights of the adjacency matrix were centered and normalized.

\textbf{Training.} We fixed the number of function evaluations per iteration at $N = 100$. We used AdamW optimizer, separate for each part of the model: encoder, classifier and critic in case of RELAX. They all had constant, but not equal in general case, learning rates, and default hyperparameters. We tuned learning rates for all models and the number of latent samples for REINFORCE+ and G-REINFORCE+ by making 8 runs with their different combinations.

We trained G-REINFORCE+ for 600000, REINFORCE+ for 300000 and RELAX for 200000 iterations. SST models, which results were obtained from \cite{paulus2020gradient}, were trained for 50000 iterations. Table with results indicates that exponential score based models, even trained for a much more number of gradient updates, perform worse than their discrete score counterparts both in terms of task accuracy and structure-based metrics. We explain the necessity of training score function methods for a higher number of iterations than the relaxation based SST by a general empirical observation that they have larger variance.

\section{Kruskal Pseudo-Code}

\begin{algorithm}
\caption{$F_\text{Kruskal}(\exprv, K, R)$ - finds the minimum spanning tree given edges~$K$ with the corresponding weights~$\exprv$; Call with $R := \{ \{v\} \mid v \in V\}$ set as node singletons} 
\label{alg:kruskal}
\begin{algorithmic}
    \REQUIRE $\exprv, K, R$
    \ENSURE $\structrv$
    \IF{$|R| = 1$}
    \STATE {\bf return}
    \ENDIF
    \STATE $\tracerv \Leftarrow \arg\min_{k \in K} \exprv_k$ \hfill\COMMENT{Find the smallest edge}
    \FOR{$k \in K$}
    \STATE $\exprv_k' \Leftarrow \exprv_k - \exprv_\tracerv$
    \ENDFOR
    \STATE \COMMENT{For $\tracerv = (u, v)$ find $R_u, R_v \in R$ s.t. $u \in R_u$, $v \in R_v$}
    \STATE $R_u, R_v \Leftarrow \operatorname{find\_connected\_components}(R, T)$
    \STATE \COMMENT{Merge the connected components $R_u$ and $R_v$}
    \STATE $R' \Leftarrow (R \setminus \{R_u, R_v\}) \cup (\{R_u \cup R_v\})$
    \STATE \COMMENT{Remove edges connecting $R_u$ and $R_v$}
    \STATE $K' \Leftarrow K \setminus \{(u', v') \in K \mid u', v' \in R_u \cup R_v\}$
    \STATE $\exprv' \Leftarrow \{\exprv_k' \mid k \in K'\}$
    \STATE \COMMENT{Edges in $\structrv'$ form a spanning tree for nodes in $R'$}
    \STATE $\structrv' \Leftarrow F_\text{Kruskal}(\exprv', K', R')$
    \STATE \COMMENT{$X \cup \{\tracerv = (u, v)\}$ is a spanning tree for nodes in $R$}
    \STATE {\bf return} $\structrv' \cup \{\tracerv\}$
\end{algorithmic}
\end{algorithm}

\section{CLE Pseudo-Code}

\begin{algorithm}[t]
\caption{\Cyril{Modify caption}Find the minimum arborescence $\structrv$ with the root $r$ for a digraph with edge weights $\exprv$}
\caption{$F_{\text{CLE}}(E, K, R, r)$ - finds the minimum arborescence $\structrv$ of a directed graph with edges $K$ of weight $\exprv$ and root node $r$. Auxiliary variable $R$ is a partition of nodes indicating merged nodes, initialized with node singletons $R := \{ \{v \} \mid v \in V \}.$}
\label{alg:cle}
\begin{algorithmic}
    \REQUIRE $\exprv, K, R, r$
    \ENSURE $\structrv$
    \IF{$|R| = 1$}
    \STATE {\bf return}
    \ENDIF
    \STATE \COMMENT{Split $K$ into sets of edges ending at $R_v \in R$, $r \notin R_v$} 
    \STATE $P_1, \dots, P_m \Leftarrow f_{\text{split}}(K, R, r)$ \hfill \COMMENT{$m = |R| - 1$}
    \FOR{$i=1$ to $m$}
    \STATE $\tracerv_i \Leftarrow \arg\min_{k \in P_i} \exprv_k$
    \FOR{$k \in P_i$}
    \STATE $\exprv_k' \Leftarrow \exprv_k - \exprv_{\tracerv_i}$
    \ENDFOR
    \ENDFOR
    \STATE \COMMENT{Find a loop $C$ assuming nodes $R$ and edges $\{T_i\}_{i=1}^{m}$}
    \STATE $C \Leftarrow \operatorname{find\_loop}(R, \{T_i\}_{i=1}^m)$
    \IF{$C = \emptyset$}
    \STATE {\bf return} $\{T_i\}_{i=1}^m$ \hfill
    \ENDIF
    \STATE \COMMENT{Contract the loop nodes into a single node}
    \STATE $R' \Leftarrow (R \setminus \{C_i \mid C_i \in C\}) \cup (\cup_{i=1}^{|C|} C_i)$
    \STATE $K' \Leftarrow K \setminus \{(u, v) \in K \mid u \in C_i, v \in C_{j}\}$ 
    \STATE $\exprv' \Leftarrow \{\exprv_k' \mid k \in K' \}$
    \STATE \COMMENT{Find arborescence for the contracted graph}
    \STATE $\structrv' \Leftarrow F_{\text{CLE}}(\exprv', K', R', r)$
    \STATE \COMMENT{Add to $\structrv'$ all loop edges but one}
    \STATE $\structrv \Leftarrow \structrv' \cup \{T_i \mid T_i \text{ in cycle, preserves arborescence}\}$
    \STATE {\bf return} $\structrv$
\end{algorithmic}
\end{algorithm}

\bibliographystyle{unsrtnat}
\bibliography{references}